%% file: paper.tex
\documentclass{article}

\newcommand{\ourmaintitle}{Root Cause Analysis of\\Measurement and Mechanistic Anomalies}

\newcommand{\ourtitle}{\ourmaintitle}
\newcommand{\ourmethod}{\textsc{Cali}\xspace}
\newcommand{\ourmethodstar}{\textsc{Cali}*\xspace}
\newcommand{\ourmethodprime}{\textsc{Cali} \xspace}

\RequirePackage{amsmath}
\RequirePackage{amssymb}
\PassOptionsToPackage{numbers,sort&compress}{natbib}
\input{preamble}

\RequirePackage{mathtools}
\mathtoolsset{showonlyrefs}
\usepackage[preprint]{neurips_2026} 

\begin{document}
\setlength{\pdfpagewidth}{8.5in}
\setlength{\pdfpageheight}{11in}

\title{\ourtitle}
\author{Hendrik Suhr, David Kaltenpoth and Jilles Vreeken\\
	CISPA Helmholtz Center for Information Security\\ 
	\url{{hendrik.suhr,david.kaltenpoth,jv}@cispa.de}
}

	\maketitle

\begin{abstract}
\input{abstract}
\end{abstract}

\input{introduction}

\input{related_work}

\input{preliminaries}

\input{theory2.0}

\input{instantiation}

\input{experiments}

\input{conclusion}

\bibliography{bib/abbreviations,bib/bib-paper}
\bibliographystyle{abbrvnat}

\clearpage 
% \input{checklist}
%% apx
\ifapx
\appendix
\onecolumn
\include{appendix}
\fi

\end{document}

%% file: preamble.tex
\newif\ifapx
\apxtrue % uncomment if you want appendix
\newif\ifpaper 
\paperfalse
\usepackage[dvipsnames]{xcolor}
\usepackage{xspace}
\usepackage{diagbox} % put this in your preamble
\usepackage{bigints}
\usepackage{wrapfig}
\usepackage{algorithm}
\usepackage{algorithmic}
\usepackage{amsthm}
\usepackage{latexsym}
\usepackage{subcaption}
\usepackage{graphicx,centernot}

\usepackage[notheorems]{eda} % The EDA package, includes many default packages

\usepackage{footmisc}
\usepackage{pgfplotstable}
\usepackage{pgfplotstable}

\usepackage{dblfloatfix}
\usepackage{afterpage}
\usepackage{pgfplots}
\usepackage{pgfplotstable}
\usepackage{booktabs}
\usetikzlibrary{patterns}
\usepackage{tabularx, booktabs}
\usepackage{dsfont}
\pgfplotsset{compat=1.18}
\usepgfplotslibrary{groupplots}

\usepackage[pdftitle={\ourtitle}, pdfauthor={Double Blind}, hidelinks, hyperfootnotes = false]{hyperref}
\graphicspath{ {./figs/} }

\usetikzlibrary{arrows.meta, positioning,shapes.geometric, arrows}

\input{defines}
\usetikzlibrary{pgfplots.fillbetween}
\usepackage{amssymb}  % <- provides \lightning
\usepackage{fontawesome5}

\usetikzlibrary{patterns}

\usepackage[normalem]{ulem}

%% file: defines.tex
\definecolor{OIblue}{HTML}{0072B2}
\definecolor{OIvermillion}{HTML}{D55E00}

\makeatletter
\newcommand\resetstackedplots{
\makeatletter
\pgfplots@stacked@isfirstplottrue
\makeatother
\addplot [forget plot,draw=none] coordinates{AP};
}
\makeatother

\newcommand{\g}{\textcolor{OIvermillion}{\ensuremath{\mathbf g}}\xspace}
\newcommand{\m}{\textcolor{OIblue}{\ensuremath{\mathbf m}}\xspace}
\newcommand{\Xt}{\ensuremath{\mathbf X^\ast}}

\newcommand{\Xtj}{\ensuremath{X^\ast_j}}

\newcommand{\Xb}{\ensuremath{\mathbf{X}}}

\newcommand{\xb}{\ensuremath{\mathbf{x}}}
\newcommand{\Gc}{{\ensuremath{\mathcal{G}}}}

\newcommand{\Gci}{{\ensuremath{\mathcal{G}^{(i)}}}}
\newcommand{\xbi}{\ensuremath{\mathbf{x}^{(i)}}}
\newcommand{\Vb}{\ensuremath{\mathbf V}}

\newcommand{\Meas}{\ensuremath{W}}
\newcommand{\Gen}{\ensuremath{Z}}
\newcommand{\Measb}{\ensuremath{\mathbf W}}
\newcommand{\Genb}{\ensuremath{\mathbf Z}}

\newcommand{\meas}{\ensuremath{w}}
\newcommand{\gen}{\ensuremath{z}}
\newcommand{\measb}{\ensuremath{\mathbf \meas}}
\newcommand{\genb}{\ensuremath{\mathbf \gen}}

\newcommand{\ass}{\ensuremath{\mathbf a}}

\newcommand{\MeasRV}{\ensuremath{{O^\mathit{meas}}}}
\newcommand{\GenRV}{{\ensuremath{O^\mathit{mech}}}}
\newcommand{\MeasRVj}{\ensuremath{{O_j^\mathit{meas}}}}
\newcommand{\GenRVj}{{\ensuremath{O_j^\mathit{mech}}}}

\newcommand{\MeasPj}{\ensuremath{{\pi_j^\mathit{meas}}}}
\newcommand{\GenPj}{\ensuremath{{\pi_j^\mathit{mech}}}}
\newcommand{\pib}{\ensuremath{\boldsymbol \pi}}
\newcommand{\qb}{\ensuremath{\boldsymbol q}}

\DeclareMathOperator{\Pa}{Pa}
\DeclareMathOperator{\MB}{MB}
\DeclareMathOperator{\pa}{pa}
\newcommand{\Paj}{\Pa_j}

\newcommand{\Mc}{{\ensuremath{\mathcal{\Meas}}}}
\newcommand{\Genc}{{\ensuremath{\mathcal{\Gen}}}}

\newcommand{\MOE}{\textsc{M2oe}\xspace}
\newcommand{\DICE}{\textsc{Dice}\xspace}

\newcommand{\SMOOTHTRAVERSAL}{\textsc{SmTr}\xspace}
\newcommand{\SMOOTHTRAVERSALSTAR}{\textsc{SmTr}*\xspace}
\newcommand{\SCOREORDER}{\textsc{ScOr}\xspace}
\newcommand{\MARGINAL}{\textsc{Marg}\xspace}
\newcommand{\MARGINALSTAR}{\textsc{Marg}*\xspace}

\newcommand{\SHAP}{\textsc{Shap}\xspace}
\newcommand{\LIME}{\textsc{Lime}\xspace}
\newcommand{\NOTEARSMLP}{\textsc{NoTears-mlp}\xspace}
\newcommand{\TOPIC}{\textsc{Topic}\xspace}
\newcommand{\CAM}{\textsc{Cam}\xspace}
\newcommand{\NOGAM}{\textsc{NoGAM}\xspace}
\newcommand{\SCORE}{\textsc{Score}\xspace}
\newcommand{\OPTCLASS}{\textsc{Opt}*\xspace}

\newcommand{\COPOD}{\textsc{COPOD}\xspace}
\newcommand{\IFOREST}{\textsc{iForest}\xspace}
\newcommand{\DEEPSVDD}{\textsc{DEEPSVDD}\xspace}
\newcommand{\LOF}{\textsc{LOF}\xspace}
\newcommand{\NORMALFLOW}{\textsc{NF}\xspace}

\newcommand{\AUTOENCODER}{\textsc{AE}\xspace}
\newcommand{\AESHAP}{\textsc{AE-Shap}\xspace}
\newcommand{\AEREC}{\textsc{AE-REC}\xspace}

\newcommand{\SCM}{\textsc{SCM}*\xspace}

\theoremstyle{plain}
\newtheorem{theorem}{Theorem}[section]
\newtheorem*{theorem*}{Theorem} % for unnumbered restatement

\newtheorem{lemma}[theorem]{Lemma}

\theoremstyle{definition}
\newtheorem{definition}[theorem]{Definition}

\theoremstyle{remark}
\newtheorem{remark}[theorem]{Remark}
\theoremstyle{definition}

\definecolor{OurColor}{RGB}{0,180,0}        % bright green (keep)
\definecolor{OurCAMColor}{RGB}{0,100,0}     % dark green (keep)

\definecolor{AEColor}{RGB}{230,25,75}       % vivid raspberry red
\definecolor{NFColor}{RGB}{245,130,48}      % bright tangerine
\definecolor{LOFColor}{RGB}{255,225,25}     % sunshine yellow
\definecolor{IsoColor}{RGB}{60,180,220}     % sky blue
\definecolor{DeepSVDDColor}{RGB}{145,30,180} % purple
\definecolor{CopodColor}{RGB}{255,190,200}  % soft pink

\definecolor{SOColor}{RGB}{240,50,230}      % magenta
\definecolor{DICEColor}{RGB}{0,130,200}     % teal blue
\definecolor{MOEColor}{RGB}{230,190,255}    % lavender
\definecolor{STColor}{RGB}{245,130,48}     % sunshine yellow
\definecolor{SCMColor}{RGB}{255,225,25}      % hot pink

\definecolor{MarginalColor}{RGB}{245,130,48}     % bright tangerine
\definecolor{MarginalLearnColor}{RGB}{230,25,75} % vivid raspberry red

\definecolor{CAMColor}{RGB}{145,30,180}      % purple
\definecolor{SCOREColor}{RGB}{60,180,220}    % sky blue
\definecolor{NOTEARSColor}{RGB}{255,105,180} % hot pink
\definecolor{NOGAMColor}{RGB}{255,190,200}   % soft pink
\definecolor{TOPICColor}{RGB}{255,225,25}    % sunshine yellow

\DeclareMathOperator*{\argmax}{arg\, max}

%% file: abstract.tex
Root cause analysis of anomalies aims to identify how and why a sample deviates from the normal process. 
Existing methods primarily focus on telling which features are responsible, ignoring that anomalies can arise through two fundamentally different processes: measurement errors, where the sample is generated normally but one or more values is recorded incorrectly, and mechanism shifts, where the causal process that generated the sample was changed. While measurement errors can often be safely corrected, mechanistic anomalies require careful consideration. In this paper, we  formally define a causal model that explicitly captures both types by treating outliers as latent interventions on latent (“true”) and observed (“measured”) variables and show under which conditions the distinction is possible. Based on this model, we develop an efficient inference procedure for localizing root causes and distinguishing anomaly types. Experiments on synthetic and real-world data show that our method provides state-of-the-art and highly robust performance in both root cause localization and classification of anomaly types.

%% file: introduction.tex
\section{Introduction}
\label{sec:intro}

\input{figs/intro-plot.tex}

Real-world datasets are often noisy and contain anomalies---samples generated by processes that deviate from the norm. 
These anomalies can compromise model reliability, lead to biased conclusions, or misguide scientific and engineering insights. Importantly, not all anomalies are alike: some result from measurement or data collection errors, such as sensor malfunctions or misrecorded values, while others reflect genuine deviations in the underlying causal mechanisms, indicating that the process itself has changed. 
We call these fundamental types of anomalies \emph{measurement anomalies} and \emph{mechanistic anomalies}. In many applications, it is important to tell these two apart: an anomalous pressure reading in an industrial plant may indicate a faulty sensor, which calls for recalibration or replacement, or a genuine process change, which may require intervention in the system itself.

We give an example in Fig.~\ref{fig:intro}. In the left plot, points \g and \m stand out. For neither point is it immediately clear if this is due to a measurement error or change in the mechanism generating the point. We can gain more insight by examining additional causal relationships. The middle plot shows that neither point stands out for $X \rightarrow W$. The right-hand plot, however, reveals that point \m violates both $X \to Y$ and $Y \to Z$ in a way that is consistent with having measured too low a value for $Y$. In contrast, as the red point (\g) does follow the regular relationships between $X \to W$ and $Y \to Z$, this indicates this anomaly is most likely generated by a mechanism shift between $X \to Y$.

Explainable anomaly detection (XAD) typically relies on feature-attribution methods that explain an anomaly score with respect to a trained model, quantifying which inputs the model associates most strongly with anomalousness~\citep{ribeiro:2016:lime,lundberg:2017:shap}. Root cause analysis (RCA) of anomalies instead seeks to identify the variables that actually cause the deviation in the underlying data-generating mechanism, by incorporating causal assumptions and reasoning~\citep[]{budhathoki:2022:rootcause,han:2023:rootclam,orchard:2025:rootcause,nguyen:2024:noisyedge}.

However, existing methods have thus far blurred the distinction between measurement and mechanistic outliers, despite its practical importance. 
To bridge this gap, we propose a causal formalization of measurement and mechanistic anomalies. Our framework allows us to not only identify which variables caused the anomaly, but in addition, also determine whether an anomaly arises from a measurement error or a mechanism change.

Specifically, we interpret mechanistic and measurement anomalies as different interventions on the underlying causal graph $\Gc$. The key difference from existing RCA methods is that we consider interventions on latent ("real") and observed ("measured") variables, respectively. Similar to existing RCA methods \citep{orchard:2025:rootcause}, we leverage the Independence of Causal Mechanisms~\citep[][]{schoelkopf:2012:causal,peters:2017:elements}, and in particular the \textit{Sparse Mechanism Shift} hypothesis (SMS)~\citep[][]{schoelkopf:2021:towardsclr,perry:2022:sms}. This posits that explanations that require \emph{multiple} mechanisms to fail \emph{jointly} are less likely. In Fig.~\ref{fig:intro}, this implies that interpreting \m\ as a measurement outlier (one failure) is more plausible than concerted mechanism changes in both $X \to Y$ and $Y \to Z$.

Since clean training data is often unavailable in practice, we depart from existing methods and consider the unsupervised in-sample setting, where anomalies must be identified directly within the given dataset. Our contributions are threefold: (i) We formalize mechanistic and measurement anomalies within a causal framework. (ii) We characterize identifiability of the reduced intervention probabilities. (iii) We develop a practical algorithm that achieves state-of-the-art performance on synthetic and real-world datasets, and discovers meaningful anomaly types in a case study.

%% file: related_work.tex
\section{Related Work}
\label{sec:related}

We situate our work within the literature on causal reasoning for outlier analysis; an overview of unsupervised outlier detection and feature attribution is provided in Appx.~\ref{apx:related_work}.

Root Cause Analysis (RCA) aims to identify the causal factors that most directly explain a target outcome of interest, ranging from treatment effects to system failures \citep{li:2022:rca,strobl:2023:samplerca,strobl:2023:hetero,strobl:2024:counterfactual,pham:2024:baro,li:2025:rca,ikram:2025:rca}.

Recent work has instantiated RCA for anomaly detection. \citet{budhathoki:2022:rootcause} assume a known structural causal model (SCM) and attribute feature-level contributions using Shapley values, which was adopted in subsequent RCA work \citep{strobl:2023:samplerca,strobl:2023:hetero}. \citet{han:2023:rootclam} propose a causal VAE for anomaly localization. \citet{orchard:2025:rootcause} assume a polytree-structured DAG and identify root causes via marginal outlier scores. However, these approaches primarily identify \emph{where} anomalies originate, without explaining \emph{how} they occur. We address this gap by extending the causal framework to jointly model the two principal categories of measurement and mechanistic outliers. 

In the time series literature, heuristic distinctions between “sensor” and “process” anomalies have been proposed \citep{salem:2013:sensor,haque:2015:sensor,bachechi:2022:detection,larosa:2022:separating}. These methods rely on correlation-based criteria or domain knowledge rather than an explicit causal formalization. The closest such approach distinguishes “cyber” and “measurement” anomalies using Granger causality \citep{sun:2025:unifying}, which captures predictive temporal dependencies rather than interventional causal structure. The method is further restricted to time series and assumes sufficient samples per failure mode. In contrast, our approach detects, localizes, and classifies heterogeneous failure types even with as little as a single sample per type on i.i.d. data.

%% file: preliminaries.tex
\section{Causal Model}
\label{sec:prelim}

\input{figs/example_dag}
Throughout, let $\Xt=(X^\ast_1,\ldots,X^\ast_d)$ denote latent causal state variables and $\Xb=(X_1,\ldots,X_d)$ their observed readouts. 
Most samples follow an anomaly-free process; a small fraction deviate from it and are called \textit{outliers} or \textit{anomalies}. 
Bold uppercase letters denote variable sets, and bold lowercase letters denote their samples.
\subsection{Anomaly-free Causal Model}

We first describe the anomaly-free process. 
Let $\Vb=\Xt\cup\Xb$ and let $\Gc=(\Vb,E)$ be a DAG. 
We write $\Pa_j^*\subset\Xt$ for the latent parents of $X_j^*$, $\Pa_j^{*} = \left\{ X_k^{*} : X_k^{*} \to X_j^{*} \in E \right\}$, and 
$\Pa_j:=\{X_k\in\Xb:X_k^\ast\in\Pa_j^*\}$ for their observed counterparts. 
Each $X_j$ has exactly one parent, $\Xtj$ (see Fig.~\ref{fig:chain}).

Unless stated otherwise, sink nodes are defined with respect to the latent subgraph $\Gc[\Xt]$, i.e., they are latent variables with no children in $\Xt$.
As standard causal modeling conditions, we assume causal sufficiency and faithfulness over $\Xt$~\citep{pearl:09:causalitybook}:
\begin{enumerate}[noitemsep, topsep=0pt,leftmargin=*]
\item \emph{Causal Sufficiency}: There are no hidden confounders affecting two or more variables of $\Xt$.
\item \emph{Causal Faithfulness}: All independencies in the data are reflected by $d$-separation in the graph.
\end{enumerate}

In the anomaly-free regime, we model the data by a structural causal model~\citep[SCM;][]{pearl:09:causalitybook}. 
Each latent $\Xtj$ is generated by a structural assignment with independent noise $U_j$, and each observation is a faithful measurement of the corresponding latent state, i.e. $\Xtj = f_j(\Paj^*, U_j)$ , $X_j = \Xtj.$
Under this clean model, the joint distribution factorizes in causal order as
\begin{align*}
  P(\Xt,\Xb) 
  = \prod_{j=1}^d P(\Xtj \mid \Paj^*)\; P(X_j \mid \Xtj)\,.
\end{align*}
Thus, in the absence of anomalies, the observed $X_j$ accurately reflect the latent causal process encoded by $\Gc$. 
Anomalies are modeled as sparse sample-level deviations from this clean process.

\subsection{Mechanistic and Measurement Anomalies}
We model two kinds of deviations from the clean process: perturbations of latent causal mechanisms and perturbations of their observed measurements.

\begin{definition}[Mechanistic Anomalies]
A mechanistic anomaly at variable $j$ is modeled by replacing the clean structural assignment of $\Xtj$ with an outlier variable $\GenRVj \sim P_{\GenRVj}$. 
With latent indicator $\Gen_j \sim \mathrm{Bernoulli}(\GenPj)$,
\begin{align*}
  \Xtj 
  = (1 - \Gen_j) f_j(\Paj^*, U_j) 
    + \Gen_j \, \GenRVj,
  \qquad 
  \gen_j \in\{0,1\}\,. 
\end{align*}
When $\Gen_j=1$, the latent causal state itself is perturbed, and the effect may propagate.
\end{definition}

\begin{definition}[Measurement Anomalies]
A measurement anomaly at variable $j$ is modeled by replacing only the observed readout $X_j$ with an outlier variable $\MeasRVj \sim P_{\MeasRVj}$. 
With latent indicator $\Meas_j \sim \mathrm{Bernoulli}(\MeasPj)$,
\begin{align*}
  X_j  
  = (1 - \Meas_j) \Xtj 
    + \Meas_j \MeasRVj,
  \qquad 
  \meas_j \in\{0,1\}\,.
\end{align*}
When $\Meas_j=1$, the latent causal state $\Xtj$ remains unchanged, so the perturbation does not propagate.
\end{definition}

The replacement variables \(\GenRVj\) and \(\MeasRVj\) can be viewed as additional exogenous variables, analogous to the usual noise variables in an SCM. We take \((\mathbf \GenRV,\mathbf \MeasRV)\) to be independent of the clean SCM variables and noises, and do not represent them explicitly in the DAG. We rather interpret anomalies as hard interventions~\citep{pearl:09:causalitybook} on the DAG: a mechanistic anomaly \(\Gen_j=1\) replaces the clean latent mechanism \(f_j(\Pa^*_j,U_j)\) by an exogenous draw from \(\GenRVj\), while a measurement anomaly \(\Meas_j=1\) replaces the clean measurement mechanism \(X_j=X_j^*\) by an exogenous draw from \(\MeasRVj\).
We assume that anomalies are rare, meaning that the clean assignment occurs for the majority of samples, i.e., $P((\Measb,\Genb)=\mathbf 0)\gg 0.5.$

Each sample \(\xbi\) can therefore be associated with an assignment-specific
graph \(\Gci\). For non-anomalous samples this graph is simply
\(\Gc\). For anomalous samples, the edges \(\Paj^*\to \Xtj\) or
\(\Xtj\to X_j\) are removed according to the anomaly assignment; see
Fig.~\ref{fig:ex_meas} and Fig.~\ref{fig:ex_gen}.

An alternative would be to model anomalies as soft interventions~\citep{pearl:09:causalitybook}. In contrast
to the hard interventions used here, they replace
the clean mechanism by a different mechanism that may still depend on those
parents. For example, a mechanistic anomaly could be modeled as $
  \Xtj := (1-\Gen_j) f_j(\Pa^*_j,U_j)
     + \Gen_j g_j(\Pa^*_j,U_j).$
We do not model such soft interventions because anomalies are rare: each failure type
typically provides too few samples to reliably estimate a separate parent-dependent
mechanism \(g_j\) in practice. Without prior knowledge of the failure mechanism, or enough
repeated samples from the same failure mode as in causal mixture
models~\citep{liu:2015:causal,hu:2018:causal}, soft interventions would add
flexibility (variance) without reliable statistical support. We therefore use hard
interventions as a minimal model of violations of the usual parent-conditional mechanism.

The key distinction between the two anomaly types lies in propagation. 
Because observed variables $X_j$ have no children, a measurement anomaly $\meas_j=1$ affects only the observed coordinate $X_j$: the underlying latent state $\Xtj$ and all downstream variables remain unchanged. 
By contrast, a mechanistic anomaly $\gen_j=1$ replaces the mechanism generating $\Xtj$ itself, and therefore can affect the marginal distributions of its descendants. 
For instance, in Fig.~\ref{fig:ex_meas}, $X_2$ is replaced at the measurement level, while the latent causal relations 
$X^\ast_1 \to X^\ast_2 \to X^\ast_3$ remain intact. 
In Fig.~\ref{fig:ex_gen}, the mechanism generating $X^\ast_2$ is replaced, thereby perturbing the downstream variables as well.

Finally, we instantiate the sparse mechanism shift principle~\citep[SMS;][]{schoelkopf:2021:towardsclr} through an independent Bernoulli prior over the latent anomaly indicators, i.e. we model $(\Measb,\Genb)$ as jointly independent.
This acts as an inductive bias toward simple explanations: we prefer explanations that require fewer root causes, unless the evidence strongly suggests otherwise.

In the next section, we study inference under this model. In particular, we first analyze which intervention probabilities are identifiable once the assignment-conditional densities are fixed, and derive a tractable approximation for inferring root causes and anomaly types from observed data.

%% file: theory2.0.tex
\section{Inference of Root Causes and Anomaly Types}
\label{sec:theory}
We formalize inference under the proposed causal anomaly model. 
We first show how assignment-conditional densities are evaluated, including the marginalization required by measurement anomalies. 
We then characterize natural equivalences between assignments and analyze when, given fixed assignment-conditional densities, the reduced intervention probabilities are identifiable.
Finally, since exact marginalization over assignments is computationally infeasible, we introduce a tractable hard-assignment approximation.

In finite samples, anomalies are often too rare to estimate separate replacement distributions for all failure types. 
We therefore specify simple symmetric outlier distributions, \(\tilde P_{\GenRVj}= \tilde P_{\MeasRVj}\), so that the model does not favor one anomaly type a priori. 
For the theoretical analysis, we condition on a given DAG and anomaly-free SCM density $\hat p$. Together with the fixed outlier distributions, these determine the assignment-conditional densities used below. 
We defer the practical estimation of the anomaly-free density from contaminated data to Sec.~\ref{sec:algo}.

\paragraph{Assignment-conditional densities}
For outlier indicators $\ass=(\genb,\measb)$ of a sample $\xb$, define
$\Genc_\xb=\{j:\gen_j=1\}$ and $\Mc_\xb=\{j:\meas_j=1\}$.
For \(j\in\Mc_\xb\), let \(\bar x_j\) be the unobserved value before measurement replacement, and set \(\bar x_j=x_j\) otherwise.
Let \(\bar{\pa}_j\) denote the parent vector evaluated at \(\bar{\xb}\). 
Marginalizing the latent values of measurement outliers yields
\begin{align}
  p(\xb;\ass)   
  =
  \prod_{j \in \Mc_\xb} \tilde p_{\MeasRVj}(x_j)
  \int
  \left[
  \prod_{j \notin \Genc_\xb} \hat p(\bar x_j \mid \bar{\pa}_j)
  \prod_{j \in \Genc_\xb} \tilde p_{\GenRVj}(\bar x_j)
  \right]
  \prod_{j \in \Mc_\xb} d\bar x_j .
  \label{eq:margin}
\end{align}
In general, the integral cannot be computed in closed form but can be approximated via Monte Carlo sampling (Sec.~\ref{sec:algo}).
The joint likelihood of $\xb$ and assignment $\ass = (\genb,\measb)$ is
\[
\mathcal{L}(\xb ;\ass , \pib)
= p(\xb;\ass)\prod_{j=1}^d \mathrm{Bern}(\gen_j; \GenPj)\,\mathrm{Bern}(\meas_j; \MeasPj).
\]

Since assignments can induce identical observable densities, we define an equivalence relation.

\begin{definition}[Equivalent assignments]
Two assignments $\ass,\ass'$ are called equivalent, written $\ass \equiv \ass'$, if they induce the same assignment-conditional density, i.e.,
$
p(\xb;\ass) = p(\xb;\ass') \quad \text{for all } \xb.
$
\end{definition}

The most important equivalences arise at sink nodes. 
If \(j\) is a sink and \(P_{\GenRVj}=P_{\MeasRVj}\), then a mechanistic intervention on \(X_j^\ast\) and a measurement intervention on \(X_j\) induce the same observable density: neither can propagate downstream, and both replace only the observed coordinate \(X_j\). 
Thus, for any fixed intervention pattern on the remaining nodes,
$
(\gen_j,\meas_j)\in\{(1,0),(0,1),(1,1)\}
$
are equivalent, and we represent them by
$
s_j:=\gen_j\vee\meas_j.
$ 
Further equivalences can arise for multi-intervention assignments, since one intervention may remove dependencies through which another intervention would otherwise be distinguishable.

\paragraph{Inference of root causes and anomaly types}
Under fixed intervention probabilities \(\pib\), the Bayes-optimal assignment
classifier under 0--1 loss is the maximum-a-posteriori assignment $\argmax_{\ass} \log \mathcal{L}(\xb,\ass; \pib)$. Importantly, if assignment-conditional distributions overlap on a set of positive probability even the Bayes-optimal classifier has nonzero error.
In practice, intervention probabilities are typically unknown. A natural
reference objective is the marginalized likelihood
\begin{align}
    \max_{\pib} \sum_{i=1}^n \log \left( \sum_{\ass} \mathcal{L}(\xbi,\ass; \pib) \right),
    \label{eq:mle}
\end{align}
which corresponds to a mixture model over fixed assignment-conditional densities. Since different assignments may induce the same observable density, we can group assignments into structural equivalence classes and index mixture components by these classes. This rewrites \eqref{eq:mle} as a standard mixture model with aggregate weights, where each aggregate weight is the total prior mass of all assignments in the corresponding equivalence class.
Because of sink-node equivalence, we use the reduced indicator
\(s_j:=\gen_j\lor\meas_j\) for \(j\in\mathcal S\), where \(\mathcal S\)
denotes the sink nodes of \(G[X^*]\), and define
\(\rho_j:=P(s_j=1)\). Thus, \(\pib_{\rm red}\) collects
\((\GenPj,\MeasPj)_{j\notin\mathcal S}\) and \((\rho_j)_{j\in\mathcal S}\).
In Appx.~\ref{apx:proofs}, we show that, given a suitable subset of
uniquely identifiable aggregate mixture weights, the reduced Bernoulli
probabilities \(\pib_{\rm red}\) can be recovered. In particular,
\(\GenPj\) and \(\MeasPj\) are separately identifiable only for non-sink
nodes, while at sink nodes only \(\rho_j=P(\gen_j=1\lor\meas_j=1)\) is
identifiable. 

We show generic identifiability of \(\pib_{\rm red}\) for linear
Gaussian SCMs (Appx.~\ref{apx:linear-gaussian}) and for fixed-degree
polynomial Gaussian SCMs (Appx.~\ref{apx:polynomial-gaussian}). 
For conciseness, we here state the resulting generic identifiability guarantee in simplified form; the full statement and proof are given in Appx.~\ref{apx:polynomial-gaussian}.
\begin{theorem}[Generic identifiability under polynomial Gaussian SCMs (simplified)]
\label{thm:main-poly-identifiability}
Under the setup established above, suppose the clean mechanisms are additive-noise
polynomial functions of fixed degree \(r\ge 1\), the exogenous noises are
Gaussian with positive variances, and the mechanistic and measurement
outlier distributions are nodewise identical Gaussians. Then, outside a
Lebesgue-null set of model parameters, the reduced intervention probabilities
\(\pib_{\rm red}\) are identifiable.
\end{theorem}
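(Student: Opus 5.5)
The argument has two parts. The first reduces identifiability of $\pib_{\rm red}$ to identifiability of the aggregate mixture weights. The second shows, via a genericity argument over the polynomial–Gaussian parameters, that the relevant aggregate weights are identifiable.

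\textbf{Step 1: recovering $\pib_{\rm red}$ from aggregate weights.} I will use the algebraic recovery step stated in the text before the theorem (Appx.~\ref{apx:proofs}) and only check that its hypothesis holds. Under the independent Bernoulli prior, the class of the empty assignment has weight $w_0=\prod_{j\notin\mathcal S}(1-\GenPj)(1-\MeasPj)\prod_{j\in\mathcal S}(1-\rho_j)$. The single-intervention classes $\{\gen_j=1\}$, $\{\meas_j=1\}$ for $j\notin\mathcal S$, and $\{s_j=1\}$ for $j\in\mathcal S$, have weights $w_0$ multiplied by the corresponding odds $\GenPj/(1-\GenPj)$, $\MeasPj/(1-\MeasPj)$, or $\rho_j/(1-\rho_j)$. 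Since every probability lies strictly below $1$, knowing $w_0$ and these singleton weights determines every odds ratio, and hence $\pib_{\rm red}$. The task is therefore to show two things: (a) the empty class and all singleton classes are distinct equivalence classes, each separated from every other class; and (b) the finite mixture over classes is identifiable, so that these weights are determined by the observed density.

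\textbf{Step 2: the component densities are Gaussian.} With Gaussian outliers, each component density in \eqref{eq:margin} is an explicit function of the parameters: polynomial conditional means, positive variances, and outlier means and variances. The key structural fact I will use is the following. A mechanistic intervention at a non-sink $j$ cuts the dependence of $X^*_j$ on $\Pa^*_j$ but keeps $X_j$ informative about its children. A measurement intervention at $j$ keeps the latent chain intact, so after marginalizing $\bar x_j$ the children still depend on $\Pa_j$ through the integrated conditional, while $X_j$ becomes independent of everything. For $r=1$ the components are multivariate Gaussians with different covariance sparsity patterns. For $r\ge 2$ I will compare conditional moments, for example $E[X_k\mid \Pa_j]$ for a child $k$ of $j$, which are polynomials in the parameters.

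\textbf{Step 3: genericity.} For each pair of assignment classes that are not equivalent by construction, I will exhibit one low-order moment (a mean, covariance, or conditional moment) whose difference between the two components is a nonzero polynomial, or real-analytic function, in the model parameters. By faithfulness it suffices to find a single parameter point at which they differ. This identity is needed only for the pairs involving the empty class or a singleton class. The zero set of each such difference is Lebesgue-null, and the union over the finitely many pairs is still null. Off this set, the empty and singleton components are pairwise distinct from each other and from every other class's component.

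\textbf{Step 4: finite-mixture identifiability.} Here I will use the classical identifiability of finite Gaussian mixtures (Yakowitz–Spragins/Teicher) for $r=1$. For $r\ge2$ the components are not Gaussian, because marginalizing over a measured-out parent pushes a Gaussian through a polynomial. I expect this to be the main obstacle. My first attempt will be to argue via linear independence of the component densities: each is a real-analytic function with a distinct tail or exponential-polynomial behaviour along some coordinate direction. Since a finite mixture of linearly independent densities has unique weights, the weights of the distinguished classes are identified. If that fails, I will restrict attention to the directions given by single coordinates and their parents, where the densities reduce to Gaussian-in-one-variable forms, and apply the Gaussian mixture result there.
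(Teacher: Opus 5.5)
\textbf{Gap: Step 4 fails for $r\ge 2$, and that case is the content of the theorem.} Your Steps 1 and 3 are sound and mirror the paper. Your odds-ratio recovery is the paper's formula $q_\ell = m_\ell/(m_0+m_\ell)$. Separating the clean and singleton classes from every other class at one parameter point is the content of the paper's lemma that clean and singleton assignments are not merged. One witness point suffices because a nonzero polynomial vanishes only on a null set, not because of faithfulness. For $r=1$ your Step 4 also works, since distinct non-degenerate Gaussians are linearly independent.

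For $r\ge 2$ the argument breaks down. Pairwise distinctness of components does not determine mixture weights. You need linear independence of the component family, or at least that the span of the selected components meets the span of the remaining ones only in $0$. Distinct densities can be linearly dependent. Here the measured-out components are themselves continuous mixtures over the latent pre-replacement value, so no tail heuristic makes independence automatic. Your fallback also fails, for three reasons:
\begin{itemize}
\item Restricting to a coordinate and its parents merges many non-equivalent classes.
\item The marginal components there are still non-Gaussian once ancestors have polynomial mechanisms.
\item Conditioning a mixture on parent values reweights the components by parent-dependent posterior factors, so the resulting weights are not the original $m_k$.
\end{itemize}

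\textbf{Missing idea: move linear independence from a linear-Gaussian point to generic polynomial parameters through a finite moment determinant.} The paper's argument runs as follows.
\begin{enumerate}
\item Take a generic point $\theta_0$ of the linear submodel, with all higher-degree coefficients set to zero. There the components of linearly inequivalent classes are distinct non-degenerate Gaussians, hence linearly independent.
\item The moment generating function of a finite signed combination of Gaussians is entire, so polynomial moments separate such combinations. Hence there are polynomials $h_1,\dots,h_m$ with $\det\bigl[\int h_\ell\,d\mu_{a_i}^{\theta_0}\bigr]_{i,\ell=1}^m\neq 0$, where $\mu_a^\theta$ is the observed law of class $a$.
\item Under any assignment, $X$ is a polynomial in the independent Gaussian sources and in $\theta$. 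Marginalizing a measured-out latent is just an expectation over its source. So every entry $\int h_\ell\,d\mu_{a_i}^{\theta}$ is a polynomial in $\theta$.
\item The determinant is therefore a nonzero polynomial. Off its null zero set the components are linearly independent, which identifies all class weights and, via your Step 1, $\pib_{\rm red}$.
\end{enumerate}

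\textbf{Additional requirement.} To run this over the full family of polynomial classes, you also need the structural equivalence classes of the polynomial model to coincide with those of the linear submodel. Otherwise two polynomial-distinct classes would coincide at every linear witness, and the determinant would vanish identically there. The paper proves this coincidence via observed active graphs and transfer-vector polynomials.
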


Although identifiable, optimizing Eq.~\eqref{eq:mle} is generally infeasible due to the exponential number of assignments, many of which require marginalizing latent pre-replacement values. 
Moreover, in the sparse finite-sample regime, many low-probability assignments are observed rarely or not at all, making their aggregate mixture weights difficult to estimate reliably. 
We therefore use Eq.~\eqref{eq:mle} as a reference objective, but adopt a hard-assignment approximation for finite-sample inference. 
Specifically, we approximate the sum over assignments by a maximum:
\[
\log \sum_{\ass} \mathcal{L}(\xb,\ass; \pib)
\approx
\max_{\ass} \log \mathcal{L}(\xb,\ass; \pib)\,.
\]
The gap is controlled by posterior uncertainty. We have
\[
0\le
\log\sum_{\ass}\mathcal {L}(\xb,\ass;\pib)
-
\max_{\ass}\log\mathcal {L}(\xb,\ass;\pib)
\le
H_{q_{\pib}}(\ass\mid\xb), \; \text{with} \; q_{\pib}(\ass\mid\xb) =
\frac{\mathcal {L}(\xb,\ass;\pib)}
{\sum_{\ass'} \mathcal {L}(\xb,\ass';\pib)}\,.
\]
Thus, the approximation is tight when the posterior over assignments is concentrated, which we expect when outliers are strong enough to induce well-separated assignment-conditional densities.
The resulting hard-assignment objective remains combinatorial and non-convex. 
For fixed assignments, the optimal intervention probabilities are their empirical frequencies; our algorithm therefore profiles them out and uses the induced Bernoulli term as an adaptive sparsity cost, as described in Sec.~\ref{sec:algo}.
In Sec.~\ref{sec:exps}, we show that the resulting approximation closely matches the optimal assignment classifier with oracle $\pib$ where exact inference is feasible, while scaling to real-world problems.

%% file: instantiation.tex
\section{Practical Inference with \ourmethod}
\label{sec:algo}

We instantiate the ingredients required by the theory and describe the resulting algorithm, which takes the observed data $\Xb$ and a DAG $\Gc$ as inputs. 
We first specify how the anomaly-free SCM density and outlier distributions are estimated. 
We then describe likelihood evaluation under a fixed assignment, including Monte Carlo marginalization for measurement anomalies. 
Finally, we present \ourmethod,\!\footnote{\textbf{C}ausal \textbf{A}nomaly \textbf{L}ocalization and \textbf{I}nspection} a practical algorithm that jointly updates assignments and intervention probabilities. We present the high-level ideas and algorithms here, while further details, pseudocode, and complexity analysis are given in Appx.~\ref{apx:algorithm}.

\paragraph{Anomaly-free density.}
Although the anomaly-free density could be instantiated with different causal models, we use an additive noise model (ANM) for tractable conditional-density estimation under contamination. 
Given $\hat \Gc$, each node is modeled as $
X_j = f_j(\Pa_j) + \varepsilon_j.
$
We estimate $\hat f_j:\mathbb R^k\to\mathbb R$ using generalized additive models with cubic spline bases~\citep[GAMs;][]{hastie:2017:generalized}. 
To reduce sensitivity to anomalies, we fit the mechanisms by median regression using the quantile-regression approach of \citet{fasiolo:2021qgam}. 
Residuals are then defined as
$
r_j = x_j-\hat f_j(\pa_j).
$
We estimate their densities using trimmed kernel density estimation: a fraction $\alpha$ of observations farthest from the median is discarded before fitting a Gaussian KDE~\citep{parzen:1962:estimation}. Together, the fitted mechanisms \(\{\hat f_j\}_j\) and residual densities define
the estimated anomaly-free conditionals
\(\hat P(X_j\mid \Pa_j)\), and hence the estimated anomaly-free SCM density.

\paragraph{Assignment likelihoods.}
For anomalous variables, we use uniform replacement distributions over the observed feature ranges. 
Given an assignment, likelihood evaluation follows Eq.~\eqref{eq:margin}. 
When measurement anomalies are present, the pre-replacement values are latent; we approximate the corresponding integral by Monte Carlo. 
For each $j\in\Mc$, we sample candidate clean values from $\hat P(X_j\mid \Pa_j=\pa_j)$, evaluate the remaining factors in Eq.~\eqref{eq:margin}, and average over samples. 

\paragraph{Assignment optimization.}
For exact assignment classification under fixed $\pib_\text{red}$, used only in small oracle comparisons, we enumerate all $4^d$ assignments for each sample  and choose the maximizer of $\mathcal L(\xb;\ass,\pib_\text{red})$. In contrast, \ourmethod approximates the profiled hard-assignment objective heuristically. 
We initialize all samples with the all-clean assignment and profile $\pib_\text{red}$ by empirical assignment frequencies. 
Candidate interventions are prioritized by low residual likelihood under the clean density. 
A candidate is accepted only when the resulting gain in assignment-conditional likelihood outweighs the induced increase in Bernoulli sparsity cost. 
This procedure iterates until no candidate update improves the objective. 
It reduces the search from exhaustive enumeration over assignments to a greedy scan over feature-level candidates. 
In Sec.~\ref{sec:exps}, we show that \ourmethod closely matches exact classification with oracle $\pib_\text{red}$, where enumeration is feasible, while scaling to real-world problems.

\paragraph{Root-cause scores and type confidence.}
\ourmethod returns a discrete assignment \(\ass\) for every sample, but it is often useful to rank
candidate root causes by plausibility. 
Since exact posteriors are intractable, we use a likelihood-gain surrogate. 
For an assignment $\ass$, let $\ass[j \leftarrow 0]$ denote the assignment obtained by setting $(\gen_j,\meas_j)=(0,0)$, and let $\ass[j \leftarrow \mathrm{mech}]$ and $\ass[j \leftarrow \mathrm{meas}]$ denote the assignments obtained by setting $(\gen_j,\meas_j)=(1,0)$ and $(0,1)$, respectively, leaving all other indicators unchanged. 
Let $\ell_j^0 := \log \mathcal L(\xb;\ass[j \leftarrow 0],\pib)$, $\ell_j^{\mathrm{mech}} := \log \mathcal L(\xb;\ass[j \leftarrow \mathrm{mech}],\pib)$, and $\ell_j^{\mathrm{meas}} := \log \mathcal L(\xb;\ass[j \leftarrow \mathrm{meas}],\pib)$.
We define the root-cause score as $
  \Delta_j := \max\{\ell_j^{\mathrm{mech}},\ell_j^{\mathrm{meas}}\} - \ell_j^0,
  \label{eq:scores}
$
where $\Delta_j$ measures the best likelihood gain obtained by treating $j$ as a root cause. 
This score is especially useful because the replacement distribution is only a simple approximation and may be misspecified in practice. Relative likelihood gains can still rank plausible root causes highly, even when conservative replacement likelihoods make the absolute gains small or negative.
We also define the type confidence as
$
\tilde C_j
:=
\left|
\ell_j^{\mathrm{mech}}
-
\ell_j^{\mathrm{meas}}
\right|.
\label{eq:conf}
$
It measures approximately how strongly the model favors one anomaly type over the other, conditional on $j$ being a root cause.

%% file: experiments.tex
\section{Experiments}
\label{sec:exps}

\usetikzlibrary{patterns}

We empirically evaluate \ourmethod across various settings.
In addition to the main experiments presented here, we evaluate detection performance, sensitivity to sample size and dimensionality, and runtime in Appx.~\ref{apx:experiments}, and robustness to violations of the modeling assumptions, including correlated root causes, soft interventions, controlled DAG misspecification, confounding, heteroscedastic noise, and extreme outliers in Appx.~\ref{apx:violations}. 

\begin{wrapfigure}[14]{r}{0.5\textwidth}
    \centering
    \vspace{-1cm} % optional

    % Legend
    \begin{subfigure}[t]{\linewidth}
        \centering
        \input{figs/rca_small_legend.tex}
    \end{subfigure}

    \vspace{0.3em}

    % Two plots
    \begin{subfigure}[t]{0.49\linewidth}
        \centering
        \input{figs/rca_small.tex}
    \end{subfigure}
    \hfill
    \begin{subfigure}[t]{0.49\linewidth}
        \centering
        \input{figs/classification_small.tex}
    \end{subfigure}

    \caption{\ourmethod closely matches exhaustive model-optimal assignment classification. 
Root-cause localization (Top-\(k\) recall, left) and anomaly-type classification accuracy (right) on synthetic data, compared against the model-optimal classifier obtained by exhaustive enumeration over assignments.}
    \label{fig:oracle}
    \vspace{-\baselineskip} % optional
    % \hspace{0.5em} % optional
\end{wrapfigure}
\paragraph{Data} 
To be able to evaluate on data with known ground truth, we generate $2000$ samples $\mathbf{x}$ from an SCM defined over a random Erd{\"o}s--R{\'e}nyi DAG with $15$ nodes. Each variable \(X_j\) is specified as $X_j = f_j\bigl(\mathrm{Pa}(X_j)\bigr) + U_j$, where \(f_j\) is a quadratic or cubic polynomial and \(U_j \sim \mathcal{N}(0, \sigma^2)\). For 10\% of samples, outliers are injected by shifting the marginal distribution of \(X_j\)  by \(l\) empirical standard deviations and replacing the actual value with a random sample from the shifted marginal distribution, and recalculating downstream values for mechanistic outliers. We refer to $l$ as \emph{outlier strength}. 

In addition, we evaluate on two real-world datasets with known consensus DAGs: the Sachs dataset~\citep{sachs:2005:causal} and the \emph{Causal Chambers} benchmark~\citep{gamella:2025:causalchamber}. Both contain observational and interventional data. In Causal Chambers, interventions are applied at different strengths (\emph{mid} and \emph{strong}), whereas interventions in the Sachs dataset induce weaker marginal shifts than the \emph{mid} interventions. We simulate mechanistic outliers by injecting a small fraction of interventional samples, and measurement outliers by replacing individual features with values drawn from the post-intervention distribution. We consider three settings with respect to outlier strength: low (Sachs), medium (Causal Chambers, \emph{mid}), and high (Causal Chambers, \emph{strong}). Further details on data-generation and processing are provided in Appx.~\ref{apx:datasets}.

\paragraph{Instantiating \ourmethod}
To evaluate the best possible performance, we consider \ourmethod with access to the true causal graph $\Gc^*$. Since the ground truth is often unknown, we also consider the case where \ourmethod is given a learned causal DAG. To this end, we apply \ourmethod to the outputs of various causal discovery algorithms on independently generated synthetic datasets and assess downstream performance (see Appx.~\ref{apx:causal_discovery}). The best performance occurs with \NOGAM~\citep{montagna:2023:causal}, with \AUTOENCODER-based prefiltering, which we thus use for subsequent experiments. We write \ourmethodstar for when it has access to the true graph, and \ourmethodprime when the causal graph is learned. 

\paragraph{Evaluation Metrics}
For root cause localization, we report Top-$k$ Recall, where $k$ is set dynamically to the true number of root causes of the respective sample, which calculates what percentage of the true root causes is included in the top $k$ candidates. This metric is detection-agnostic and ensures a fair comparison across methods. However, it does not necessarily reflect performance in deployment, as explaining anomalies that are not detected provides no actionable insight. Therefore, we also report an evaluation that takes detectability into account in Appx.~\ref{apx:metrics}, which also includes further metrics. 
For the classification of outlier types, we report the accuracy over correctly detected root causes,  evaluating the classification performance for actionable anomalies. The shaded regions in figures show mean \(\pm 1.96\) standard errors across 20 independent runs.

\subsection{Comparison to Model-Optimal Assignment Classification}

To evaluate whether the approximations used by \ourmethod affect performance in practice, we compare it to exhaustive model-optimal assignment classification under oracle intervention probabilities \(\pib_{\mathrm{red}}\), denoted \OPTCLASS. \OPTCLASS is optimal only with respect to the estimated likelihood, not the true data-generating distribution. Since it enumerates all assignments, it is exponential in \(d\), so we restrict this comparison to the synthetic setting with 500 samples and 3 nodes. Fig.~\ref{fig:oracle} shows that \ourmethodstar closely matches \OPTCLASS in both root-cause localization and anomaly-type classification accuracy, while being substantially more efficient (47.57s vs.\ 1190.60s on average). This suggests that the approximations introduce little loss in the settings where exact comparison is possible.

\subsection{Root Cause Localization}
\begin{wrapfigure}[11]{r}{0.5\textwidth}
    \centering
    \vspace{-1.3cm} % optional­

    % Legend
    \begin{subfigure}[t]{\linewidth}
        \centering
        \input{figs/top_k_recall_legend.tex}
    \end{subfigure}

    \vspace{0.3em}

    % Two plots
    \begin{subfigure}[t]{0.49\linewidth}
        \centering
        \input{figs/top_k_recall}
    \end{subfigure}
    \hfill
    \begin{subfigure}[t]{0.49\linewidth}
        \centering
        \input{figs/rca_cc_top_k_recall}
    \end{subfigure}

    \caption{Root Cause Localization (Top-$k$ recall, with $k$ set to the true number of root causes in each sample) for synthetic data (left) and real data (right). * denotes ground-truth DAG access.
    }
    \label{fig:rca}
    \vspace{-\baselineskip} % optional
    % \hspace{0.5cm} % optional
\end{wrapfigure}%
To investigate how well \ourmethod performs compared to existing root cause analysis methods, we grant all methods access to the ground-truth DAG where applicable and mark such methods with a *. We consider \SCM an adapted version of SCM-based attribution approaches \citep{budhathoki:2022:rootcause,strobl:2023:samplerca,strobl:2023:hetero}, \SMOOTHTRAVERSALSTAR and \SCOREORDER \footnote{Short for  \textsc{SmoothTraversal}, \textsc{ScoreOrder}}~\citep{orchard:2025:rootcause}, and  \AESHAP (autoencoder  with Shapley values). We refer to Appx.~\ref{apx:implementation} for detailed implementations and hyperparameter selection.  
Since all competitors assume a given split between the normal and anomalous regimes, and are thus not directly applicable in the in-sample setting, we combine them with the best-performing sample detector aside from \ourmethod to obtain the split. 

We report the results in Fig.~\ref{fig:rca}. On synthetic data, \ourmethodstar achieves the best performance, with \ourmethod using a learned graph as a close second. On the real-world datasets, all methods struggle under the weak outliers in the Sachs dataset, resulting in uniformly low Top-$k$ recall. In contrast, on the Causal Chambers dataset, \ourmethodstar already attains a Top-$k$ recall of 90\% under the medium outlier strength, a level that is only matched by a single competitor under strong outliers. 
Notably, this evaluation measures root-cause localization under oracle detection. When we take detectability into account (see Appx.~\ref{apx:additional_rca}), \ourmethodstar and \ourmethod outperform competing approaches by an even larger margin. 

\subsection{Anomaly Type Classification}
\begin{wrapfigure}[11]{r}{0.5\textwidth}
    \centering
    \vspace{-1.4cm} % optional

    % Legend
    \begin{subfigure}[t]{\linewidth}
        \centering
        \input{figs/classification_legend.tex}
    \end{subfigure}

    \vspace{0.3em}

    % Two plots
    \begin{subfigure}[t]{0.49\linewidth}
        \centering
        \input{figs/classification_synth.tex}
    \end{subfigure}
    \hfill
    \begin{subfigure}[t]{0.49\linewidth}
        \centering
        \input{figs/classification_cc.tex}
    \end{subfigure}

    \caption{Classifying measurement and mechanistic anomalies. Accuracy on synthetic data (left) and real-world data (right). Random guessing corresponds to 50\%.
    }
    \label{fig:classification}
    \vspace{-\baselineskip} % optional
    % \hspace{0.5em} % optional
\end{wrapfigure}
Next, we evaluate whether \ourmethod can accurately classify the types of detected root causes. As a baseline, we consider \MARGINAL, which identifies outlier types using marginal outlierness. A root cause is classified as a measurement outlier if none of its child values exceed a predefined outlierness threshold. A key limitation of this baseline is the difficulty of threshold selection. Since we want to eliminate errors in causal discovery for the comparison, we only consider \ourmethodstar and \MARGINALSTAR for this experiment. We provide the results with learned graphs in Appx.~\ref{apx:classification}. To ensure a fair comparison, we provide \MARGINALSTAR with the root causes identified by \ourmethodstar and tune its significance threshold on synthetic data. Detailed implementation and hyperparameter selection are described in Appx.~\ref{apx:implementation}. 

Fig.~\ref{fig:classification} shows that \ourmethodstar outperforms \MARGINALSTAR. Despite the poor localization performance on the Sachs dataset, those root causes that are correctly identified are classified with high accuracy. Especially under weak outliers \ourmethod beats \MARGINAL by a large margin. Another major advantage of \ourmethod is that it performs robustly across different settings without requiring a threshold, as we further illustrate in the "influence of degree" experiment in Appx.~\ref{apx:classification}.

\subsection{Case Study on NYC Taxi Data}

To qualitatively verify the anomalies that \ourmethod finds in real data, we perform a case study on the NYC Yellow Taxi Trip Dataset \citep{elemento:2015:nyc}, which contains 12.7 million taxi trips with pickup and dropoff coordinates, times, fares, tips, extra charges, and the applied rate. We run \ourmethod on a random subset of 5\,000 entries using a selected set of continuous features. We illustrate the DAG used in the analysis in Fig.~\ref{fig:case_study_dag} in the Appx.~\ref{apx:datasets}, and the examples identified by \ourmethod in Tab.~\ref{tab:outliers_combined}. By filtering the root causes by types, and sorting based on gain and confidence we selected 3 anomalies with high likelihood gain and high classification confidence. 
\begin{table}[t]
   \caption{
Top 3 measurement (left) and mechanistic (right) anomalies identified by \ourmethod on the NYC Taxi dataset, ranked by confidence. Pick-up and drop-off coordinates are summarized as air distance, and detected root causes are shown in \textbf{bold}. 
The second measurement example has a \emph{Distance} value that is clearly incompatible with physical constraints, since it would imply a speed exceeding three times the speed of sound. All remaining attributes indicate a very short trip, supporting the measurement-error interpretation. For the second and third mechanistic examples, \ourmethod correctly detects the anomalous pricing mechanism corresponding to the fixed JFK flat fare of \$52.
}
\centering
% \small
 
\begin{subtable}[t]{0.49\textwidth}
     \resizebox{\linewidth}{!}{
    \centering
    \tiny
    \begin{tabular}{rrrrrr}
    \toprule
    Distance & Air Dist & Time & Fare & Tip & Total \\
    \midrule
    3.90 & 3.16 & \textbf{94.65} & 10.00 & 2.00 & 13.30 \\
    \textbf{9.40} & 0.01 & 0.20 & 2.50 & 0.00 & 3.30 \\
    0.60 & 0.98 & \textbf{33.52} & 5.00 & 0.00 & 6.30 \\
    \bottomrule
    \end{tabular}
    }
    % \caption{Measurement outliers}
    \label{tab:meas_outliers_cd}
\end{subtable}
\hfill
\begin{subtable}[t]{0.49\textwidth}
    \centering
    \resizebox{\linewidth}{!}{
        \tiny
    \begin{tabular}{rrrrrr}
    \toprule
    Distance & Air Dist & Time & Fare & Tip & Total \\
    \midrule
    0.00 & 0.06 & 0.75 & \textbf{220.00} & 44.05 & 264.35\\
    0.00 & 0.00 & 0.30 & \textbf{52.00} & 0.00 & 58.13\\
    1.80 & 1.39 & 8.48 & \textbf{52.00} & 11.63 & 69.76\\
    \bottomrule
    \end{tabular}
    }
    % \caption{Mechanistic outliers}
    \label{tab:mech_outliers_cd}
\end{subtable}

\label{tab:outliers_combined}

\end{table}

First, we look into the measurement outliers in Tab.~\ref{tab:outliers_combined} (left).
The first and third examples have recorded travel times of 94 and 33 minutes and distances of 3.9 and 0.6 miles, respectively, but especially the fares ($10$ and $5$) are suspiciously low compared to what one would expect from accrued time-based charges alone.
This inconsistency between distance, time, and fare, makes an incorrectly recorded travel time by far the most plausible explanation. The second example is perhaps even more clearly a measurement error: the trip distance of $9.4$ miles is incompatible with the pickup and drop-off coordinates, time, and fare, all of which instead indicate a very short trip. 

Next, we consider the mechanistic outliers in Tab.~\ref{tab:outliers_combined} (right).
The first example has a fare that is unusually high for a trip of less than a minute and zero-distance, yet consistent with the total amount and tip, reflecting a possibly fraudulent transaction. The second and third samples are textbook definitions of a mechanistic outlier as metadata indicates that these are rides under the special JFK flat fare, which sets a fixed fare of \$52 independent of distance or time. 
Overall, we find that \ourmethod successfully identifies measurement and mechanistic outliers in real-world data, with explanations that are plausible and actionable.

%% file: conclusion.tex
\section{Conclusion}
\label{sec:conclusion}

We studied root cause analysis from a new perspective, distinguishing measurement errors from genuine mechanistic anomalies by modeling both as interventions on latent and observed variables. We cast the problem as latent-variable maximum likelihood estimation and introduced a practical algorithm that achieves competitive root-cause localization while additionally enabling classification of outlier types. Finally, a case study demonstrated that our approach produces plausible and actionable anomaly characterizations.

\paragraph{Limitations}
\ourmethod requires a DAG, and performance depends on how well the true mechanisms, noise model, and outlier types fit our modelling assumptions. Empirically, \ourmethod remains robust to several violations of these assumptions, but performance can degrade when the model is severely misspecified. For extremely large anomalous shifts, assignment-conditional density estimates become less reliable due to extrapolation beyond the training distribution, and simpler methods based on marginal deviations can perform competitively. If the DAG is unknown, it must be estimated using a causal discovery method, whose limitations may be exacerbated by contamination. \ourmethod is currently restricted to tabular data, although its main ideas could be extended to other data types.

\paragraph{Future Work}
A promising direction is causal discovery under contamination. Rather than treating all anomalies as nuisance samples, future work could exploit their structure: measurement outliers may distort causal signals, whereas mechanistic outliers can act as informative perturbations. Distinguishing these cases may therefore benefit structure learning from contaminated data.

%% file: appendix.tex
\section{Appendix}
\label{sec:apx}

\subsection{Identifiability of intervention probabilities}
\label{apx:proofs}

We reduce identifiability of the intervention probabilities
\(\pib_{\rm red}\) to identifiability of the weights of fixed
assignment-conditional densities.

\begin{theorem}[Identifiability of reduced intervention probabilities]
Let $\{\ass_k\}_{k=1}^m$ be a subset of assignments whose
densities $p(\xb;\ass_k)$ correspond to singleton equivalence classes
under the reduced sink-node parameterization. Let $m_k$ denote their
mixture weights. Write the reduced Bernoulli coordinates as
\[
\qb
=
\big(
(\GenPj)_{j\in [d]\setminus\mathcal S},
(\MeasPj)_{j\in [d]\setminus\mathcal S},
(\rho_j)_{j\in\mathcal S}
\big),
\qquad
\rho_j := P(\gen_j=1\lor\meas_j=1).
\]
Assume:
\begin{enumerate}[noitemsep, topsep=0pt]
    \item[(i)] $q_\ell\in(0,1)$ for all reduced Bernoulli coordinates
    $q_\ell$ of $\qb$;
    \item[(ii)] the weights $m_k$ are uniquely identifiable from the full mixture;
    \item[(iii)] the matrix
    $A \in \mathbb{R}^{m \times (1+2d-|\mathcal S|)}$ with rows
    \[
        A_k = \bigl(1,\gen^{(k)}_{[d]\setminus\mathcal S},
                      \meas^{(k)}_{[d]\setminus\mathcal S},
                      s^{(k)}_{\mathcal S}\bigr)
    \]
    has full column rank.
\end{enumerate}
Then the reduced intervention probabilities $\pib_{\rm red}$ are
identifiable. In particular, at sink nodes only
$\rho_j=P(\gen_j=1\lor\meas_j=1)$ is identifiable, not $\GenPj$ and
$\MeasPj$ separately.
\label{thm:identifiability}
\end{theorem}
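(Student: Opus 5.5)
The plan is to take logarithms of the aggregate weights of the singleton classes. That turns the product-form Bernoulli prior into a linear system in log-odds, and assumption (iii) then lets us invert that system.

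\emph{Step 1: write each weight in terms of \(\qb\).} Fix a singleton class \(\ass_k\). Under the reduced sink-node parameterization, its aggregate weight is the total prior mass of the assignments it represents. At a non-sink node \(j\), the indicators \(\gen_j\) and \(\meas_j\) are independent Bernoullis, so they contribute the factors \(\mathrm{Bern}(\gen^{(k)}_j;\GenPj)\,\mathrm{Bern}(\meas^{(k)}_j;\MeasPj)\). At a sink node \(j\), we sum over the equivalent pairs \((\gen_j,\meas_j)\in\{(1,0),(0,1),(1,1)\}\) when \(s_j=1\), or take \((0,0)\) when \(s_j=0\). Since the indicators are jointly independent, this sum equals \(\rho_j\) or \(1-\rho_j\) respectively, i.e. \(\mathrm{Bern}(s^{(k)}_j;\rho_j)\). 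Hence
\[
m_k=\prod_{\ell} q_\ell^{b^{(k)}_\ell}(1-q_\ell)^{1-b^{(k)}_\ell},
\]
where \(b^{(k)}=(\gen^{(k)}_{[d]\setminus\mathcal S},\meas^{(k)}_{[d]\setminus\mathcal S},s^{(k)}_{\mathcal S})\). A singleton class may be one whose only other members are its sink-node equivalents, so that grouping is already absorbed into the \(s_j\) coordinates.

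\emph{Step 2: linearize.} By (i), every factor is positive, so we may take logs:
\[
\log m_k=c+\sum_\ell b^{(k)}_\ell\,\theta_\ell,
\qquad c=\sum_\ell\log(1-q_\ell),\quad \theta_\ell=\log\frac{q_\ell}{1-q_\ell}.
\]
This is exactly \(\log\mathbf m=A\,(c,\boldsymbol\theta)^\top\), with \(A\) the matrix in (iii). By (ii) the left side is identified from the mixture. Full column rank of \(A\) makes \((c,\boldsymbol\theta)\) unique, namely \((A^\top A)^{-1}A^\top\log\mathbf m\).

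\emph{Step 3: recover \(\qb\).} Apply the sigmoid to each coordinate, \(q_\ell=\sigma(\theta_\ell)\); this is a bijection from \(\mathbb R\) onto \((0,1)\). So if two parameter values in the admissible range produce the same identified weights, they have equal \(\boldsymbol\theta\) and therefore equal \(\qb=\pib_{\rm red}\). The constant \(c\) is then a redundant consistency check. For the final claim, the observable mixture depends on \((\GenPj,\MeasPj)\) at a sink node \(j\) only through \(\rho_j=1-(1-\GenPj)(1-\MeasPj)\). The reason is that every equivalence class's weight factorizes through \(\rho_j\) at sink nodes. So any two pairs with the same \(\rho_j\), for example \((\GenPj,\MeasPj)\) and \((\rho_j,0)\) perturbed slightly within \((0,1)\), give identical distributions, and the separate probabilities are not identifiable.

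\emph{Main obstacle.} The delicate point is Step 1: verifying that the aggregate weight of a singleton class factorizes exactly as claimed. This relies on sink equivalence being the only merging for these classes, which is precisely the role of the hypothesis that the \(\ass_k\) are singletons under the reduced parameterization. Once that holds, the rest is linear algebra.
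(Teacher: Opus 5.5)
Your proposal is correct and follows the paper's proof essentially step for step: you write the selected weights in product form over the reduced coordinates, take logarithms to obtain the linear log-odds system with an intercept column, invert it using full column rank of $A$, and recover the probabilities by the inverse logit. Two parts go slightly beyond the paper's one-line justification without changing the route: you derive the sink factor $\rho_j$ explicitly by summing the three equivalent pairs, and you argue that every class weight, and hence the whole mixture, depends on the sink parameters only through $\rho_j$.
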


\begin{proof}
Let $\mathcal N := [d]\setminus \mathcal S$ denote the non-sink nodes and define
$s_j := z_j \lor w_j$ for $j\in\mathcal S$. For an assignment $\ass_k$, let
\[
y_k
=
\big(
\gen^{(k)}_{\mathcal N},
\meas^{(k)}_{\mathcal N},
s^{(k)}_{\mathcal S}
\big)
\in\{0,1\}^{2d-|\mathcal S|}.
\]
The reduced prior mass of $\ass_k$ can be written as
\[
m_k
=
\prod_{\ell}
q_\ell^{y_{k\ell}}(1-q_\ell)^{1-y_{k\ell}} .
\]
By assumption (ii), these weights $m_k$ are uniquely determined from the
observed mixture.

By assumption (i), all reduced Bernoulli coordinates satisfy
$q_\ell\in(0,1)$. Hence all selected weights are positive and logarithms are
well-defined. Taking logarithms and rewriting in log-odds form gives
\[
\log m_k
=
c+\sum_{\ell} y_{k\ell}\theta_\ell,
\]
where
\[
\theta_\ell := \log\frac{q_\ell}{1-q_\ell},
\qquad
c:=\sum_\ell \log(1-q_\ell).
\]
Equivalently, in the original notation,
\[
\theta
=
\big(c,(u_j)_{j\in\mathcal N},(v_j)_{j\in\mathcal N},(r_j)_{j\in\mathcal S}\big)^\top,
\]
with
\[
u_j := \log\frac{\GenPj}{1-\GenPj},
\qquad
v_j := \log\frac{\MeasPj}{1-\MeasPj},
\qquad
r_j := \log\frac{\rho_j}{1-\rho_j}.
\]
Stacking the equations gives
\[
A\theta=b,
\qquad
b=(\log m_1,\dots,\log m_m)^\top .
\]
By assumption (iii), $A$ has full column rank, so $\theta$ is uniquely
determined. Applying the inverse logit map to the non-intercept coordinates
recovers
\[
\GenPj,\MeasPj \quad (j\in\mathcal N),
\qquad
\rho_j \quad (j\in\mathcal S).
\]
Thus $\pib_{\rm red}$ is identifiable on the open parameter space.

Finally, the reduction at sink nodes identifies only
\[
\rho_j=P(\gen_j=1\lor\meas_j=1),
\]
because mechanistic and measurement interventions at a sink induce the same
reduced assignment-conditional distribution. Thus $\GenPj$ and $\MeasPj$ are
not separately identifiable at sink nodes. 
\end{proof}

\begin{remark}[Boundary cases]
The assumption $q_\ell\in(0,1)$ is used only to make the log-odds
parameterization well-defined. Some boundary cases can also be handled.

For a reduced Bernoulli coordinate $q_\ell$, the product form of the assignment
weights implies
\[
q_\ell=0
\quad\Rightarrow\quad
m_k=0 \text{ for every selected assignment with } y_{k\ell}=1,
\]
and
\[
q_\ell=1
\quad\Rightarrow\quad
m_k=0 \text{ for every selected assignment with } y_{k\ell}=0.
\]
Converses hold when the selected assignments contain the relevant opposing
indicator values and all other coordinates are compatible with positive mass.
Under such coverage conditions, boundary coordinates can be identified from the
zero pattern of the identifiable weights.

After fixing the identified boundary coordinates, one discards assignments whose
weights are forced to be zero and restricts the matrix $A$ to the columns
corresponding to the remaining free Bernoulli coordinates, together with the
intercept. On this restricted problem the remaining coordinates lie in $(0,1)$,
so the log-odds argument above applies. If the restricted assignment matrix has
full column rank, the remaining coordinates are uniquely identified.

In the anomaly setting of this paper, we assume that the all-clean assignment
has non-negligible, indeed dominant, mass. This excludes boundary cases with
$q_\ell=1$, since any such coordinate would assign zero probability to the
all-clean assignment.
\end{remark}

\begin{lemma}[Sufficient condition for identifiable selected weights]
For a set of fixed densities $\{ p_k \}_{1 \leq k \leq K}$, and a subset
$\{p_j\}_{j \in \mathcal{J}}$, assume that
\begin{enumerate}
    \item[(a)] the selected densities $\{p_j\}_{j\in\mathcal J}$ are linearly independent, and
    \item[(b)] their span intersects the span of the remaining densities only trivially:
    \[
    \mathrm{span}\{p_j:j \in \mathcal{J}\}
    \cap
    \mathrm{span}\{p_j:j \notin \mathcal{J}\}
    =
    \{0\}.
    \]
\end{enumerate}
Then the mixture weights $m_j$ of the selected components, $j\in\mathcal J$, are
uniquely identifiable from the full mixture.

A stronger but simpler sufficient condition is global linear independence of the
full density family $\{ p_k \}_{1 \leq k \leq K}$.
\label{lem:selected-weights}
\end{lemma}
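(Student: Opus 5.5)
The plan is to argue by contradiction from two mixture representations of the same observed density. Suppose the full mixture $p=\sum_{k=1}^K m_k p_k$ also equals $\sum_{k=1}^K m'_k p_k$ for a second weight vector $m'$. Subtracting the two representations gives $\sum_k (m_k-m'_k)p_k = 0$ as a function of $\xb$. We then split this identity into the selected and unselected parts,
\[
f := \sum_{j\in\mathcal J}(m_j-m'_j)\,p_j
= -\sum_{k\notin\mathcal J}(m_k-m'_k)\,p_k .
\]

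The left-hand side lies in $\mathrm{span}\{p_j:j\in\mathcal J\}$, and the right-hand side lies in the span of the remaining densities. By assumption (b), their common value $f$ must therefore be the zero function. Once $f=0$, assumption (a) gives $m_j-m'_j=0$ for all $j\in\mathcal J$. So the selected weights agree across any two representations, which is the claimed identifiability.

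Two points need care. First, the identity $\sum_k (m_k-m'_k)p_k=0$ must be read as equality of functions, almost everywhere with respect to the dominating measure. This matches how equivalence of assignments was defined, and linear independence and the spans are then taken in the same function space. Second, the unselected weights $m_k$, $k\notin\mathcal J$, may fail to be unique. The argument never uses them individually, so this causes no problem. For the simpler sufficient condition, global linear independence of $\{p_k\}_{k=1}^K$ implies (a) directly. It also implies (b): any $f$ in the intersection of the two spans gives a linear relation among all the $p_k$, so every coefficient is zero and $f=0$.

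There is no real obstacle here; the argument is elementary. The only subtle step is the function-space bookkeeping: the representations must be compared as functions, and the unselected weights must be left free rather than assumed unique.
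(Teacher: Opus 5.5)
Your proposal is correct and matches the paper's proof: subtract the two mixture representations, split the sum into selected and unselected parts, use (b) to force the selected combination to vanish, then use (a) to conclude $m_j=m'_j$ for $j\in\mathcal J$. The only difference is in the global-linear-independence remark: the paper applies the argument directly to the full sum to get $m_k=m'_k$ for all $k$, whereas you show that global independence implies (a) and (b); both are immediate.
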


\begin{proof}
Assume that two sets of mixture weights induce the same mixture density:
\[
\sum_{k=1}^K m_k p_k
=
\sum_{k=1}^K m'_k p_k .
\]
Then
\[
\sum_{k=1}^K (m_k-m'_k)p_k = 0.
\]
Splitting the sum into selected and remaining components gives
\[
\sum_{j\in\mathcal J} (m_j-m'_j)p_j
=
-
\sum_{k\notin\mathcal J} (m_k-m'_k)p_k .
\]
The left-hand side lies in
$\mathrm{span}\{p_j:j\in\mathcal J\}$, while the right-hand side lies in
$\mathrm{span}\{p_k:k\notin\mathcal J\}$. By assumption (b), their common value
must lie in the trivial intersection of these two spans, and hence must be zero:
\[
\sum_{j\in\mathcal J} (m_j-m'_j)p_j = 0.
\]
By assumption (a), the selected densities are linearly independent. Therefore
\[
m_j-m'_j=0
\qquad\text{for all } j\in\mathcal J.
\]
Thus the selected mixture weights are uniquely identifiable.

If the full family $\{p_k\}_{k=1}^K$ is linearly independent, then the same
argument applied to the full sum immediately yields $m_k=m'_k$ for every
$k=1,\dots,K$. In particular, the selected weights are identifiable.
\end{proof}

\subsection{Generic identifiability for linear Gaussian SCMs}

\label{apx:linear-gaussian}

We show in the following that for linear Gaussian SCMs, and Gaussian outlier distributions, the reduced intervention probabilities are generically identifiable.
\begin{theorem}[Generic identifiability in linear-Gaussian SCMs]
Fix a known DAG $G=(V,E)$ over $d$ variables with a known topological order.
Consider the linear-Gaussian SCM
\[
X_j^*=\sum_{k\in \mathrm{Pa}(j)} b_{kj}X_k^*+\varepsilon_j,
\qquad
\varepsilon_j\sim \mathcal N(0,\sigma_j^2),
\qquad \sigma_j^2>0,
\]
with mutually independent noises. Observations satisfy $X_j=X_j^*$ unless
a measurement intervention occurs.

For each node $j$, assume mechanistic and measurement replacement variables
share the same Gaussian distribution
\[
O_j^{\mathrm{mech}},O_j^{\mathrm{meas}}
\sim \mathcal N(\alpha_j,\tau_j^2),
\qquad \tau_j^2>0,
\]
independent of all other variables.

For each assignment $a=(z,w)$, let $p_a(x;\theta)$ denote the corresponding
assignment-conditional density, where
\[
\theta=(b_{kj},\sigma_j,\alpha_j,\tau_j)\in\Theta
\]
and the admissible parameter space is
\[
\Theta
=
(\mathbb R)^{|E|}
\times (0,\infty)^d
\times \mathbb R^d
\times (0,\infty)^d.
\]
Thus the coefficients \(b_{kj}\) and replacement means \(\alpha_j\) are unrestricted real
parameters, while standard deviations are strictly positive.

Define structural equivalence by
\[
a\sim a'
\quad\Longleftrightarrow\quad
p_a(x;\theta)=p_{a'}(x;\theta)
\quad\text{for all }\theta\in\Theta.
\]

Let $S$ denote the set of sink nodes of $G$.

Then, outside a Lebesgue-null subset of $\Theta$, the assignment densities
indexed by structural equivalence classes are linearly independent. Consequently,
the corresponding aggregate mixture weights are identifiable. Moreover, the clean
assignment together with singleton interventions yields a full-rank system, and
therefore the reduced intervention probabilities
\[
\qb
=
\big(
(\GenPj)_{j\in [d]\setminus\mathcal S},
(\MeasPj)_{j\in [d]\setminus\mathcal S},
(\rho_j)_{j\in\mathcal S}
\big),
\qquad
\rho_j := P(\gen_j=1\lor\meas_j=1)
\]
are generically identifiable for reduced Bernoulli coordinates satisfying \(q_\ell\in[0,1)\).
\label{thm:linear}
\end{theorem}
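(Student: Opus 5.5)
The plan is to show that every assignment-conditional density is a multivariate Gaussian whose parameters are polynomial in \(\theta\). Distinct Gaussians are linearly independent, and a nonzero polynomial vanishes only on a Lebesgue-null set. That yields linear independence of the class densities off a null set, and hence identifiable aggregate weights via Lemma~\ref{lem:selected-weights}. A direct computation on the clean and singleton rows then recovers \(\qb\).

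\textbf{Step 1 (Gaussian form).} Fix \(a=(\genb,\measb)\). For \(j\in\Genc\), set \(X_j^*=O_j^{\mathrm{mech}}\); otherwise keep the linear equation. For \(j\in\Mc\), replace the readout by an independent \(O_j^{\mathrm{meas}}\). Integrating out the latent \(\bar x_j\) in \eqref{eq:margin} amounts to marginalizing a jointly Gaussian vector. Hence \(p_a(\cdot;\theta)=\mathcal N(\mu_a(\theta),\Sigma_a(\theta))\). Writing \(X^*=(I-B_a)^{-\top}\eta\), with \(B_a\) strictly triangular in the known topological order and \(\eta\) the vector of noises or replacements, shows two things. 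The entries of \(\mu_a\) are polynomials in \((b,\alpha)\). The entries of \(\Sigma_a\) are polynomials in \((b,\sigma^2,\tau^2)\). Moreover \(\Sigma_a\) is positive definite on \(\Theta\).

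\textbf{Step 2 (generic distinctness and independence).} Since a Gaussian density determines \((\mu,\Sigma)\), the relation \(a\sim a'\) holds iff \(\mu_a\equiv\mu_{a'}\) and \(\Sigma_a\equiv\Sigma_{a'}\) as polynomial maps on \(\Theta\). So for each pair of distinct classes, some coordinate of \((\mu_a-\mu_{a'},\Sigma_a-\Sigma_{a'})\) is a nonzero polynomial. Because \(\Theta\) is open, the zero set of that polynomial is Lebesgue-null. There are only finitely many classes, so the union \(N\) of these zero sets is null.

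For \(\theta\notin N\), the class representatives are pairwise distinct Gaussians. Pairwise distinct Gaussians are linearly independent, a classical fact (Yakowitz--Spragins) that can be checked by restricting to a generic line \(x=t v\) and comparing the Gaussian exponents \(e^{-t^2 v^\top\Sigma^{-1}v/2+\dots}\) as \(t\to\infty\). Lemma~\ref{lem:selected-weights}, in its global form, then gives identifiability of all aggregate weights.

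\textbf{Step 3 (rank and recovery).} I take the clean assignment and the singletons: \(\mathrm{mech}_j,\mathrm{meas}_j\) for \(j\notin\mathcal S\), and \(s_j\) for \(j\in\mathcal S\). For this to yield the clean product formula, each of these must be a singleton class under the reduced parameterization. I expect this to be the main obstacle. It requires exhibiting, for each such assignment and each other assignment \(a'\), a coordinate polynomial that is not identically equal.

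\begin{itemize}
\item For \(\mathrm{mech}_j\) versus \(\mathrm{meas}_j\) with \(j\notin\mathcal S\), pick a child \(c\). Then \(\mathrm{Cov}(X_j,X_c)\) equals \(b_{jc}\tau_j^2\) under \(\mathrm{mech}_j\) and \(0\) under \(\mathrm{meas}_j\).
\item For a general \(a'\), I would argue from the support of \(a'\). Any intervened coordinate outside \(\{j\}\) changes either a variance (e.g. \(\sigma_k^2\) is replaced by \(\tau_k^2\) in \(\mathrm{Var}(X_k)\)) or a mean (\(\alpha_k\) appears). The combination \((z_j,w_j)=(1,1)\) at a non-sink differs from \(\mathrm{meas}_j\) in the child mean, \(b_{jc}\alpha_j\) versus \(b_{jc}\mu_j\).
\end{itemize}

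Given singleton classes, the weights are \(m_0=\prod_\ell(1-q_\ell)>0\) (since \(q_\ell<1\)) and \(m_\ell=q_\ell\prod_{\ell'\neq\ell}(1-q_{\ell'})\). The corresponding matrix \(A\) has rows \((1,0,\dots,0)\) and \((1,e_\ell)\), so it has full column rank. For \(q\in(0,1)\), Theorem~\ref{thm:identifiability} applies directly. To cover \(q_\ell=0\) as well, I avoid logarithms and use \(m_\ell/m_0=q_\ell/(1-q_\ell)\), which inverts uniquely to \(q_\ell\in[0,1)\).
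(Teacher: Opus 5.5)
Your overall route is the paper's: Gaussian form with polynomial $(\mu_a,\Sigma_a)$, a nonzero coordinate polynomial for each pair of inequivalent classes, Yakowitz--Spragins together with Lemma~\ref{lem:selected-weights}, then the clean and singleton rows and the inversion $m_\ell/m_0=q_\ell/(1-q_\ell)$. In Step~3 you argue non-equivalence symbolically as polynomial identities, whereas the paper evaluates at a witness point ($b\equiv 1$, $\sigma=\tau=1$, $\alpha$ algebraically independent). Both work.

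\textbf{The gap.} Your Step~3 case analysis leaves out exactly the delicate case. For a non-sink $j$, you never compare the mechanistic singleton $\mathrm{mech}_j$ with the double intervention $(z_j,w_j)=(1,1)$ with nothing else intervened.
\begin{itemize}
\item Your general bullet does not cover it, because this assignment has no intervened coordinate outside $\{j\}$.
\item Your heuristic ``a mean or a variance changes'' is false here. In both assignments $X_j^*=O_j^{\mathrm{mech}}$ propagates identically, so $E[X_j]=\alpha_j$ and $\mathrm{Var}(X_j)=\tau_j^2$. The joint law of $(X_k)_{k\neq j}$ is also identical.
\end{itemize}
Until this case is settled, you have not shown that $\mathrm{mech}_j$ is a singleton class. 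So you cannot yet claim its aggregate weight has the product form $q_\ell\prod_{r\neq\ell}(1-q_r)$ behind the row $(1,e_\ell)$.

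\textbf{The fix.} The separating statistic is the cross-covariance with a child $c$. Under $\mathrm{mech}_j$, $X_j=X_j^*=O_j^{\mathrm{mech}}$, so $\mathrm{Cov}(X_j,X_c)=T_{jc}(b)\,\tau_j^2\not\equiv 0$. Under $(1,1)$, the observed $X_j=O_j^{\mathrm{meas}}$ is independent of every other source, so the covariance is $0$. This is the same computation as your mech-versus-meas bullet. The paper handles this case as a separate paragraph for this reason.

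\textbf{Minor points.}
\begin{itemize}
\item In a general DAG the child coefficient is the total effect $T_{jc}(b)=\sum_{\pi: j\leadsto c}\prod_{e\in\pi}b_e$, not $b_{jc}$, unless you take $c$ to be the earliest child of $j$. It is still a nonzero polynomial, so nothing breaks. Also, under $\mathrm{meas}_j$ the child mean is simply $0$; there is no separate $\mu_j$ in this model.
\item Your ``outside $\{j\}$'' bullet is best made precise as follows. Whenever $a'$ intervenes at $k$, $E_{a'}[X_k]=\alpha_k$ identically, regardless of its other interventions. Under the singleton at $j\neq k$, or under the clean assignment, the mean of $X_k$ is $0$ or $T_{jk}(b)\alpha_j$, never the polynomial $\alpha_k$.
\item Together with $E[X_j]=\alpha_j$ versus $0$, that observation also covers clean versus non-clean, which you did not state explicitly.
\end{itemize}
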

We first show in the following lemma, that for an arbitrary DAG, that the clean assignment and all singleton assignments, are pairwise not structurally equivalent, and also not structurally equivalent to any multi-intervention assignment. It is sufficient to show that we can construct one admissible parameter setting such that the assignment-conditional densities differ.
Together with the standard linear independence of finite families of distinct
Gaussian densities, this will imply that the selected densities are linearly
independent and have trivial intersection with the span of the remaining
components.

\begin{lemma}[Clean and singleton assignments are not merged with multi-intervention assignments]

Fix a known DAG $G$ with topological order $1,\dots,d$, and let $S$ denote
the set of sink nodes. Consider the linear-Gaussian SCM
\[
X_j^*
=
\sum_{k\in \mathrm{Pa}(j)} b_{kj}X_k^*
+
\varepsilon_j,
\qquad
\varepsilon_j\sim \mathcal N(0,\sigma_j^2),
\qquad
\sigma_j^2>0,
\]
with clean measurements $X_j=X_j^*$ unless a measurement intervention occurs.
For each node $j$, let mechanistic and measurement replacement variables have
the same distribution
\[
O_j^{\mathrm{mech}},O_j^{\mathrm{meas}}
\sim
\mathcal N(\alpha_j,\tau_j^2),
\qquad
\tau_j^2>0.
\]

Let $a_0$ denote the clean assignment. For $j\notin S$, let
$a_j^{\mathrm{mech}}$ denote the singleton mechanistic intervention at $j$, and
$a_j^{\mathrm{meas}}$ the singleton measurement intervention at $j$. For
$j\in S$, let $a_j^{\mathrm{sink}}$ denote the reduced singleton intervention
$s_j=z_j\vee w_j=1$.

There exists an admissible parameter setting $\theta_0$ such that the
assignment-conditional densities corresponding to
\[
a_0,\qquad
a_j^{\mathrm{mech}}\;(j\notin S),\qquad
a_j^{\mathrm{meas}}\;(j\notin S),\qquad
a_j^{\mathrm{sink}}\;(j\in S)
\]
are distinct, except for the intended sink-node collapse between $z_j$ and $w_j$ when $j\in S$, and are also distinct from any assignment involving two or more reduced interventions.

Hence, the selected assignments form distinct structural equivalence classes. 
\label{lem:singleton-not-merged}
\end{lemma}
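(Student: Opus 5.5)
The approach is to exploit the fact that every assignment-conditional density in the linear-Gaussian model is itself a multivariate Gaussian. Two such densities coincide iff their mean vectors and covariance matrices coincide. So it suffices to exhibit one admissible \(\theta_0\) for which the mean vectors, and where necessary one covariance entry, separate the listed assignments from each other and from every multi-intervention assignment. Equality of densities for all \(\theta\) fails as soon as it fails at \(\theta_0\), so distinctness at \(\theta_0\) already gives distinct structural equivalence classes.

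\emph{Step 1 (Gaussianity and the mean formula).} Under any assignment \(a=(z,w)\), the latent vector is a linear transformation of the independent Gaussians \(\varepsilon_j\) (for \(z_j=0\)) and \(O_j^{\mathrm{mech}}\) (for \(z_j=1\)). The observed vector replaces the coordinates with \(w_j=1\) by independent Gaussians \(O_j^{\mathrm{meas}}\). Marginalizing the latent pre-replacement values as in \eqref{eq:margin} is Gaussian integration, so the result is Gaussian.

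The mean is explicit:
\begin{itemize}
\item \(\mathbb E[X_j]=\alpha_j\) if \(z_j=1\) or \(w_j=1\);
\item otherwise \(\mathbb E[X_j]=\sum_{k\in\mathrm{Pa}(j)} b_{kj}\,\mathbb E[X_k^*]\), computed recursively with \(\mathbb E[X_k^*]=\alpha_k\) if \(z_k=1\), and via the same recursion otherwise.
\end{itemize}
In particular, \(\mathbb E[X_j]\) is \(\alpha_j\) if \(j\in I(a):=\{j:z_j\vee w_j=1\}\). Otherwise it is a sum over directed paths into \(j\) from mechanistically intervened ancestors \(k\), each term being \(\alpha_k\) times a product of edge coefficients.

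\emph{Step 2 (choice of \(\theta_0\)).} Take all \(b_{kj}\neq 0\), all \(\sigma_j=\tau_j=1\), and \(\alpha_1,\dots,\alpha_d\) linearly independent over the field generated by the \(b_{kj}\). For instance, fix rational nonzero \(b_{kj}\) and take the \(\alpha_j\) to be algebraically independent transcendentals.

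Then for \(j\notin I(a)\), \(\mathbb E[X_j]\) is a \(\mathbb Q\)-combination of \(\alpha_k\) with \(k\neq j\), so it cannot equal \(\alpha_j\). Hence the mean vector determines \(I(a)\) exactly. It also determines, for each \(j\notin I(a)\), whether some mechanistically intervened ancestor has a nonvanishing path sum into \(j\). Nonvanishing can be ensured by choosing the \(b\)'s generically (e.g.\ algebraically independent as well), so that distinct path polynomials do not cancel.

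\emph{Step 3 (case analysis).}
\begin{itemize}
\item The clean assignment \(a_0\) is the only one with \(I=\emptyset\).
\item Every assignment with two or more reduced interventions at distinct nodes has \(|I|\ge2\). This separates it from \(a_0\), all singletons, and \(a_j^{\mathrm{sink}}\).
\item For a sink \(j\), the assignments with \(I=\{j\}\) are exactly the collapsed class \(s_j=1\), which is the intended collapse.
\item For a non-sink \(j\), three assignments have \(I=\{j\}\): mech only, meas only, and the multi-intervention \(z_j=w_j=1\). Fix a child \(c\) of \(j\).
  \begin{itemize}
  \item Under meas only, \(\mathbb E[X_c]=0\), whereas under mech and under \((1,1)\), \(\mathbb E[X_c]=b_{jc}\alpha_j\neq0\). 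So meas is separated from both.
  \item Mech versus \((1,1)\) is separated by the covariance: under mech, \(\mathrm{Cov}(X_j,X_c)=b_{jc}\tau_j^2\neq0\); under \((1,1)\), \(X_j=O_j^{\mathrm{meas}}\) is independent of everything, so the covariance is \(0\).
  \end{itemize}
\end{itemize}

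The main obstacle I expect is Step 2: making rigorous that, for \(j\notin I(a)\), the path-sum expression for \(\mathbb E[X_j]\) never coincides with \(\alpha_j\), and that the nonzero path sums used for children do not cancel. I would handle this with the algebraic-independence choice of \(\alpha\) and \(b\). If that feels heavy, an alternative is a scale-separation choice, \(\alpha_j=M^{j}\) for large \(M\) with \(|b_{kj}|\) bounded, which makes \(\alpha_j\) dominate any combination of other \(\alpha_k\).
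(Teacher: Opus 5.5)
Your route is essentially the paper's. You take unit variances and algebraically independent replacement means, so that the mean vector pins down the reduced intervention set $I(a)$; your ``$\mu_j=\alpha_j$ iff $j\in I(a)$'' formulation is a slightly cleaner version of the paper's earliest-differing-node step. You then use a child's mean to separate the measurement singleton from the mechanistic singleton and from $(z_j,w_j)=(1,1)$, and $\mathrm{Cov}(X_j,X_c)$ to separate the mechanistic singleton from $(1,1)$. The Step~2 claim about detecting nonvanishing path sums from ancestors is not needed anywhere.

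One step is wrong as written, though. For a child $c$ of $j$, the coefficient of $X_j^*$ in $X_c^*$ is not $b_{jc}$. It is the total path sum $\sum_{\pi}\prod_{e\in\pi} b_e$ over all directed paths $\pi$ from $j$ to $c$. Under the mechanistic singleton, $\mathbb{E}[X_c]$ and $\mathrm{Cov}(X_j,X_c)$ are this sum times $\alpha_j$ and $\tau_j^2$ respectively. With your suggested choice of arbitrary nonzero rational $b_{kj}$, this sum can vanish. Take edges $j\to c$ and $j\to m\to c$ with $b_{jc}=b_{jm}=1$ and $b_{mc}=-1$. Then both your child-mean test and your covariance test at $c$ return $0$, and the Step~3 separations at that child fail.

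The repair is easy. One option is to take all $b_{kj}=1$, as the paper does, or any positive $b$'s; then the coefficient is a positive number (for $b\equiv 1$, the number $\beta_{jc}\ge 1$ of directed paths from $j$ to $c$). The other option is to choose $c$ as the topologically earliest child of $j$. For that child the direct edge is the only directed path from $j$, so your formulas $b_{jc}\alpha_j$ and $b_{jc}\tau_j^2$ become exact. With either fix your proof is complete and coincides with the paper's.
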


\begin{proof}
We exhibit one admissible parameter setting at which the relevant
assignment-conditional distributions are distinct. This is sufficient to prove
non-structural-equivalence, since structural equivalence means equality of the
assignment-conditional densities for all admissible parameter values.

Choose
\[
\sigma_j^2=\tau_j^2=1
\qquad \text{for all } j.
\]
Choose every structural coefficient on an edge to be
\[
b_{kj}=1 .
\]
% This choice is admissible because all edge coefficients are allowed to be
% nonzero. 
Let
\[
\alpha_1,\dots,\alpha_d
\]
be algebraically independent over $\mathbb Q$. Such a choice is possible because
$\mathbb R$ contains finite sets of arbitrary size that are algebraically
independent over $\mathbb Q$. In particular, the $\alpha_j$ are linearly
independent over $\mathbb Q$, and each $\alpha_j$ is nonzero.

For any assignment $a=(z,w)$, write
\[
R(a):=\{j:z_j\vee w_j=1\}
\]
for the set of nodes with a reduced intervention.

We first record a consequence of the linear SCM under the above parameter
choice. For every assignment $a$ and every node $r$, the mean
$\mathbb E_a[X_r]$ is an integer linear combination of replacement means
\[
\{\alpha_\ell:\ell\preceq r\},
\]
where $\ell\preceq r$ means that $\ell=r$ or $\ell$ is an ancestor of $r$.
Indeed, this follows by induction along the topological order. If $w_r=1$, then
$X_r=O_r^{\mathrm{meas}}$, so
\[
\mathbb E_a[X_r]=\alpha_r.
\]
If $w_r=0$ and $z_r=1$, then $X_r=X_r^*=O_r^{\mathrm{mech}}$, so again
\[
\mathbb E_a[X_r]=\alpha_r.
\]
If $w_r=z_r=0$, then $X_r=X_r^*$ and
\[
\mathbb E_a[X_r]
=
\sum_{k\in \mathrm{Pa}(r)} \mathbb E_a[X_k^*],
\]
so the claim follows from the induction hypothesis for the parents of $r$.

Consequently, let $a$ and $a'$ be two assignments, and let $r$ be the earliest
node in topological order at which their reduced intervention indicators differ:
\[
z_r\vee w_r \neq z'_r\vee w'_r .
\]
Exactly one of the two assignments has a reduced intervention at $r$. The
assignment with the reduced intervention contributes the direct term
$\alpha_r$ to the mean of $X_r$, while the other assignment does not. Any
contribution to the mean of $X_r$ coming from strict ancestors of $r$ is an
integer linear combination of replacement means $\alpha_\ell$ with
$\ell\neq r$. Therefore
\[
\mathbb E_a[X_r]-\mathbb E_{a'}[X_r]
=
\pm \alpha_r+\sum_{\ell\neq r} c_\ell\alpha_\ell
\]
for some integers $c_\ell$. Since the $\alpha_\ell$ are linearly independent
over $\mathbb Q$, this difference is nonzero.

We now compare the selected assignments with arbitrary assignments.

\paragraph{Case 1: The clean assignment.}
Under the clean assignment $a_0$, all structural noises have mean zero and no
replacement variables are used. Hence
\[
\mathbb E_{a_0}[X_j]=0
\qquad \forall j.
\]
Let $a$ be any non-clean assignment. Then $R(a)\neq\emptyset$. Let $r$ be the
earliest node in $R(a)$. Since no strict predecessor of $r$ has a reduced
intervention, no replacement mean from an ancestor can propagate into $X_r$.
If $w_r=1$, then
\[
\mathbb E_a[X_r]=\alpha_r.
\]
If $w_r=0$ and $z_r=1$, then also
\[
\mathbb E_a[X_r]=\alpha_r.
\]
In both cases $\mathbb E_a[X_r]=\alpha_r\neq 0$, whereas
$\mathbb E_{a_0}[X_r]=0$. Therefore
\[
p_a(\cdot;\theta_0)\neq p_{a_0}(\cdot;\theta_0),
\]
and no non-clean assignment is structurally equivalent to the clean assignment.

\paragraph{Case 2: A selected singleton versus an assignment with a different reduced intervention set.}
Fix a selected singleton reduced assignment at node $j$. If $j\notin S$, this
selected singleton is either $a_j^{\mathrm{mech}}$ or
$a_j^{\mathrm{meas}}$. If $j\in S$, it is the reduced sink singleton
$a_j^{\mathrm{sink}}$, represented by one of the assignments with
$s_j=z_j\vee w_j=1$ and no other reduced interventions.

Let $a$ be any assignment such that
\[
R(a)\neq \{j\}.
\]
Let
\[
r
:=
\min_{\text{topological order}}
\bigl(R(a)\triangle \{j\}\bigr)
\]
be the earliest node at which the reduced intervention pattern of $a$ differs
from that of the selected singleton. By the preliminary argument,
\[
\mathbb E_a[X_r]-\mathbb E_{\mathrm{single}}[X_r]
=
\pm \alpha_r+\sum_{\ell\neq r} c_\ell\alpha_\ell
\]
for some integers $c_\ell$. The coefficient of $\alpha_r$ is nonzero. Hence,
by linear independence of $\alpha_1,\dots,\alpha_d$ over $\mathbb Q$,
\[
\mathbb E_a[X_r]\neq \mathbb E_{\mathrm{single}}[X_r].
\]
Thus the observed mean vectors differ, and therefore
\[
p_a(\cdot;\theta_0)\neq p_{\mathrm{single}}(\cdot;\theta_0).
\]
So $a$ is not structurally equivalent to the selected singleton.

This covers all assignments whose reduced intervention set is not exactly
$\{j\}$, including assignments that remove the singleton intervention at $j$,
assignments that add interventions at other nodes, and assignments that do both.

\paragraph{Case 3: A selected singleton versus assignments with the same reduced intervention set.}
It remains to compare assignments whose reduced intervention set is exactly
\[
R(a)=\{j\},
\]
but whose intervention type at $j$ may differ from the selected singleton.

\paragraph{Case 3a: Sink nodes.}
First suppose $j\in S$. Since $j$ is a sink node, it has no descendants in
$G[X^*]$. A mechanistic intervention at $j$ replaces $X_j^*$ by
$O_j^{\mathrm{mech}}$, and the observation satisfies $X_j=X_j^*$ unless a
measurement intervention occurs. A measurement intervention at $j$ replaces
$X_j$ by $O_j^{\mathrm{meas}}$. The two replacement variables have the same
distribution by assumption, and no downstream observed coordinates are affected
because $j$ has no descendants. Thus the assignments with
\[
(z_j,w_j)=(1,0),\quad (0,1),\quad (1,1)
\]
all induce the same observed distribution. This is exactly the intended
sink-node collapse represented by
\[
s_j=z_j\vee w_j.
\]

\paragraph{Case 3b: Non-sink nodes.}
Now suppose $j\notin S$. Then $j$ has at least one child in $G[X^*]$. Choose
\[
k\in \mathrm{Ch}(j).
\]
Because all edge coefficients have been set to $1$, the coefficient of $X_j^*$
in the linear expansion of $X_k^*$ is the number of directed paths from $j$ to
$k$. Since $k$ is a child of $j$, this number is at least one. Denote this
strictly positive integer by
\[
\beta_{jk}>0.
\]
Since no assignment considered in this case intervenes on $k$, we have
$X_k=X_k^*$.

We compare the three possible assignments with reduced intervention set
$\{j\}$.

\emph{Mechanistic singleton versus measurement singleton.}
Under the measurement singleton $a_j^{\mathrm{meas}}$, the latent variable
$X_j^*$ is not intervened on. Since all structural noises have mean zero and
there are no other interventions,
\[
\mathbb E_{a_j^{\mathrm{meas}}}[X_j^*]=0.
\]
Hence no mean shift is propagated from $j$ to $k$, and
\[
\mathbb E_{a_j^{\mathrm{meas}}}[X_k]=0.
\]
Under the mechanistic singleton $a_j^{\mathrm{mech}}$, the latent variable
$X_j^*$ is replaced by $O_j^{\mathrm{mech}}$ with mean $\alpha_j$. Therefore
the mean shift propagated to $k$ is
\[
\mathbb E_{a_j^{\mathrm{mech}}}[X_k]
=
\beta_{jk}\alpha_j.
\]
Since $\beta_{jk}>0$ and $\alpha_j\neq 0$, this mean is nonzero. Hence
$a_j^{\mathrm{mech}}$ and $a_j^{\mathrm{meas}}$ induce different observed
distributions and are not structurally equivalent.

\emph{Both interventions at $j$ versus the measurement singleton.}
Let $a_j^{\mathrm{both}}$ denote the assignment with
\[
z_j=w_j=1
\]
and no other reduced interventions. Under $a_j^{\mathrm{both}}$, the latent
variable $X_j^*$ is replaced by $O_j^{\mathrm{mech}}$, so the same mean shift
is propagated to $k$:
\[
\mathbb E_{a_j^{\mathrm{both}}}[X_k]
=
\beta_{jk}\alpha_j.
\]
Under $a_j^{\mathrm{meas}}$,
\[
\mathbb E_{a_j^{\mathrm{meas}}}[X_k]=0.
\]
Thus $a_j^{\mathrm{both}}$ and $a_j^{\mathrm{meas}}$ induce different observed
distributions and are not structurally equivalent.

\emph{Both interventions at $j$ versus the mechanistic singleton.}
Under the mechanistic singleton $a_j^{\mathrm{mech}}$, the observed coordinate
satisfies
\[
X_j=X_j^*=O_j^{\mathrm{mech}}.
\]
In the linear expansion of $X_k$, the coefficient multiplying $X_j^*$ is
$\beta_{jk}$, while all remaining terms are independent of
$O_j^{\mathrm{mech}}$. Hence
\[
\operatorname{Cov}_{a_j^{\mathrm{mech}}}(X_j,X_k)
=
\operatorname{Cov}(O_j^{\mathrm{mech}},\beta_{jk}O_j^{\mathrm{mech}})
=
\beta_{jk}\tau_j^2
\neq 0.
\]
Under $a_j^{\mathrm{both}}$, the observed coordinate is
\[
X_j=O_j^{\mathrm{meas}},
\]
whereas the latent variable propagating to descendants is
\[
X_j^*=O_j^{\mathrm{mech}}.
\]
The variables $O_j^{\mathrm{meas}}$ and $O_j^{\mathrm{mech}}$ are independent,
and all other terms in $X_k$ are independent of $O_j^{\mathrm{meas}}$. Hence
\[
\operatorname{Cov}_{a_j^{\mathrm{both}}}(X_j,X_k)=0.
\]
Thus $a_j^{\mathrm{both}}$ and $a_j^{\mathrm{mech}}$ induce different covariance
matrices and are not structurally equivalent.

Therefore, for every non-sink node $j$, the mechanistic singleton, the
measurement singleton, and the assignment with both interventions at $j$ are
pairwise not structurally equivalent.

Combining Cases 1--3, the clean assignment and the selected singleton reduced
assignments are not structurally equivalent to any other assignment, except for
the intended sink-node collapse at original sink nodes. This collapse is exactly
represented by the reduced intervention
\[
s_j=z_j\vee w_j .
\]
\end{proof}

Now we show that one such parameter setting $\theta_0$ is sufficient to establish that the linear independence conditions hold generically, in the sense that the set of parameters for which they do not hold is a Lebesgue-null subset of the parameter space.
\begin{proof}[Proof of Theorem~\ref{thm:linear}]
We proceed in steps.

\paragraph{Step 1: Gaussian form.}
Under any assignment $a$, the SCM remains linear Gaussian. Since both marginalization, and hard replacement preserve the Gaussian property, we have
\[
p_a(x;\theta)=\mathcal N(x;\mu_a(\theta),\Sigma_a(\theta)).
\]
Both $\mu_a(\theta)$ and $\Sigma_a(\theta)$ are polynomial functions of $\theta$.

\paragraph{Step 2: Generic separation.}
For $a,a'$, define
\[
F_{a,a'}(\theta)
=
(\mu_a(\theta)-\mu_{a'}(\theta),
\mathrm{vec}(\Sigma_a(\theta)-\Sigma_{a'}(\theta))).
\]
Each coordinate is polynomial. If $a\not\sim a'$, then there exists
$\theta_0$ such that $F_{a,a'}(\theta_0)\neq 0$, hence at least one coordinate
polynomial is nonzero. Its zero set has measure zero.

Taking a finite union over all assignment pairs gives a measure-zero set $E$
such that, for all $\theta\notin E$, distinct structural equivalence classes
yield distinct Gaussian distributions. Classical finite-mixture identifiability
results imply that distinct non-degenerate Gaussian component densities are
linearly independent \citep[][]{yakowitz:1968:identifiability}. Hence,
generically, the assignment-conditional densities indexed by structural
equivalence classes are linearly independent outside a Lebesgue-null subset of
parameter space.

\paragraph{Step 3: Full-rank construction.}
By Lemma~\ref{lem:singleton-not-merged}, the clean and the reduced singleton assignments form distinct equivalence classes. Their reduced assignment rows are
\[
(1,0,\dots,0)
\]
and standard basis perturbations in the coordinates
\[
(z_{[d]\setminus S},w_{[d]\setminus S},s_S).
\]
Subtracting the clean row yields the standard basis of $\mathbb R^{2d-|S|}$,
so the matrix has rank $1+2d-|S|$.

\paragraph{Step 4: Recovery of probabilities.}
If \(q_\ell\in(0,1)\) for all reduced Bernoulli coordinates, all conditions of
Theorem~\ref{thm:identifiability} are generically satisfied. We exclude the
case \(q_\ell=1\), since it would assign zero probability to the all-clean
assignment, contrary to the rare-anomaly setting. Zero-boundary cases
\(q_\ell=0\) can nevertheless be handled in the generic construction: the
selected assignments include the all-clean assignment and all singleton reduced
interventions. Let \(m_0\) be the all-clean weight and \(m_\ell\) the singleton
weight for reduced coordinate \(\ell\). Since
\[
m_0=\prod_r(1-q_r),\qquad
m_\ell=q_\ell\prod_{r\neq \ell}(1-q_r),
\]
we have, whenever \(m_0>0\),
\[
q_\ell=\frac{m_\ell}{m_0+m_\ell}.
\]
Thus \(q_\ell=0\) is identified exactly when the corresponding singleton
assignment has zero weight.

\end{proof}

\subsection{Generic linear independence beyond linear-Gaussian SCMs}
\label{apx:polynomial-gaussian}

We now extend the generic linear-independence result from linear-Gaussian SCMs
to fixed-degree polynomial Gaussian SCMs.  The argument uses two facts.  First,
the polynomial Gaussian model and its linear-Gaussian submodel induce the same
structural equivalence classes over assignments.  Second, for polynomial
Gaussian SCMs, moments of polynomial test functions are polynomial, hence
real analytic, functions of the model parameters.

Throughout this subsection, fix a DAG \(G=(V,E)\) and a maximum polynomial
degree \(r\ge 1\).  For each node \(j\), write
\[
f_j(x_{\mathrm{Pa}(j)};\beta_j)
=
\sum_{\gamma\in\Gamma_j(r)}
\beta_{j,\gamma}x_{\mathrm{Pa}(j)}^\gamma ,
\]
where \(\Gamma_j(r)\) is the finite set of multi-indices of total degree at
most \(r\) over the parent variables of \(j\).  Let
\[
\beta=(\beta_{j,\gamma})_{j,\gamma}
\]
collect all polynomial coefficients, and let
\[
\theta=(\beta,\sigma,\alpha,\tau),
\]
where
\[
\sigma=(\sigma_1,\ldots,\sigma_d)\in(0,\infty)^d,\qquad
\alpha=(\alpha_1,\ldots,\alpha_d)\in\mathbb R^d,\qquad
\tau=(\tau_1,\ldots,\tau_d)\in(0,\infty)^d.
\]
Thus the natural parameter space is
\[
\Theta
=
\mathbb R^{|\Gamma|}
\times (0,\infty)^d
\times \mathbb R^d
\times (0,\infty)^d,
\qquad
|\Gamma|:=\sum_{j=1}^d|\Gamma_j(r)|.
\]

The clean structural equations are
\[
X_j^\ast
=
f_j(X^\ast_{\mathrm{Pa}(j)};\beta_j)
+
\sigma_j\eta_j,
\qquad
\eta_j\sim\mathcal N(0,1),
\]
and the clean measurement equations are deterministic identities,
\[
X_j=X_j^\ast .
\]
A mechanistic intervention at \(j\) replaces the structural equation for
\(X_j^\ast\) by
\[
X_j^\ast
=
O_j^{\mathrm{mech}}
=
\alpha_j+\tau_j\eta_j^{\mathrm{mech}},
\]
whereas a measurement intervention at \(j\) replaces the measurement equation by
\[
X_j
=
O_j^{\mathrm{meas}}
=
\alpha_j+\tau_j\eta_j^{\mathrm{meas}}.
\]
All clean noises and replacement noises are mutually independent, and
\[
\eta_j^{\mathrm{mech}},\eta_j^{\mathrm{meas}}
\sim \mathcal N(0,1)
\]
are independent copies.  Thus, for each node \(j\),
\(O_j^{\mathrm{mech}}\) and \(O_j^{\mathrm{meas}}\) have the same
node-specific Gaussian distribution but are independent draws.  Replacement
distributions are node-specific: replacement sources at different nodes are not
identified with one another, even if some parameter values make their Gaussian
distributions accidentally coincide.

For the polynomial Gaussian class, write
\[
a\sim_{\mathrm{poly}}a'
\]
if the assignment-conditional observed laws under \(a\) and \(a'\) are equal
for every admissible polynomial-Gaussian parameter value
\(\theta\in\Theta\).  Similarly, write
\[
a\sim_{\mathrm{lin}}a'
\]
if the assignment-conditional observed laws are equal for every admissible
linear-Gaussian parameter value.  These are structural equivalence relations:
they require equality for all admissible parameter values in the corresponding
model class, not merely equality at a particular parameter value.
\begin{definition}[Observed active graph]
Fix an assignment \(a=(z,w)\).  The active graph associated with \(a\) is
constructed from the clean causal graph as follows.

The graph contains latent vertices \(X_1^\ast,\ldots,X_d^\ast\), observed
vertices \(X_1,\ldots,X_d\), and active source vertices
\[
C_j \quad (z_j=0),\qquad
R_j^{\mathrm{mech}} \quad (z_j=1),\qquad
R_j^{\mathrm{meas}} \quad (w_j=1).
\]
The source labels are
\[
\lambda(C_j)=(\mathrm{clean},j),
\qquad
\lambda(R_j^{\mathrm{mech}})
=
\lambda(R_j^{\mathrm{meas}})
=
(\mathrm{repl},j).
\]
The equality of the two replacement labels reflects that
\(O_j^{\mathrm{mech}}\) and \(O_j^{\mathrm{meas}}\) are independent copies from
the same node-specific replacement distribution.

The directed edges are:
\begin{itemize}[noitemsep, topsep=0pt]
    \item \(C_j\to X_j^\ast\) if \(z_j=0\), and
    \(R_j^{\mathrm{mech}}\to X_j^\ast\) if \(z_j=1\);
    \item \(X_k^\ast\to X_j^\ast\) for every \(k\in\mathrm{Pa}(j)\), but only
    if \(z_j=0\);
    \item \(X_j^\ast\to X_j\) if \(w_j=0\), and
    \(R_j^{\mathrm{meas}}\to X_j\) if \(w_j=1\).
\end{itemize}

Thus, a mechanistic intervention removes the incoming latent causal edges into
\(X_j^\ast\), whereas a measurement intervention removes only the measurement
edge \(X_j^\ast\to X_j\).  In particular, when \(z_j=w_j=1\), the mechanistic
replacement source may still propagate through \(X_j^\ast\) to latent
descendants, while the measurement replacement source affects only the observed
coordinate \(X_j\).

We then reduce the graph in two steps.
\begin{enumerate}[noitemsep, topsep=0pt]
    \item Remove every non-observed vertex that has no directed path to any
    observed vertex, together with its incident edges.
    \item If \(R_j^{\mathrm{mech}}\) reaches observations only through the
    deterministic identity measurement path
    \[
    R_j^{\mathrm{mech}}\to X_j^\ast\to X_j,
    \]
    contract this path into a single edge
    \[
    R_j^{\mathrm{mech}}\to X_j .
    \]
\end{enumerate}
The second reduction can occur only when \(w_j=0\), since otherwise the
measurement edge \(X_j^\ast\to X_j\) is absent.  If \(w_j=1\), the mechanistic
replacement source is either pruned, if it has no observed descendants, or
retained, if it propagates to downstream observed variables.

The resulting reduced graph is called the observed active graph and is denoted
\(\bar H(a)\).

Two assignments \(a\) and \(a'\) are graphically equivalent, written
\(a\equiv_{\mathrm{gr}}a'\), if \(\bar H(a)\) and \(\bar H(a')\) are
isomorphic by a directed graph isomorphism that fixes every latent vertex
\(X_j^\ast\), fixes every observed vertex \(X_j\), and preserves source labels
\(\lambda\).  Source vertices with the same label may be permuted.
\end{definition}
\begin{lemma}[Polynomial and linear models have the same structural equivalences]
\label{lem:poly-linear-same-equivalences}
Assume the setup above, and assume \(r\ge1\), so that the polynomial Gaussian
SCM class contains the linear-Gaussian submodel.  Then, for any two assignments
\(a,a'\),
\[
a\sim_{\mathrm{poly}}a'
\quad\Longleftrightarrow\quad
a\equiv_{\mathrm{gr}}a'
\quad\Longleftrightarrow\quad
a\sim_{\mathrm{lin}}a' .
\]
Hence the polynomial Gaussian class and the linear-Gaussian submodel induce the
same structural equivalence classes over assignments.
\end{lemma}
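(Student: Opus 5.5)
I will prove a cycle of three implications: $a\equiv_{\mathrm{gr}}a' \Rightarrow a\sim_{\mathrm{poly}}a'$, then $a\sim_{\mathrm{poly}}a' \Rightarrow a\sim_{\mathrm{lin}}a'$, then $a\sim_{\mathrm{lin}}a' \Rightarrow a\equiv_{\mathrm{gr}}a'$.

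\textbf{Step 1: graphical equivalence implies polynomial equivalence.} Fix $\theta$. The observed vector $X$ under $a$ is a deterministic function of the active source variables, composed along $\bar H(a)$ using the structural polynomials $f_j(\cdot;\beta_j)$ and the identity measurement maps.

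The two reductions preserve this law. Pruning removes only sources and latents that never reach an observation. Contracting $R_j^{\mathrm{mech}}\to X_j^\ast\to X_j$ replaces an identity composition by a direct edge.

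Now take an isomorphism between $\bar H(a)$ and $\bar H(a')$ that fixes all $X_j^\ast$ and $X_j$ and preserves labels. It sends each clean source $C_j$ to $C_j$. It sends each replacement source to another source with label $(\mathrm{repl},j)$. All sources with that label are i.i.d.\ copies of $\mathcal N(\alpha_j,\tau_j^2)$, and all sources are jointly independent. So the two source vectors have the same joint law under this relabelling.

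The functional composition is determined by the graph together with the vertex-fixed structural maps. Hence both assignments push forward the same law. This holds for every $\theta$, so $a\sim_{\mathrm{poly}}a'$.

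\textbf{Step 2: polynomial equivalence implies linear equivalence.} This is immediate. Since $r\ge 1$, the linear-Gaussian models are the subset of $\Theta$ with only degree-one coefficients nonzero. Equality for all polynomial parameters therefore includes equality for all linear ones.

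\textbf{Step 3: linear equivalence implies graphical equivalence.} I will prove the contrapositive and expect this to be the main obstacle. Suppose $\bar H(a)\not\cong\bar H(a')$. I must exhibit a linear-Gaussian $\theta$ at which the observed Gaussian laws differ.

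I will reuse the construction from Lemma~\ref{lem:singleton-not-merged}, but with generic weights: take algebraically independent edge coefficients $b_{kj}$, variances $\sigma_j^2,\tau_j^2$, and means $\alpha_j$. Under assignment $a$, write
\[
X=\sum_{\text{sources }S}\Lambda_S\,\xi_S,
\]
where $\xi_S$ is the (clean or replacement) noise at source $S$. The mean is then
\[
\mu_a=\sum_{\text{repl sources}}\alpha_{\lambda(S)}\Lambda_S .
\]
The covariance is
\[
\Sigma_a=\sum_S v_{\lambda(S)}\Lambda_S\Lambda_S^\top ,
\]
where $v_{\lambda(S)}$ denotes $\sigma_j^2$ or $\tau_j^2$ according to the label. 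Each loading vector $\Lambda_S$ has entries that are path polynomials in $b$ over $\bar H(a)$.

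The key claim is that $\Sigma_a$, viewed as a polynomial in the indeterminates, determines $\bar H(a)$ up to label-preserving isomorphism. I would argue this in three parts:
\begin{itemize}
\item The coefficient of $\tau_j^2$ in $\Sigma_a$ equals $\sum_{S:\lambda(S)=(\mathrm{repl},j)}\Lambda_S\Lambda_S^\top$. Here at most two sources carry label $j$, namely the mech copy (with $z_j=1$) and the meas copy (with $w_j=1$).
\item The meas source has loading $e_j$.
\item The mech source has loading equal to $e_j$ (if $w_j=0$) plus path sums into the descendants of $j$, or no $e_j$ component (if $w_j=1$).
\end{itemize}
These rank-one pieces are distinguishable as polynomials, because the monomials in $b$ identify which latent edges are active. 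Similarly, the coefficient of $\sigma_j^2$ records whether $C_j$ is present and which incoming edges are active, that is, whether $z_j=0$. Reading off these coefficients reconstructs the set of active sources and edges of $\bar H(a)$ modulo exactly the permitted permutations of same-label sources.

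Therefore a non-isomorphism yields a polynomial identity $\Sigma_a\neq\Sigma_{a'}$ in $\theta$. A nonzero polynomial cannot vanish at an algebraically independent point, so the laws differ at that $\theta$. This contradicts $a\sim_{\mathrm{lin}}a'$.

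The delicate part is checking that pruning and contraction make the reconstruction exactly injective on reduced graphs. The case $z_j=w_j=1$ needs particular care, because the mech source may be pruned there. If the polynomial-coefficient bookkeeping becomes unwieldy, I would instead compare the earliest node in topological order where the reduced graphs differ, and use a mean or covariance witness there, as in Cases 1–3 of Lemma~\ref{lem:singleton-not-merged}.
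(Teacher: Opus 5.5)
Your proposal is correct, and its skeleton is the paper's. Step 1 is the same argument. Step 3 uses the same machinery: symbolic linear-Gaussian parameters, path-polynomial loadings (the paper's transfer vectors $T_a(s)$), and coefficient matching in $\sigma_j^2,\tau_j^2$ to recover the labeled multiset of loadings, whose path supports rebuild $\bar H(a)$. The three-implication cycle and the algebraically independent witness point are cosmetic variations.

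The real difference is how you split the $\tau_j^2$-coefficient $M_j=uu^\top+vv^\top$, where $u$ is the mechanistic loading and $v$ the measurement loading.
\begin{itemize}
\item The paper needs no structure here. It reads $u+v$ off the $\alpha_j$-coefficient of the mean, and uses $2M_j-(u+v)(u+v)^\top=(u-v)(u-v)^\top$ to get $u-v$ up to sign, hence $\{u,v\}$.
\item You drop the mean, and the reason you give (that monomials in $b$ identify active edges) does not separate the summands. Even for nonnegative integer vectors, $\{(1,3),(3,4)\}$ and $\{(3,5),(1,0)\}$ have the same $uu^\top+vv^\top$ but different sums.
\end{itemize}

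What does work is the support structure in your bullets: $v\in\{0,e_j\}$ with $v=e_j$ iff $w_j=1$, and $u=0$ unless $z_j=1$, in which case $u_j=1-w_j$. The decoding then goes as follows.
\begin{itemize}
\item If row $j$ of $M_j$ has a nonzero off-diagonal entry, then $w_j=0$, and $u$ is the $j$-th column of $M_j$.
\item Otherwise, if $M_j\neq0$, some replacement source has loading $e_j$. Then $M_j-e_je_j^\top$ is either $0$ or $uu^\top$ with $u_j=0$. This fixes $u$ up to a sign, which nonnegativity removes.
\item The residual case $M_j=e_je_j^\top$ includes your $z_j=w_j=1$ case with a pruned mechanistic source. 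It cannot tell a contracted mechanistic source from a measurement source, but both carry label $(\mathrm{repl},j)$ and give isomorphic reduced graphs.
\end{itemize}

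With this step made explicit, your route is valid and shows that second moments alone separate non-equivalent assignments. The paper's mean-plus-covariance argument is structure-free and would survive less rigid replacement loadings.

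One small correction: the $\sigma_j^2$-coefficient is $t_jt_j^\top$ for the outgoing loading of $C_j$. It therefore does not record the incoming edges of $X_j^\ast$, and $C_j$ can be pruned even when $z_j=0$. Incoming edges are recovered from the path supports of upstream sources, as in the paper.
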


\begin{proof}
We first prove that graphical equivalence implies structural equivalence.
Suppose \(a\equiv_{\mathrm{gr}}a'\).  The observed active graph contains exactly
the exogenous source variables, active structural mechanisms, and active
measurement maps that can affect the observed vector.  The isomorphism between
\(\bar H(a)\) and \(\bar H(a')\) fixes every latent vertex \(X_j^\ast\) and
every observed vertex \(X_j\).  Hence it preserves which named structural
mechanism \(f_j\) is evaluated at each latent node and which observed coordinate
is produced at each measurement node.  The isomorphism may only permute source
vertices with the same label, and such vertices are independent copies with the
same distribution.

The pruning step is distribution-preserving because pruned vertices have no
directed path to any observed variable.  The contraction step is
distribution-preserving because it contracts only the deterministic identity
measurement path
\[
R_j^{\mathrm{mech}}\to X_j^\ast\to X_j,
\qquad X_j=X_j^\ast,
\]
and only when \(R_j^{\mathrm{mech}}\) has no other observed descendants.  A
contracted mechanistic replacement source at node \(j\) has the same label as a
measurement replacement source at node \(j\) because these are independent
copies from the same node-specific replacement distribution.

Therefore \(a\) and \(a'\) induce the same recursive computation of the
observed vector in distribution for every admissible polynomial-Gaussian
parameter value.  Hence
\[
a\sim_{\mathrm{poly}}a'.
\]
Since the linear-Gaussian model is a submodel of the polynomial Gaussian model,
the same argument gives
\[
a\sim_{\mathrm{lin}}a'.
\]

It remains to prove the converse.  Suppose
\[
a\not\equiv_{\mathrm{gr}}a'.
\]
We show that \(a\) and \(a'\) are already separated inside the
linear-Gaussian submodel.

Assign a distinct formal indeterminate \(b_e\) to every latent edge
\(e\in E(G[X^\ast])\).  Let
\[
\mathcal R=\mathbb R[b_e:e\in E(G[X^\ast])]
\]
be the corresponding polynomial ring, and let \(\mathcal K\) be its fraction
field.

For each active source vertex \(s\) in \(\bar H(a)\), define its transfer vector
\[
T_a(s)\in \mathcal R^d
\]
by
\[
[T_a(s)]_i
=
\sum_{\pi:s\leadsto X_i}
\prod_{e\in \pi\cap E(G[X^\ast])} b_e ,
\]
where the sum is over all directed paths from \(s\) to the observed vertex
\(X_i\) in \(\bar H(a)\).  Source edges and clean measurement edges have
coefficient one.  Define the labeled transfer multiset
\[
\mathcal T(a)
=
\big\{(\lambda(s),T_a(s)):
s\text{ is an active source of }\bar H(a)\big\},
\]
with multiplicity.  Define \(\mathcal T(a')\) analogously.

We claim that \(\mathcal T(a)\) determines \(\bar H(a)\) up to
label-preserving isomorphism.  Indeed, each monomial in the \(i\)-th coordinate
of \(T_a(s)\) corresponds to a directed source-to-observation path from \(s\) to
\(X_i\).  Since the latent graph is acyclic and all latent vertices are named, a
directed path is uniquely determined by its latent edge set together with its
source and observed target.  Direct source-to-observation paths correspond to
the empty monomial in the appropriate coordinate.  All coefficients are
nonnegative integers, so path contributions cannot cancel.  Hence the support
of the transfer polynomials recovers exactly the labeled
source-to-observation path set.  By construction of \(\bar H(a)\), every
retained non-observed vertex and every retained edge lies on at least one
directed path from an active source to an observed vertex.  Therefore the union
of the recovered paths is precisely the reduced observed active graph.  Thus
\[
\mathcal T(a)=\mathcal T(a')
\quad\Longrightarrow\quad
a\equiv_{\mathrm{gr}}a' .
\]
By contraposition,
\[
a\not\equiv_{\mathrm{gr}}a'
\quad\Longrightarrow\quad
\mathcal T(a)\neq\mathcal T(a')
\]
as labeled multisets.

Now work symbolically in the linear-Gaussian submodel with zero intercepts.
Let the clean source \(C_j\) have mean \(0\) and variance \(\sigma_j^2\), and
let every replacement source with label \((\mathrm{repl},j)\) have mean
\(\alpha_j\) and variance \(\tau_j^2\).  All sources are mutually independent.
Treat
\[
\{b_e\}_{e\in E(G[X^\ast])},
\qquad
\{\alpha_j,\sigma_j^2,\tau_j^2\}_{j=1}^d
\]
as formal indeterminates.

Under assignment \(a\), the symbolic observed mean and covariance are
\[
\mu_a
=
\sum_{j=1}^d
\alpha_j
\sum_{s:\lambda(s)=(\mathrm{repl},j)}
T_a(s),
\]
and
\[
\Sigma_a
=
\sum_{j=1}^d
\sigma_j^2
\sum_{s:\lambda(s)=(\mathrm{clean},j)}
T_a(s)T_a(s)^\top
+
\sum_{j=1}^d
\tau_j^2
\sum_{s:\lambda(s)=(\mathrm{repl},j)}
T_a(s)T_a(s)^\top .
\]
The same formulas hold for \(a'\).

Assume, for contradiction, that
\[
\mu_a=\mu_{a'},
\qquad
\Sigma_a=\Sigma_{a'}.
\]
Since \(\alpha_j,\sigma_j^2,\tau_j^2\) are formal indeterminates, equality of
these polynomial expressions implies equality of the coefficient of each such
indeterminate.

For a clean label \((\mathrm{clean},j)\), there is at most one active clean
source.  Let its transfer vector under \(a\) be \(t_j(a)\), and set
\(t_j(a)=0\) if the source is absent after pruning.  Define \(t_j(a')\)
analogously.  Equality of the coefficient of \(\sigma_j^2\) gives
\[
t_j(a)t_j(a)^\top
=
t_j(a')t_j(a')^\top .
\]
Over the fraction field \(\mathcal K\), equality of two rank-one outer products
implies equality of the vectors up to sign:
\[
t_j(a')=\pm t_j(a).
\]
Because transfer vectors have entries in \(\mathcal R\) with nonnegative
coefficients, the negative sign is possible only when both vectors are zero.
Thus
\[
t_j(a)=t_j(a').
\]

For a replacement label \((\mathrm{repl},j)\), there are at most two active
replacement sources, corresponding to the mechanistic and measurement
replacement at node \(j\).  Let their transfer vectors under \(a\) be
\(u_j(a)\) and \(v_j(a)\), using the zero vector for an absent or pruned source.
Define \(u_j(a')\) and \(v_j(a')\) analogously.  Equality of the coefficient of
\(\alpha_j\) in the mean gives
\[
u_j(a)+v_j(a)=u_j(a')+v_j(a'),
\]
and equality of the coefficient of \(\tau_j^2\) in the covariance gives
\[
u_j(a)u_j(a)^\top+v_j(a)v_j(a)^\top
=
u_j(a')u_j(a')^\top+v_j(a')v_j(a')^\top .
\]
Combining the two identities yields
\[
\big(u_j(a)-v_j(a)\big)\big(u_j(a)-v_j(a)\big)^\top
=
\big(u_j(a')-v_j(a')\big)\big(u_j(a')-v_j(a')\big)^\top .
\]
Again over \(\mathcal K\), this implies
\[
u_j(a')-v_j(a')
=
\pm\big(u_j(a)-v_j(a)\big).
\]
Together with equality of the sums, this gives
\[
\{u_j(a),v_j(a)\}
=
\{u_j(a'),v_j(a')\}
\]
as unordered pairs.

Therefore equality of the symbolic means and covariances would force equality
of every clean-source transfer vector and every replacement-source transfer
multiset.  Equivalently,
\[
\mathcal T(a)=\mathcal T(a'),
\]
contradicting \(a\not\equiv_{\mathrm{gr}}a'\).  Hence at least one entry of
\[
\mu_a-\mu_{a'}
\quad\text{or}\quad
\Sigma_a-\Sigma_{a'}
\]
is a nonzero polynomial in the symbolic linear-Gaussian parameters.

The admissible linear-Gaussian parameter space contains a nonempty open set:
edge coefficients \(b_e\in\mathbb R\), replacement means
\(\alpha_j\in\mathbb R\), and positive variances
\(\sigma_j^2,\tau_j^2>0\).  A nonzero polynomial cannot vanish on this whole
open set.  Therefore there exists an admissible linear-Gaussian parameter value
with positive variances such that
\[
(\mu_a,\Sigma_a)\neq(\mu_{a'},\Sigma_{a'}).
\]
At this parameter value, the observed Gaussian laws differ:
\[
p_a(\cdot;\theta)\neq p_{a'}(\cdot;\theta).
\]
Thus
\[
a\not\sim_{\mathrm{lin}}a'.
\]
Since the linear-Gaussian model is contained in the polynomial Gaussian model,
also
\[
a\not\sim_{\mathrm{poly}}a'.
\]

Combining the two directions proves
\[
a\sim_{\mathrm{poly}}a'
\quad\Longleftrightarrow\quad
a\equiv_{\mathrm{gr}}a'
\quad\Longleftrightarrow\quad
a\sim_{\mathrm{lin}}a',
\]
as claimed.
\end{proof}

Now, we can use this to extend the result of linear independence of equivalence classes under the linear model to the polynomial model

\begin{lemma}[Finite witness extension]
  \label{lem:finite-witness-extension}
Let \(\Theta\subseteq\mathbb R^q\) be connected and open, and let
\(\mathcal A=\{a_1,\ldots,a_m\}\) be finite. For each
\(a_i\in\mathcal A\) and \(\theta\in\Theta\), let
\(\mu_i^\theta\) be a probability measure on \(\mathbb R^d\).

Suppose there exist measurable test functions
\(h_1,\ldots,h_m:\mathbb R^d\to\mathbb R\) such that
\[
M_{i\ell}(\theta)
:=
\int h_\ell(x)\,d\mu_i^\theta(x)
\]
is well-defined and real analytic in \(\theta\) on \(\Theta\). If there exists
\(\theta_0\in\Theta\) such that
\[
D(\theta_0)
:=
\det\left[M_{i\ell}(\theta_0)\right]_{i,\ell=1}^m
\neq 0,
\]
then
\[
\mu_1^\theta,\ldots,\mu_m^\theta
\]
are linearly independent for Lebesgue-almost-every \(\theta\in\Theta\).
\end{lemma}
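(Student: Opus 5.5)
The argument has two parts. First, the determinant $D(\theta)$ vanishes only on a Lebesgue-null set. Second, wherever $D(\theta)\neq 0$, the measures $\mu_1^\theta,\ldots,\mu_m^\theta$ are linearly independent.

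\emph{Null zero set.} Each entry $M_{i\ell}(\theta)$ is real analytic on $\Theta$ by hypothesis. Since the determinant is a polynomial in the matrix entries, $D(\theta)=\det[M_{i\ell}(\theta)]$ is real analytic on $\Theta$. By assumption $D(\theta_0)\neq 0$, so $D$ is not identically zero. $\Theta$ is connected and open, so the standard fact applies: a real analytic function on a connected open subset of $\mathbb R^q$ that is not identically zero has a zero set of Lebesgue measure zero. Hence $Z:=\{\theta\in\Theta: D(\theta)=0\}$ is null.

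This is the only nontrivial step, and connectedness is exactly what it needs; otherwise $D$ could vanish identically on a component not containing $\theta_0$. If a self-contained argument is wanted, I would prove the fact by induction on $q$. The case $q=1$ holds because zeros of a nonzero analytic function are isolated. For the inductive step, apply Fubini after choosing coordinates, locally, in which some partial derivative of $D$ is nonzero. In the write-up I would simply cite the fact (e.g.\ Mityagin's note on zero sets of real analytic functions).

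\emph{Linear independence where $D\neq0$.} Fix $\theta\notin Z$ and suppose $\sum_{i=1}^m c_i\mu_i^\theta=0$ as a signed measure for some $c\in\mathbb R^m$. Integrating each test function $h_\ell$ gives
\[
\sum_{i=1}^m c_i M_{i\ell}(\theta)=\int h_\ell\, d\Bigl(\sum_i c_i\mu_i^\theta\Bigr)=0,\qquad \ell=1,\ldots,m.
\]
Each integral is finite because $M_{i\ell}(\theta)$ is well defined, so the linear combination of integrals is legitimate. In matrix form this reads $c^\top M(\theta)=0$. Since $\det M(\theta)\neq 0$, it follows that $c=0$.

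Therefore the measures are linearly independent for every $\theta\in\Theta\setminus Z$, which is Lebesgue-almost every $\theta$. The same conclusion holds for their densities whenever the measures have densities, which is the form needed to apply Lemma~\ref{lem:selected-weights}.
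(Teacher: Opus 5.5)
Your proof is correct and takes essentially the same route as the paper. $D$ is real analytic and nonzero at $\theta_0$, so its zero set in the connected open set $\Theta$ is Lebesgue-null; off that set, integrating any relation $\sum_i c_i\mu_i^\theta=0$ against the $h_\ell$ gives $c^\top M(\theta)=0$, which forces $c=0$. Your added remarks, on finiteness of the integrals justifying the linearity step and on why connectedness is needed, are accurate refinements of the paper's argument.
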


\begin{proof}
Define
\[
D(\theta)
:=
\det\left[M_{i\ell}(\theta)\right]_{i,\ell=1}^m .
\]
By assumption, \(D\) is real analytic on \(\Theta\). Since
\(D(\theta_0)\neq0\), \(D\) is not identically zero. The zero set of a
nontrivial real-analytic function on a connected open set has Lebesgue measure
zero. Hence \(D(\theta)\neq0\) for Lebesgue-almost-every \(\theta\).

For such \(\theta\), suppose that the measures were linearly dependent. Then
there would exist \(c=(c_1,\ldots,c_m)\neq0\) such that
\[
\sum_{i=1}^m c_i\mu_i^\theta=0
\]
as a finite signed measure. Integrating against \(h_\ell\), for
\(\ell=1,\ldots,m\), gives
\[
\sum_{i=1}^m c_iM_{i\ell}(\theta)=0.
\]
Equivalently, \(c^\top M(\theta)=0\). Since \(D(\theta)\neq0\), the matrix
\(M(\theta)\) is invertible, and therefore \(c=0\), a contradiction.
\end{proof}

\begin{lemma}[Polynomial moments under interventions]
  \label{lem:polynomial-moments}
Consider a known-DAG polynomial Gaussian additive-noise SCM of fixed degree at
most \(r\). For each node,
\[
X_j^\ast
=
f_j(X^\ast_{\mathrm{Pa}(j)};\beta_j)+\sigma_j\eta_j,
\qquad
\eta_j\sim\mathcal N(0,1),
\]
where \(f_j\) is polynomial and \(\sigma_j>0\). Let mechanistic and measurement
replacement variables be independent Gaussian variables with the same
node-specific distribution
\[
O_j^{\mathrm{mech}}
=
\alpha_j+\tau_j\eta_j^{\mathrm{mech}},
\qquad
O_j^{\mathrm{meas}}
=
\alpha_j+\tau_j\eta_j^{\mathrm{meas}},
\qquad
\tau_j>0,
\]
where
\[
\eta_j^{\mathrm{mech}},\eta_j^{\mathrm{meas}}
\sim \mathcal N(0,1)
\]
are independent standard Gaussian variables.

Let \(\theta\) collect the structural coefficients, noise standard deviations,
replacement means, and replacement standard deviations:
\[
\theta=(\beta,\sigma,\alpha,\tau).
\]
For a fixed assignment \(a\), let
\[
\mu_a^\theta=\mathcal L_\theta^a(X)
\]
denote the observed assignment-conditional distribution. If
\(h:\mathbb R^d\to\mathbb R\) is polynomial, then
\[
\theta\mapsto
\int h(x)\,d\mu_a^\theta(x)
\]
is polynomial in \(\theta\).
\end{lemma}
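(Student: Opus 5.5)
The plan is to prove, by induction along the topological order, a stronger statement: every monomial in the observed variables has expectation that is a polynomial in $\theta$. Under a fixed assignment $a=(z,w)$, each observed coordinate $X_j$ is one of two things. If $w_j=1$, it is $O_j^{\mathrm{meas}}=\alpha_j+\tau_j\eta_j^{\mathrm{meas}}$. Otherwise it equals $X_j^\ast$. In turn, $X_j^\ast$ is either $\alpha_j+\tau_j\eta_j^{\mathrm{mech}}$ (if $z_j=1$) or $f_j(X^\ast_{\mathrm{Pa}(j)};\beta_j)+\sigma_j\eta_j$.

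The first step is to show that each $X_j^\ast$ and each $X_j$ can be written as a polynomial in the standard Gaussian sources $(\eta_k,\eta_k^{\mathrm{mech}},\eta_k^{\mathrm{meas}})_{k}$. The coefficients of that polynomial should themselves be polynomials in $\theta=(\beta,\sigma,\alpha,\tau)$. This follows by induction on the topological order:
\begin{itemize}
\item The base case is a node without parents, where $X_j^\ast$ is affine in one source, with coefficients $\sigma_j$, or $\alpha_j$ and $\tau_j$.
\item In the induction step, $f_j$ is a polynomial in the parents with coefficients $\beta_{j,\gamma}$. Substituting the parents' polynomial representations keeps both the source variables and $\theta$ polynomial.
\item The measurement replacement is again affine in $\eta_j^{\mathrm{meas}}$ with coefficients $\alpha_j,\tau_j$.
\end{itemize}
Since $a$ is fixed, the structure of the computation does not depend on $\theta$, so the representation is uniform in $\theta$.

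Next, for a polynomial test function $h$, $h(X)$ is a composition of polynomials. It is therefore a finite sum of terms $c(\theta)\prod_k \eta_k^{e_k}\,(\eta_k^{\mathrm{mech}})^{e'_k}(\eta_k^{\mathrm{meas}})^{e''_k}$, where each $c$ is polynomial in $\theta$ and the exponents range over a finite set determined by $h$, $r$, $G$ and $a$. By mutual independence, the expectation of each monomial factorises into a product of standard Gaussian moments. These are finite constants: zero for odd orders and $(e-1)!!$ for even orders. Linearity of expectation then gives $\int h\,d\mu_a^\theta=\sum c(\theta)\cdot\text{const}$, which is a polynomial in $\theta$. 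The expectation is well-defined because every term is a polynomial in Gaussians, and Gaussians have moments of all orders.

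The main care point is bookkeeping rather than depth. We need the polynomial representation to be uniform in $\theta$, with the set of monomials fixed independently of $\theta$, so that exchanging the finite sum and the expectation is legitimate and the result is a genuine polynomial. One subtlety is that the model is parametrised by $\sigma_j,\tau_j$ and not by the variances. The coefficients are still polynomial in $\sigma_j,\tau_j$, because the sources enter as $\sigma_j\eta_j$ and $\tau_j\eta_j^{\mathrm{mech}}$, so no square roots appear.
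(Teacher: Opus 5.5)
Your proposal is correct and follows the paper's argument: recursive substitution along the topological order writes $X=T_a(E;\theta)$ as a polynomial in the stacked standard Gaussian sources and in $\theta$, and taking expectations monomial by monomial turns the source monomials into finite Gaussian moments. One minor point: at a root node $f_j$ is the constant $\beta_{j,0}$, so the base case is $\beta_{j,0}+\sigma_j\eta_j$ rather than $\sigma_j\eta_j$, which changes nothing.
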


\begin{proof}
Collect all independent standard Gaussian variables into
\[
E=
(\eta_1,\ldots,\eta_d,
\eta_1^{\mathrm{mech}},\ldots,\eta_d^{\mathrm{mech}},
\eta_1^{\mathrm{meas}},\ldots,\eta_d^{\mathrm{meas}}).
\]
For an assignment \(a=(z,w)\), the variables are generated recursively in a
topological order by
\[
X_j^\ast
=
\begin{cases}
f_j(X^\ast_{\mathrm{Pa}(j)};\beta_j)+\sigma_j\eta_j,
& z_j=0,\\[2mm]
\alpha_j+\tau_j\eta_j^{\mathrm{mech}},
& z_j=1,
\end{cases}
\]
and
\[
X_j
=
\begin{cases}
X_j^\ast,
& w_j=0,\\[2mm]
\alpha_j+\tau_j\eta_j^{\mathrm{meas}},
& w_j=1.
\end{cases}
\]
Here \(\alpha_j\) and \(\tau_j\) are shared replacement parameters for the
mechanistic and measurement replacement distributions at node \(j\), while
\(\eta_j^{\mathrm{mech}}\) and \(\eta_j^{\mathrm{meas}}\) are independent
standard Gaussian variables.

Since the graph is acyclic and all \(f_j\) are polynomial, recursive
substitution yields
\[
X=T_a(E;\theta),
\]
where every coordinate of \(T_a\) is polynomial in \(E\) and in the model
parameters
\[
\theta=(\beta,\sigma,\alpha,\tau).
\]

Thus, for polynomial \(h\), the composition
\[
h(T_a(E;\theta))
\]
is polynomial in \(E\) and \(\theta\). Taking expectation with respect to the
standard Gaussian vector \(E\) replaces monomials in \(E\) by finite Gaussian
moments, so
\[
\mathbb E[h(T_a(E;\theta))]
\]
is polynomial in \(\theta\).
\end{proof}

\begin{lemma}[Polynomial tests separate finite Gaussian mixtures]
Let
\[
\nu=\sum_{i=1}^m c_i\,\mathcal N(\mu_i,\Sigma_i)
\]
be a finite signed combination of non-degenerate Gaussian probability measures
on \(\mathbb R^d\). If
\[
\int h(x)\,d\nu(x)=0
\]
for every polynomial \(h\), then \(\nu=0\).
\label{lem:polynomial-tests}
\end{lemma}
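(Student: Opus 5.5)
The plan is to reduce the claim to uniqueness of Fourier transforms. Gaussian measures have moments of all orders, so $\nu$ determines well-defined moments. The question is whether these moments determine $\nu$.

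For a signed combination of Gaussians, the characteristic function is entire and of finite order. Its Taylor coefficients are exactly the (complex-scaled) moments. So vanishing moments should force the characteristic function to vanish identically, and Fourier inversion then gives $\nu=0$.

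Concretely, I would define
\[
\varphi(t)=\int e^{i\langle t,x\rangle}\,d\nu(x)=\sum_{i=1}^m c_i\exp\!\big(i\langle t,\mu_i\rangle-\tfrac12 t^\top\Sigma_i t\big).
\]
I would first note that this expression extends to an entire function of $t\in\mathbb C^d$, being a finite sum of exponentials of polynomials. Next I would check that its power series at $t=0$ may be computed termwise from the integral. That is, the coefficient of $t^\gamma$ should equal $i^{|\gamma|}\int x^\gamma d\nu(x)/\gamma!$.

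This interchange is the only analytic step needing care. I would justify it by dominated convergence, using the bound
\[
\sum_k |\langle t,x\rangle|^k/k!\le e^{|t|\,|x|},
\]
which is integrable against each Gaussian $|\mathcal N(\mu_i,\Sigma_i)|$ because Gaussian tails are super-exponential. By hypothesis every moment $\int x^\gamma d\nu$ vanishes, taking $h(x)=x^\gamma$. Hence all Taylor coefficients of $\varphi$ at $0$ vanish. Since $\varphi$ is entire, the identity theorem gives $\varphi\equiv 0$ on $\mathbb R^d$.

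Finally, a finite signed Borel measure on $\mathbb R^d$ is determined by its Fourier transform. I would apply this to the Jordan decomposition, or directly via the uniqueness theorem for Fourier–Stieltjes transforms of finite signed measures, to conclude $\nu=0$.

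The main obstacle is the termwise-expansion step, which is where the Gaussian tail bound is genuinely used. A cleaner alternative, if preferred, avoids analyticity in $t$ altogether. The moment problem for a positive measure with exponential moments (Carleman/Cramér condition) is determinate. Writing $\nu=\nu^+-\nu^-$, the vanishing of all polynomial integrals says that $\nu^+$ and $\nu^-$ have identical moments. Both are dominated by $\sum_i|c_i|\,\mathcal N(\mu_i,\Sigma_i)$, so they have finite exponential moments. Determinacy then gives $\nu^+=\nu^-$, so $\nu=0$. I would present the characteristic-function argument as the main proof and mention this as an equivalent route.
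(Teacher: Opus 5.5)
Your proposal is correct and follows essentially the same route as the paper, which uses the moment-generating function $M_\nu(t)=\sum_i c_i\exp\bigl(t^\top\mu_i+\tfrac12 t^\top\Sigma_i t\bigr)$, observes that it is entire with vanishing derivatives at $0$, and concludes via $\widehat\nu(s)=M_\nu(is)$ and Fourier uniqueness; you apply the same argument directly to the characteristic function. Your dominated-convergence justification of the termwise expansion, using that $e^{|t|\,|x|}$ is Gaussian-integrable, supplies a step the paper leaves implicit.
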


\begin{proof}
The moment-generating function of \(\nu\) is
\[
M_\nu(t)
=
\int e^{t^\top x}\,d\nu(x)
=
\sum_{i=1}^m c_i
\exp\left(
t^\top\mu_i+\frac12t^\top\Sigma_i t
\right).
\]
This is an entire function of \(t\in\mathbb C^d\). Its derivatives at \(t=0\)
are the polynomial moments of \(\nu\), which vanish by assumption. Hence
\(M_\nu\equiv0\). Therefore the characteristic function
\[
\widehat\nu(s)=M_\nu(is)
\]
vanishes for all \(s\in\mathbb R^d\). By uniqueness of the Fourier transform for
finite signed measures, \(\nu=0\).
\end{proof}

\begin{lemma}[Finite polynomial witnesses at a Gaussian point]
Let
\[
\mu_1,\ldots,\mu_m
\]
be linearly independent non-degenerate Gaussian probability measures on
\(\mathbb R^d\). Then there exist polynomial test functions
\[
h_1,\ldots,h_m
\]
such that
\[
\det\left[
\int h_\ell(x)\,d\mu_i(x)
\right]_{i,\ell=1}^m
\neq 0 .
\]
\label{lem:finite-polynomial-witnesses}
\end{lemma}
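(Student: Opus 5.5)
The plan is a linear-algebra argument on the moment map, with Lemma~\ref{lem:polynomial-tests} supplying the key injectivity. Let $\mathcal P$ be the real vector space of polynomials on $\mathbb R^d$. Define the linear map
\[
\Phi:\mathbb R^m\to \mathcal P^*,\qquad \Phi(c)(h)=\sum_{i=1}^m c_i\int h\,d\mu_i .
\]
The argument then proceeds in three steps.

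\textbf{Step 1: $\Phi$ is injective.} Suppose $\Phi(c)=0$. Then the signed measure $\nu=\sum_i c_i\mu_i$ integrates every polynomial to zero. By Lemma~\ref{lem:polynomial-tests}, this gives $\nu=0$. By the assumed linear independence of $\mu_1,\ldots,\mu_m$, it follows that $c=0$. All moments are finite because the $\mu_i$ are non-degenerate Gaussians, so $\Phi$ is well defined.

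\textbf{Step 2: pass to a finite-dimensional space of polynomials.} Let $\mathcal P_{\le N}$ denote the polynomials of degree at most $N$, and let $\Phi_N$ be the restriction of the functionals $\Phi(c)$ to $\mathcal P_{\le N}$. Set $K_N=\ker \Phi_N\subseteq\mathbb R^m$. These kernels form a decreasing chain of subspaces of $\mathbb R^m$, and their intersection is $\ker\Phi=\{0\}$ by Step~1. A decreasing chain of subspaces of a finite-dimensional space stabilizes, so some $N$ satisfies $K_N=\{0\}$.

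\textbf{Step 3: choose the witnesses.} Fix a basis $g_1,\ldots,g_M$ of $\mathcal P_{\le N}$ and form the $m\times M$ matrix $G_{i\ell}=\int g_\ell\,d\mu_i$. The identity $c^\top G=0$ is equivalent to $\Phi_N(c)=0$, so $c^\top G=0$ forces $c=0$. Hence $G$ has rank $m$, and it has $m$ linearly independent columns $\ell_1,\ldots,\ell_m$. Taking $h_k:=g_{\ell_k}$ makes the square matrix $\bigl[\int h_k\,d\mu_i\bigr]_{i,k}$ nonsingular, so its determinant is nonzero.

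The only substantive step is Step~1. Its analytic content is the moment-determinacy already proved in Lemma~\ref{lem:polynomial-tests}. The remaining difficulty is the reduction from infinitely many test functions to finitely many, which the kernel-stabilization argument handles without any explicit degree bound.
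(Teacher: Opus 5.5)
Your proposal is correct and follows essentially the same route as the paper. Both arguments use Lemma~\ref{lem:polynomial-tests} together with linear independence to rule out a nonzero combination $\sum_i c_i\mu_i$ that annihilates every polynomial, and conclude that the moment vectors have full rank $m$ and hence admit $m$ witnesses. Your Steps~2--3 spell out the rank-to-finite-witness step that the paper's contradiction argument leaves implicit; the truncation to $\mathcal P_{\le N}$ is valid but not needed, since the moment vectors of all polynomials already span a subspace of $\mathbb R^m$, from which $m$ independent ones can be picked directly.
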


\begin{proof}
Suppose no such finite collection of polynomial tests existed. Then the
polynomial moment functionals would have rank strictly smaller than \(m\) on
\(\mathrm{span}\{\mu_1,\ldots,\mu_m\}\). Hence there would exist coefficients
\(c_1,\ldots,c_m\), not all zero, such that
\[
\nu=\sum_{i=1}^m c_i\mu_i
\]
annihilates every polynomial test function. By lemma~\ref{lem:polynomial-tests} \(\nu=0\),
contradicting the assumed linear independence of
\(\mu_1,\ldots,\mu_m\). Therefore a finite collection of polynomial tests with
full-rank moment matrix exists.
\end{proof}

\begin{theorem}[Generic identifiability for polynomial Gaussian SCMs]
Fix a known DAG \(G\), and consider the polynomial Gaussian additive-noise SCM
class of fixed degree at most \(r \ge 1\) with parameter space \(\Theta\) defined
above. For each \(\theta\in\Theta\), let
\(\{\mu_a^\theta:a\in\mathcal A\}\) denote the assignment-conditional observed
probability measures indexed by the full finite set of reduced assignment
classes, where assignments are quotiented by structural equivalences, including
the sink-node collapse \(s_j=z_j\vee w_j\) for \(j\in S\).

Then, outside a Lebesgue-null subset of \(\Theta\), the assignment-conditional
observed measures \(\{\mu_a^\theta:a\in\mathcal A\}\) are linearly independent
as finite signed measures. Moreover, the corresponding assignment-conditional
densities are linearly independent, the aggregate mixture weights of all reduced
assignment classes are identifiable, and the reduced intervention probabilities
\[
\{\pi_j^{\mathrm{mech}},\pi_j^{\mathrm{meas}}:j\notin S\},
\qquad
\{\rho_j=P(z_j\vee w_j):j\in S\}
\]
are identifiable from the fixed-component mixture, for all reduced Bernoulli
coordinates \(q_\ell\in[0,1)\).
\end{theorem}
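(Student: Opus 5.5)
The plan is to assemble the chain of lemmas proved above. Let $\mathcal A$ be the finite set of reduced assignment classes. By Lemma~\ref{lem:poly-linear-same-equivalences}, these are the same whether structural equivalence is taken over the polynomial class or over its linear-Gaussian submodel.

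\textbf{Step 1: a linear-Gaussian witness point.} The linear-Gaussian SCMs form a subset of $\Theta$: set all nonlinear coefficients $\beta_{j,\gamma}$ with $|\gamma|\ge 2$ to zero, and also the intercepts. By Theorem~\ref{thm:linear} (Step 2 of its proof), for generic linear parameters the classes in $\mathcal A$ yield pairwise distinct non-degenerate Gaussian laws. Such laws are linearly independent by the classical result of Yakowitz. So I fix one such point $\theta_0\in\Theta$. A subtlety must be checked here: the linear submodel is a null set in $\Theta$, but we only need one point, and $\theta_0$ is a genuine interior point of $\Theta$ with positive variances. A second subtlety is that distinct classes give distinct laws at $\theta_0$ precisely because the equivalence classes coincide (Lemma~\ref{lem:poly-linear-same-equivalences}). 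Without that, two classes that are distinct under the polynomial relation could collapse in the linear model.

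\textbf{Step 2: finite polynomial witnesses and analytic extension.} Lemma~\ref{lem:finite-polynomial-witnesses}, applied to the linearly independent Gaussians $\{\mu_a^{\theta_0}\}_{a\in\mathcal A}$, gives polynomials $h_1,\dots,h_m$ with $D(\theta_0)\neq 0$. By Lemma~\ref{lem:polynomial-moments}, each entry $M_{a\ell}(\theta)=\int h_\ell\,d\mu_a^\theta$ is a polynomial in $\theta$, hence real analytic. The moments exist since $X$ is polynomial in Gaussians. $\Theta$ is a product of $\mathbb R$'s and $(0,\infty)$'s, hence open and connected. Lemma~\ref{lem:finite-witness-extension} then gives linear independence of the measures for Lebesgue-a.e.\ $\theta$. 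The measures are absolutely continuous, because every observed coordinate receives an independent non-degenerate Gaussian noise additively (clean noise or replacement noise). Linear independence of the measures is therefore equivalent to linear independence of their densities.

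\textbf{Step 3: identifiability of weights and probabilities.} Global linear independence of the density family is the simpler sufficient condition in Lemma~\ref{lem:selected-weights}. It yields identifiability of all aggregate mixture weights, in particular those of the clean assignment and of the reduced singletons. By Lemma~\ref{lem:singleton-not-merged}, these are singleton classes; that lemma is stated for the linear model but transfers via Lemma~\ref{lem:poly-linear-same-equivalences}. Their rows form the full-rank matrix in Step 3 of the proof of Theorem~\ref{thm:linear}. If all $q_\ell\in(0,1)$, Theorem~\ref{thm:identifiability} recovers $\pib_{\rm red}$. To allow $q_\ell=0$, I use the explicit formula $q_\ell=m_\ell/(m_0+m_\ell)$ with $m_0>0$, as in Step 4 of the proof of Theorem~\ref{thm:linear}.

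The main obstacle is Step 1. One must make sure that the witness point lies in $\Theta$, and that "distinct classes" there means the quotient used in this theorem, which is exactly what Lemma~\ref{lem:poly-linear-same-equivalences} is meant to guarantee. The remaining steps are direct invocations of the earlier lemmas.
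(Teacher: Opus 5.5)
Your proposal is correct and follows the paper's proof essentially step for step. It fixes a linear-Gaussian witness point, justified by the coincidence of polynomial and linear structural equivalences; it uses finite polynomial test functions with nonvanishing determinant, extended generically via polynomial (hence analytic) moments; it passes from measures to densities; and it recovers the reduced probabilities from the clean and singleton weights, including the formula $q_\ell=m_\ell/(m_0+m_\ell)$ for the boundary case.

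The only point to tighten is absolute continuity. An independent additive Gaussian noise in each coordinate is not enough on its own. You should invoke, as the paper does, that conditional on the remaining sources the map from the chosen noises to $X$ is triangular in topological order with nonzero diagonal entries $\sigma_j$ or $\tau_j$.
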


\begin{proof}
\textbf{Generic linear independence of measures.}

By Lemma~\ref{lem:poly-linear-same-equivalences}, the reduced assignment
classes in the polynomial Gaussian class coincide with those in the
linear-Gaussian submodel. By Theorem~\ref{thm:linear}, we can therefore choose a
witness point \(\theta_0\) in the linear-Gaussian submodel, with all
higher-order polynomial coefficients set to zero, outside the corresponding
Lebesgue-null exceptional set. At this point, the assignment-conditional
densities indexed by the full reduced structural equivalence classes are
linearly independent. Since these
densities are densities with respect to Lebesgue measure, the corresponding
observed probability measures
\[
\mu_{a_1}^{\theta_0},\ldots,\mu_{a_m}^{\theta_0}
\]
are linearly independent as finite signed measures. Moreover, at \(\theta_0\)
the model is linear Gaussian, so these measures are non-degenerate Gaussian
measures. Lemma~\ref{lem:finite-polynomial-witnesses} therefore implies that
there exist polynomial test functions \(h_1,\ldots,h_m\) such that
\[
D(\theta_0)
:=
\det\left[
\int h_\ell(x)\,d\mu_{a_i}^{\theta_0}(x)
\right]_{i,\ell=1}^m
\neq0 .
\]

For general \(\theta\in\Theta\), define
\[
D(\theta)
:=
\det\left[
\int h_\ell(x)\,d\mu_{a_i}^{\theta}(x)
\right]_{i,\ell=1}^m .
\]
By Lemma~\ref{lem:polynomial-moments}, each matrix entry is polynomial in
\(\theta=(\beta,\sigma,\alpha,\tau)\). Hence \(D\) is a polynomial, and
therefore real analytic, function on \(\Theta\). Since \(D(\theta_0)\neq0\),
\(D\) is not identically zero. Because \(\Theta\) is connected and open, the
zero set of \(D\) has Lebesgue measure zero. For every \(\theta\) outside this
zero set, Lemma~\ref{lem:finite-witness-extension} implies that
\[
\mu_{a_1}^{\theta},\ldots,\mu_{a_m}^{\theta}
\]
are linearly independent as finite signed measures. 

\textbf{From measures to densities and mixture weights.}
For every \(\theta\in\Theta\) and every assignment \(a\), the observed law
\(\mu_a^\theta\) is absolutely continuous with respect to Lebesgue measure.
Indeed, all structural noises and replacement variables have strictly positive
Gaussian densities and enter additively with positive standard deviations.
For each observed coordinate \(X_j\), choose an independent Gaussian source that
enters \(X_j\) directly: \(\eta_j\) if \(z_j=w_j=0\),
\(\eta_j^{\mathrm{mech}}\) if \(z_j=1,w_j=0\), and
\(\eta_j^{\mathrm{meas}}\) if \(w_j=1\). Conditional on all remaining
exogenous sources, the map from these \(d\) chosen Gaussian variables to
\((X_1,\ldots,X_d)\) is triangular in topological order with nonzero diagonal
entries \(\sigma_j\) or \(\tau_j\). Therefore the conditional law has a
Lebesgue density, and integrating over the remaining exogenous variables
preserves absolute continuity. Thus we may write
\[
d\mu_a^\theta(x)=p_a^\theta(x)\,dx .
\]
Any linear relation among the measures has Radon--Nikodym derivative
equal to the corresponding linear relation among the densities. Hence linear
independence of the measures and of the densities coincide. Therefore, outside
the same null set, the reduced assignment-conditional densities are linearly
independent. By Lemma~\ref{lem:selected-weights}, the aggregate mixture weights
of all reduced assignment classes are identifiable in the associated
fixed-component mixture.

\textbf{Recovery of reduced intervention probabilities.}

The full assignment set contains the clean assignment and all singleton
interventions. By Lemma~\ref{lem:singleton-not-merged}, each of these
assignments is not structurally equivalent to any other assignment, except for
the intended sink-node collapse: the lemma exhibits a linear-Gaussian parameter
setting at which the corresponding assignment-conditional densities differ.
Because this linear-Gaussian setting is contained in the polynomial model class,
the same non-equivalence holds for the polynomial class. Thus the full reduced
assignment set contains distinct classes for the clean assignment and all
reduced singleton interventions, whose aggregate weights are identifiable.

For these selected classes, the reduced assignment matrix \(A\) has one row
\[
(1,0,\ldots,0)
\]
for the clean assignment and one row
\[
(1,e_\ell)
\]
for each reduced singleton coordinate
\[
(z_{[d]\setminus S},w_{[d]\setminus S},s_S).
\]
Subtracting the clean row from each singleton row gives the standard basis of
\(\mathbb R^{2d-|S|}\). Hence \(A\) has full column rank.

If all reduced Bernoulli coordinates satisfy \(q_\ell\in(0,1)\), the log-odds
argument of Theorem~\ref{thm:identifiability} recovers the reduced intervention
probabilities. The case \(q_\ell=1\) is excluded by the rare-anomaly setting,
since it would assign zero probability to the all-clean assignment. Zero
boundary cases \(q_\ell=0\) are also identifiable from the clean and singleton
weights. Let \(m_0\) be the all-clean weight and \(m_\ell\) the singleton weight
for coordinate \(\ell\). Since
\[
m_0=\prod_r(1-q_r),
\qquad
m_\ell=q_\ell\prod_{r\neq \ell}(1-q_r),
\]
we have, whenever \(m_0>0\),
\[
q_\ell=\frac{m_\ell}{m_0+m_\ell}.
\]
Thus \(q_\ell=0\) is identified exactly when the corresponding singleton weight
is zero. This recovers
\[
\pi_j^{\mathrm{mech}},\quad \pi_j^{\mathrm{meas}}\quad (j\notin S),
\qquad
\rho_j \quad (j\in S).
\]
\end{proof}

\begin{remark}[Scope]
The results above cover two analytically tractable model classes: linear
Gaussian SCMs and fixed-degree polynomial Gaussian additive-noise SCMs containing
the linear-Gaussian submodel. The finite-witness strategy may extend to broader
finite-dimensional nonlinear SCM classes whenever the relevant test moments are
real analytic in the model parameters, but establishing such moment analyticity
requires additional assumptions, for example on domains and integrability.

Our identifiability statement is deliberately limited. We do not claim to recover
the structural parameters of the clean SCM or the parameters of the replacement
distributions. Rather, we show that, for generic fixed choices of these
parameters, the reduced intervention probabilities are identifiable from the
corresponding fixed-component mixture.

This is separate from the identifiability of the clean SCM itself. In nonlinear
additive-noise models, the causal direction and, more generally, the DAG can be
identifiable under standard ANM assumptions, except for known degenerate cases
such as the linear-Gaussian setting \citep{hoyer:2008:nonlinear}. In linear Gaussian SCMs, the DAG is
generally identifiable only up to Markov equivalence without additional
assumptions; full DAG identifiability can be obtained under equal error
variances \citep{peters:2014:identifiability}. 
\end{remark}

\subsection{Algorithmic Details}
\label{apx:algorithm}
\begin{algorithm}
\caption{\textsc{\ourmethod}}
\label{alg:anomdetect}
\begin{algorithmic}
\STATE \textbf{Input:} DAG $\mathcal{G}$, dataset $\Xb$
\STATE \textbf{Output:} anomaly assignment $\mathbf{A}=(\mathbf{\Gen},\mathbf{\Meas})$,
feature scores $\mathcal{S}$, confidences $\mathcal{C}$

\FORALL{$ X_k \in \mathcal{G}$}
  \STATE $\hat{f}_k \leftarrow \textsc{RobustlyRegress}(\mathrm{Pa}_k, X_k)$
  \STATE $R_k \leftarrow X_k - \hat{f}_k(\Pa_k)$
  \STATE $\mathcal{L}(R_k) \leftarrow \textsc{EstimateNoise}(R_k)$
\ENDFOR

\STATE $\mathbf{A} \leftarrow \textsc{MLE-Assign}(\Xb,\mathcal{G},\{\hat f_k\},\{\mathcal{L}(R_k)\})$
\STATE $\mathcal{S} \leftarrow \textsc{Score}(\mathbf{A},\Xb,\mathcal{G},\{\hat f_k\},\{\mathcal{L}(R_k)\})$ %(Eq.~\eqref{eq:scores})
\STATE $\mathcal{C} \leftarrow \textsc{Conf}(\mathbf{A},\Xb,\mathcal{G},\{\hat f_k\},\{\mathcal{L}(R_k)\}) $ %(Eq.~\eqref{eq:conf})

\STATE \textbf{return} $\mathbf{A}, \mathcal{S}, \mathcal{C}$
\end{algorithmic}
\end{algorithm}
This section provides additional details on the proposed algorithm. Algorithm~\ref{alg:anomdetect} in Sec.~\ref{sec:algo} illustrates the high-level procedure. Here, we elaborate on the key components: the fitting of the causal mechanisms, the estimation of the noise distributions, and the heuristic assignment and gain calculations.

One practical approximation is that we do not compute the full scores in Equation~\eqref{eq:scores} for all samples and features. For samples that are unassigned and thus likely clean, we only compute the $O(1)$ mechanistic gains, avoiding the costly marginalization over all features for measurement gains. The rationale is that unassigned samples are expected to have low anomaly scores, so considering both outlier types provides minimal additional benefit.  

\subsubsection{Causal Mechanisms}
To estimate the causal mechanisms, we use \texttt{qgam} \citep{fasiolo:2021qgam} with $q = 0.5$. We select the number of knots per parent as a simple rule of thumb:
\[
m_{j,k} = \min\left\{\max\Big(5, \lfloor N^{1/3}/2 \rfloor \Big), 30 \right\}.
\]
Knots are placed at empirical quantile breakpoints of the parent covariates, so that dense regions receive more knots (allowing flexibility) while sparse regions receive fewer (reducing variance).  

\subsubsection{Marginalization}
\label{apx:marginalization}

Here, we provide further details on how to approximate the integral
\begin{align}
\int 
\underbrace{\prod_{j \notin \Genc_\xb} \hat p(x_j \mid \pa_j)}_{\text{'clean' likelihood}}
\underbrace{\prod_{j \in \Genc_\xb} \tilde p_{\GenRV_j}(x_j)}_{\text{mechanistic outlier density}}
\underbrace{\prod_{j \in \Mc_\xb} \mathrm{d}x_j}_{\text{latent clean values}} .
\label{eq:margin_app}
\end{align}

We approximate \eqref{eq:margin_app} via Monte Carlo integration over the latent clean values.
Let $\ass = (\genb,\measb)$ be a fixed assignment and define
$\Genc_\xb = \{j : \gen_j = 1\}$ and $\Mc_\xb = \{j : \meas_j = 1\}$.
For each $j \in \Mc_\xb$, let $U_j$ denote the structural noise variable of node $j$.
Since we explicitly learn the distributions $P_{U_j}$, which capture the clean conditional noise, from the residuals of the causal mechanism fits, we can sample form these distributions.
Latent clean values are obtained by sampling $M$ times from the corresponding conditional distributions $P_{U_j}$ for nodes in $\Mc_\xb$ and propagating samples in topological order, and the resulting likelihoods are then averaged:

\begin{align}
\int 
\underbrace{\prod_{j \notin \Genc_\xb} \hat p(x_j \mid \pa_j)}_{\text{'clean' likelihood}}
\underbrace{\prod_{j \in \Genc_\xb} \tilde p_{\GenRV_j}(x_j)}_{\text{mechanistic outlier density}}
\underbrace{\prod_{j \in \Mc_\xb} \mathrm{d}x_j}_{\text{latent clean values}} 
\approx\frac{1}{M} \sum_{m=1}^M 
\underbrace{\prod_{j \notin \Genc_\xb \cup \Mc_\xb} \hat p(x_j^{(m)} \mid \pa_j^{(m)})}_{\text{'clean' likelihood}}
\underbrace{\prod_{j \in \Genc_\xb \setminus \Mc_\xb} \tilde p_{\GenRV_j}(x_j^{(m)})}_{\text{mechanistic outlier density}},
\end{align}

\[
x_j^{(m)} =
\begin{cases}
\hat f_j(\pa_j^{(m)}) + \tilde u_j^{(m)}, & j \in \Mc_\xb \setminus \Genc_\xb,\\
\tilde o_j^{(m)}, & j \in \Mc_\xb \cap \Genc_\xb,\\
x_j, & j \notin \Mc_\xb,
\end{cases}
\qquad
\tilde u_j^{(m)} \sim P_{U_j},\tilde o_j^{(m)} \sim P_{\GenRVj}.
\]

By the law of large numbers, the Monte Carlo approximation converges to the true marginalization for $M \to \infty$.

\subsubsection{Heuristic MLE Assignment}  
The assignment procedure is summarized in Algorithm~\ref{alg:mleassign}. The algorithm iteratively and greedily accepts candidate outlier assignments based on \emph{optimistic gains}, which can be estimated without performing full marginalization. The procedure is decomposed into two main steps: assigning measurement outliers and then attributing residuals to mechanistic outliers.  

Measurement outliers are first considered by sorting candidates according to their optimistic gains. Each candidate’s gain is compared against the optimal alternative gain obtained from mechanistic attribution. Only candidates with positive optimistic gains that exceed the mechanistic alternative are accepted. The process continues until no remaining measurement candidates have positive optimistic gains.  

For a measurement outlier candidate at node $j$ and sample $i$, the optimistic gain $g^\mathit{opt}_{i,j}$ assumes that all affected residual likelihoods improve maximally, and then subtracts the cost of encoding the outlier and updating the Bernoulli probability. Specifically, the Bernoulli probability update can be approximated using the log of the binomial coefficient: encoding a sequence of $k$ successes and $n-k$ failures with prefix codes based on empirical probabilities yields  
\[
- k \log \frac{k}{n} - (n-k) \log \frac{n-k}{n} \approx -\log \binom{n}{k},
\]  
i.e., the sequence-level Bernoulli code length is asymptotically equivalent to the binomial code length. For node $j$ with $\sum_{i=1}^N \Meas_j$ current measurement outliers, adding a new sample incurs a prior cost  
\[
- \log \hat p_{\MeasRVj}  + \Big( -\log \binom{N}{\sum_{i=1}^N \Meas_j + 1} + \log \binom{N}{\sum_{i=1}^N \Meas_j} \Big).
\]  

The full optimistic measurement gain is then
\begin{align*}
g^\mathit{meas-opt}_{i,j} = & \sum_{k \in \{j\} \cup \mathrm{Ch}(j)} 
\Big[- \log \mathcal{L}(r_{ik}) - \min_r \{- \log \mathcal{L}(r) \} \Big]  + \log \hat p_{\MeasRVj}(x_{i,j})  \\
& - \Big[ - \log \binom{N}{\sum_{i=1}^N \Meas_j + 1} + \log \binom{N}{\sum_{i=1}^N \Meas_j} \Big],
\end{align*}  
where $\mathrm{Ch}(j)$ denotes the children of node $j$, $\mathcal{L}(r_{ik})$ is the likelihood of the residual for node $k$ and sample $i$, and $\hat p_{\MeasRVj}$ is the current estimated prior of a measurement outlier at node $j$.  

Intuitively,
\begin{enumerate}
    \item The first term captures the maximal improvement in likelihood for affected nodes.  
    \item The second term penalizes the prior probability of adding a new outlier.  
    \item The third term accounts for the combinatorial cost of updating the number of outliers.  
\end{enumerate}

Mechanistic gains are defined analogously, but depend only on the residual of the node itself, i.e.
\begin{align*}
g^\mathit{mech-opt}_{i,j} = & 
- \log \mathcal{L}(r_{ij}) - \min_r \{- \log \mathcal{L}(r) \}  \\
& + \log \hat p_{\GenRVj}(x_{i,j}) \\
& - \Big[ - \log \binom{N}{\sum_{i=1}^N \Gen_j + 1} + \log \binom{N}{\sum_{i=1}^N \Gen_j} \Big].
\end{align*}  
If a node is not in the MB of a measurement outlier, this estimate is exact. Otherwise, exact gains are evaluated by calculating the likelihood after adding the outlier to the assignment, approximating marginalization as needed.

The overall procedure is as follows: \textsc{MLE-Assign} performs the greedy measurement and mechanistic assignment passes. Candidates are processed in descending order of optimistic gain for measurements, and descending residual cost for mechanistic outliers. The final assignment $ \mathbf{A} = (\mathbf{\Gen}, \mathbf{\Meas})$ is then used to compute scores and classes.

\begin{algorithm}[tb]
\caption{\textsc{MLE-Assign}}
\label{alg:mleassign}
\begin{algorithmic}
\STATE \textbf{Input:} dataset $D$ of size $N$, DAG $\mathcal{G}$,
regressors $\{\hat f_k\}$, likelihoods $\{\mathcal{L}(R_k)\}$
\STATE \textbf{Output:} assignment $\mathbf{A}=(\mathbf{\Gen},\mathbf{\Meas})$

\STATE \textbf{Notation:}
$g^{\text{meas-opt}}_{i,j}/g^{\text{mech-opt}}_{i,j}$ optimistic gains;
$g^{\text{meas}}_{i,j}/g^{\text{mech}}_{i,j}$ actual gains

\STATE \textbf{Initialization:}
\STATE $\mathbf{A} \leftarrow (\mathbf{0},\mathbf{0})$
\STATE Compute $g^{\text{opt}}_{i,j}$ for all $(i,j)$
\STATE $C \leftarrow \text{sorted}(\{(i,j)\}, g^{\text{meas-opt}}_{i,j})$ (descending)

\STATE \textbf{Measurement pass:}
\FORALL{$(i,j) \in C$ \textbf{with} $g^{\text{opt}}_{i,j} > 0$}
  \STATE Compute $g^{\text{meas}}_{i,j}$
  \STATE $g^{\text{comp}}_{i,j} \leftarrow 
    \sum_{k \in \{j\} \cup \mathrm{Ch}(j),\; g^{\text{mech}}_{i,k} > 0}
    g^{\text{mech}}_{i,k}$
  \IF{$g^{\text{meas}}_{i,j} > g^{\text{comp}}_{i,j}$}
    \STATE $\mathbf{\Meas}[i,j] \leftarrow 1$
    \STATE Update $g^{\text{opt}}_{i,\ell}$ for affected $(i,\ell)$
  \ENDIF
\ENDFOR

\STATE \textbf{Mechanistic pass:}
\STATE $C \leftarrow \text{sorted}(\{(i,j)\}, g^{\text{mech-opt}}_{i,j})$ (descending)
\FORALL{$(i,j) \in C$}
  \IF{$\mathbf{\Meas}[i,j] = 1$}
    \STATE \textbf{continue}
  \ENDIF
  \STATE Compute $g^{\text{mech}}_{i,j}$
  \IF{$g^{\text{mech}}_{i,j} > 0$}
    \STATE $\mathbf{\Gen}[i,j] \leftarrow 1$
  \ELSE
    \STATE \textbf{continue}
  \ENDIF
\ENDFOR

\STATE \textbf{return} $\mathbf{A}=(\mathbf{\Gen},\mathbf{\Meas})$
\end{algorithmic}
\end{algorithm}

\subsubsection{Complexity.}
Fitting one \texttt{qgam} per non-root node \(k\) has complexity
\(O(Nm_k^2)\), where \(m_k\) is the number of spline basis functions used for
node \(k\).  With
\[
m_k = O\!\left(N^{1/3}|\Pa_k|\right),
\]
the total fitting cost is
\[
\sum_{k=1}^d O(Nm_k^2)
=
O\!\left(
N^{5/3}\sum_{k=1}^d |\Pa_k|^2
\right).
\]
In the dense worst case this is \(O(N^{5/3}d^3)\), while for bounded in-degree
\(\Delta\) it is \(O(N^{5/3}d\Delta^2)\).

Mechanistic gains are inexpensive once the clean residual likelihoods are
cached, giving \(O(1)\) per candidate.  For measurement gains, we perform a
constant number of model evaluations in the Markov-blanket space \(\MB_j\).
Thus computing measurement gains for all samples and features costs
\[
O\!\left(
N\sum_{j=1}^d |\MB_j|
\right),
\]
which is \(O(Nd^2)\) in the dense worst case and \(O(Nd\Delta_{\MB})\) when
Markov blankets have size at most \(\Delta_{\MB}\).

The greedy search evaluates only a linear number of candidate gains per sample,
rather than enumerating the exponential assignment space.  Therefore the total
worst-case complexity is
\[
O\!\left(
N^{5/3}\sum_{k=1}^d |\Pa_k|^2
+
N\sum_{j=1}^d |\MB_j|
\right).
\]
In the dense worst case this becomes
\[
O(N^{5/3}d^3 + Nd^2)
=
O(N^{5/3}d^3),
\]
whereas for bounded in-degree and bounded Markov-blanket size it is
\[
O(N^{5/3}d + Nd).
\]
In practice, parent sets and Markov blankets are typically much smaller than
\(d\), and only a subset of measurement candidates is evaluated.

\subsection{Implementation Details}
\label{apx:implementation}

In this section, we provide both the implementation details of all algorithms, we compared to, and the hyperparameter selection that we performed.

\subsubsection{Implementation}

\ourmethod{} is implemented in Python, and uses the R package \texttt{qgam} \citep{fasiolo:2021qgam}. Most of our competing detection methods, namely \AUTOENCODER \citep{sakurada:2014:aead}, \DEEPSVDD \citep{ruff:2018:deepsvdd}, \IFOREST \citep{liu:2008:isolation}, \COPOD \citep{li:2020:copod}, \LOF \citep{breunig:2000:lof}, are implemented with the Python library \texttt{pyod} \citep{zhao:2019:pyod}. We implemented the \NORMALFLOW-based \citep{jimenez:2015:normalizingflows} anomaly detection using the Python library \texttt{pzflow} \citep{crenshaw:2024:pzflow}. 

The competing RCA methods are combined with the best sample detector (aside from \ourmethod) of the respective experiment. For the synthetic data, this is \AUTOENCODER and for the real data \NORMALFLOW. After applying the sample detectors, we threshold such that the highest $p \%$ of samples with the highest scores are removed from training, where $p$ corresponds to the  true percentage of anomalies. 

For the calculation of Shapley values, we use the kernel explainer implemented in the \texttt{shap} library \citep{lundberg:2017:shap} with 100 background samples. For \SMOOTHTRAVERSAL and \SCOREORDER, we use the published implementations. For \DICE \citep{mothilal:2020:dice}, and \MOE \citep{angiulli:2024:m2oe}, we adapted the implementations published by the authors to our setting. We combined \DICE with a thresholded autoencoder to simulate a classification problem.

As mentioned in the main paper, \SCM is an adaptation of existing SCM-based attribution approaches \citep{budhathoki:2022:rootcause,strobl:2023:samplerca,strobl:2023:hetero}.
The key distinction in our setting is that we aim to attribute anomalousness at the level of an entire sample, rather than the outlierness of an individual feature \citep{budhathoki:2022:rootcause} or a binary target \citep{strobl:2023:samplerca}. In our formulation, sample anomalousness can be quantified with the joint density, which factorizes into independent noise distributions. Consequently, the conditional likelihoods already capture the individual feature contributions, making an explicit attribution step unnecessary.
For the SCM, we use the exact same implementation as \ourmethod to ensure a fair comparison.

\subsubsection{Hyperparameters}
Since all of our tasks are unsupervised, we tuned all hyperparameters, by selecting the ones with the best performance in our synthetic setup with different seeds than the data used for the actual evaluation.

\paragraph{\ourmethod}
The only hyperparameter of \ourmethod{} are the number of Monte Carlo samples, which we fixed to $K=100$ across all experiments, and the KDE parameters. We fix the trimmed fraction $\alpha=0.01$, and use a Gaussian kernel with bandwidth selected via Silverman's rule of thumb~\citep{silverman:2018:density} across all experiments.

\paragraph{Detection methods}
For competing detection methods, we conducted grid searches over:
$
\mathit{bs} \in \{32, 64, 128\}, \quad 
e \in \{10, 100, 500\}, \quad 
\mathit{est} \in \{100, 200, 300\}, \quad
\mathit{neigh} \in \{10, 20, 40\}.
$
The best configurations are summarized in Tab.~\ref{tab:sample-hparams}.

\begin{table}[h]
\centering
\caption{Best hyperparameters for detection methods.}
\label{tab:sample-hparams}
\resizebox{0.5\textwidth}{!}{%
  \input{tables/hyperpar_detection.tex}
}
\end{table}

\paragraph{RCA methods}
For the RCA methods, we searched over the same ranges of epochs and batch sizes. For \MOE, we additionally tuned the number of neighbors in the KNN step, $K \in \{10, 50, 100\}$, and for \SMOOTHTRAVERSAL and \SCOREORDER we tested the anomaly detectors \{\textsc{MedianCDFQuantile},\textsc{InverseDensity}\}. The selected configurations are shown in Tab.~\ref{tab:feature-hparams}.

\begin{table}[h]
\centering
\caption{Best hyperparameters for feature-level baselines.}
\label{tab:feature-hparams}
\resizebox{0.5\textwidth}{!}{%
  \input{tables/hyperpar_rca.tex}
}
\end{table}

\paragraph{\MARGINAL}
For \MARGINAL, we also compare two marginal outlier detectors based on estimated density and median distance, as well as significance levels $\alpha \in \{0.001,0.01,0.05,0.1\}$ with and without Bonferroni correction. The best configuration uses the density-based detector with $\alpha=0.01$ with Bonferroni correction.

\subsubsection{Error bars}
Unless stated otherwise, shaded regions in all plots denote mean \(\pm 1.96\) standard errors across 20 independent runs.

\subsubsection{Hardware}
All experiments were run on a CPU cluster. Each individual run used a single CPU core, and no GPU acceleration was required. Peak memory usage was below 8 GB per run. The reported experiments required approximately 200 CPU-hours in total, dominated by repeated synthetic and robustness runs. Representative individual runtimes for the main methods are reported in Appx.~\ref{apx:runtime}.

\subsection{Datasets}
\label{apx:datasets}

This section provides further detail about the synthetic and real-world datasets.
\subsubsection{Synthetic data}
We generate synthetic observations $X \in \mathbb{R}^{N=2000 \times D=15}$ from a structural causal model (SCM) over a random DAG $\mathcal{G}$ with $15$ nodes. Each edge $(i,j)$, $i>j$, is included independently with probability $p_e = 0.3$, yielding an acyclic graph.

Clean variables follow
\[
X_j^* = f_j(\mathrm{Pa}_j) + U_j,
\]
where $U_j \sim \mathcal{N}(0, \sigma^2)$ with $\sigma = 0.1$, and $f_j$ is an additive polynomial mechanism normalized to $[0,1]$ on the clean data.

\textbf{Polynomial generation:} Each function $f_j$ is constructed as a sum of univariate polynomial components over its parent nodes. For a single parent input $A$, we generate a polynomial of length $l$ as
\[
f(A) = \sum_{i=1}^l a_i (A - b_i)^{e_i},
\]
where
\begin{itemize}
    \item $a_i$ are random coefficients sampled uniformly from $[-1,-0.1] \cup [0.1,1]$,
    \item $b_i$ are random shifts sampled uniformly from $[-3,3]$,
    \item $e_i \in \{2,3\}$ are randomly chosen exponents.
\end{itemize}

For multivariate parents, the node function is the sum of the univariate polynomials applied independently to each parent:
\[
f_j(\mathrm{Pa}_j) = \sum_{k \in \mathrm{Pa}_j} f_{jk}(X_k),
\]
where each $f_{jk}$ is generated as above. After evaluating the raw polynomials on the parent data, the outputs are normalized to $[0,1]$ using min/max scaling to ensure numerical stability. We set $l=2$ for all experiments.

Asymmetric outliers are introduced independently by sampling from the true distribution of $X_j^*$ and shifting by $k$ empirical standard deviations. For mechanistic outliers, downstream nodes are recalculated with resampled noise. This procedure is repeated across five seeds, and averages are reported.

We choose outlier ratios to obtain 10\% sample contamination, with measurement and mechanistic outliers equally probable. Specifically,
\[
\MeasPj = \GenPj = 1 - (1 - 0.1)^{\frac{1}{2\cdot 15}} \approx 0.0035.
\]

In every synthetic experiment, we average results over 20 random seeds, generating new graphs, mechanisms, and outlier injections each time.
  
\subsubsection{Sachs dataset}

The Sachs dataset consists of simultaneous measurements of 11 phosphorylated proteins and phospholipids in human immune cells \citep{sachs:2005:causal}. The data comprise both observational and interventional samples collected under multiple experimental conditions, and are commonly used as a benchmark for causal discovery. A consensus causal graph, derived from prior biological knowledge, is typically treated as ground truth. We present this graph in Fig.~\ref{fig:sachs_dag}. Although the underlying biological signaling network may contain feedback loops, the benchmark graph used in most evaluations is represented as a directed acyclic graph (DAG). In our experiments, we follow this convention and remove the edge from \texttt{PIP3} to \texttt{plcg} to ensure acyclicity. The dataset contains $853$ observational samples, and we introduce outliers as follows.

We simulate two types of outliers: (i)~mechanistic, created by injecting a small fraction of interventional samples into the observational set, and (ii)~measurement, created by selecting a column of an observational sample and replacing its value with the one of a value from a sample of the respective interventional distribution.
For each variable, we introduce five mechanistic and five measurement outliers. We repeat this procedure across 20 random seeds.

\subsubsection{Causal Chambers dataset}

In addition, we use the \emph{Causal Chamber} benchmark \citep{gamella:2025:causalchamber}. It provides observational and interventional data from a controlled physical system with known ground-truth DAG. The  graph is provided in Fig.~\ref{fig:causal_chamber}. In each run we sample $2\,000$ observational samples and introduce outliers as follows.

As for the Sachs dataset, we simulate two types of outliers: (i)~mechanistic, created by injecting a small fraction of interventional samples into the observational set, and (ii)~measurement, created by selecting a column of an observational sample and replacing its value with the one of a value from a sample of the respective interventional distribution. Precisely, we randomly add $\frac{N}{20} \frac{0.05}{0.95} \approx$ 26 random samples from each interventional dataset to contaminate approximately 5\% of samples with mechanistic outliers. Additionally, we create the same amount of measurement outliers for each variable, by randomly selecting a column, and replacing it with the interventional column of an interventional sample. Again, we repeat this procedure across 20 random seeds.

The benchmark includes interventions labels as \emph{mid} and \emph{strong} for all variables but $L_{32},L_{31},L_{22},L_{21},L_{12},L_{11}$. We evaluate the performance on the different strengths separately. Since the Sachs dataset contains much weaker interventions, we can separate the datasets by weak, medium and strong interventions. We provide the distribution of absolute z-scores of the interventional samples in Fig.~\ref{fig:sachs_intervention_strengths}.

\begin{figure}
  \begin{subfigure}{0.33\textwidth}
    \centering
    \input{figs/sachs_shifts.tex}
    \caption{Sachs}
  \end{subfigure}%
  \begin{subfigure}{0.33\textwidth}
    \centering
    \input{figs/cc_mid_shifts.tex}
    \caption{Causal Chamber, mid}
  \end{subfigure}%
  \begin{subfigure}{0.33\textwidth}
    \centering
    \input{figs/cc_strong_shifts.tex}
    \caption{Causal Chamber, strong }
  \end{subfigure}%
  
  \caption{Distribution of absolute z-scores of interventional samples in the Sachs dataset (left), and Causal Chamber dataset with mid (center) and strong (right) interventions.}
  \label{fig:sachs_intervention_strengths}
\end{figure}

\begin{figure}
  \input{figs/sachs_dag.tex}
  \centering
  \caption{Consensus DAG of the Sachs dataset.}
  \label{fig:sachs_dag}
\end{figure}

\begin{figure}
  \input{figs/causal_chamber.tex}
  \centering
  \caption{DAG of the light tunnel dataset of the \emph{Causal Chamber} benchmark.}
  \label{fig:causal_chamber}
\end{figure}

\subsubsection{NYC Yellow Taxi Trip Dataset}

\begin{figure}[t]
    \centering
    \input{figs/case_study_dag.tex}
    \caption{Causal relationships of the NYC Taxi Dataset.}
    \label{fig:case_study_dag}
\end{figure}%
For the case study, we use the New York City Taxi \& Limousine Commission (NYC TLC) Yellow Taxi trip dataset obtained from Kaggle \citep{elemento:2015:nyc}. The analysis is restricted to data from January 2015 only. The dataset contains trip-level records for yellow taxis in New York City, including pickup and drop-off timestamps, trip distance, fare-related variables, passenger count, and geographic pickup and drop-off information recorded as coordinates. As an administrative dataset collected by the NYC TLC, it provides comprehensive coverage of yellow taxi activity during the study period. We preprocess the data by removing records with trivially missing or incorrect values such as zero coordinates, and select a random sample of size of $5\,000$ for our case study.

\subsubsection{Dataset licenses and terms.}
The Sachs protein-signaling dataset is publicly available; we use the version
distributed on Zenodo under a CC BY 4.0 license. The Causal Chambers datasets
are distributed by the authors under a CC BY 4.0 license. The NYC Yellow Taxi
Trip records are publicly available from the NYC Taxi and Limousine Commission
(TLC); we use them according to the applicable NYC TLC / NYC Open Data terms of
use.

\subsection{Additional Related Work}
\label{apx:related_work}
In addition, to the related work in the field of root cause analysis, we briefly discuss the broader field of outlier detection and explainable outlier detection.
\paragraph{Outlier Detection}
Outlier detection aims to identify observations that deviate from the bulk of the data. Traditional statistical methods, such as feature-wise Z-scores and Mahalanobis distance \citep{mahalanobis:1936:mahalanobi}, are interpretable but limited in capturing complex dependencies. Machine learning-based approaches, including Local Outlier Factor \citep[LOF][]{breunig:2000:lof}, Isolation Forests \citep{liu:2008:isolation}, and One-Class SVM \citep{schoelkopf:01:svdd}, improve flexibility and robustness to structured deviations. Deep models, such as autoencoders \citep{sakurada:2014:aead}, normalizing flows \citep{jimenez:2015:normalizingflows}, and VAEs \citep{kingma:2013:vae}, further capture highly nonlinear relationships at the cost of interpretability. For a comprehensive survey, see \citep{chandola:2009:anomalysurvey,pang:2021:deepanomalysurvey}.

\paragraph{Explainable Outlier Detection}
To improve interpretability, feature attribution methods like \LIME~\citep{ribeiro:2016:lime} and \SHAP~\citep{lundberg:2017:shap} have been adapted to anomaly scores \citep{friedman:2021:anomshap}, while counterfactual approaches \citep{wachter:2017:counterfactual,mothilal:2020:dice} identify minimal changes that would alter predictions. More recent methods \citep{angiulli:2024:m2oe} use masked transformation-based optimization to localize contributing features. For a survey on the field, see \citep{li:2023:survey}.
These methods provide feature-level explanations, but are largely correlation-based and rely on the learned structure of non-interpretable black-box models, which makes correct interpretation and classification of outlier types challenging.

\subsection{Additional Experiments}
In this section, we present a series of additional experiments designed to further evaluate and analyze our approach.

\label{apx:experiments}

\subsubsection{Additional metrics}
In this section, we report additional evaluation metrics for the root cause localization experiments presented in the main paper.

\input{figs/additional_metrics_rca.tex}
\input{figs/additional_metric_rca_cc.tex}
We provide Top-3 Recall, Top-5 Recall, Average Precision (Avg. Prec.), and AUC in Fig.~\ref{fig:additional_metrics_rca} and Fig.~\ref{fig:additional_metrics_rca_cc}. Both Avg. Prec. and AUC are computed individually for each anomalous sample and then averaged across all anomalous samples.

\label{apx:metrics}

\subsubsection{Violated assumptions}
\label{apx:violations}
In this section, we report additional experiments that evaluate the performance of \ourmethod under violations of its assumptions, including confounding, controlled DAG misspecification, soft interventions, extreme outliers, heteroscedasticity and correlated root causes. 

\paragraph{Correlated root causes}
Instead of sampling mechanistic and measurement outliers independently, we introduce correlation by sampling pairs of variables for which we jointly inject mechanistic and measurement outliers. The results are shown in Fig.~\ref{fig:correlated_root_causes}. \ourmethodstar Maintains strong performance in both localization and classification, indicating that violating the independence assumption with small but perturbation dot not significantly affect the performance.

\begin{figure}
     \begin{subfigure}[t]{0.45\linewidth}
        \centering
        \input{figs/top_k_recall_legend.tex}
    \end{subfigure}
    \hfill
       \begin{subfigure}[t]{0.45\linewidth}
        \centering
      \input{figs/classification_legend.tex}
    \end{subfigure}
  \begin{subfigure}[t]{0.45\textwidth}
    \centering
    \input{figs/rca_correlated.tex}
    \caption{Localization under correlated root causes (Top-$k$ recall, higher is better). We set $k$ to the true number of root causes in each sample.}
  \end{subfigure}
    \hfill
  \begin{subfigure}[t]{0.45\textwidth}
    \centering
       \input{figs/classification_correlated.tex}
    \caption{Classification under correlated root causes (accuracy, higher is better). }
  \end{subfigure}%
  
  \caption{Performance of \ourmethod undercorrelated root causes.}
  \label{fig:correlated_root_causes}
\end{figure}

\paragraph{Dag misspecification}
We analyze the performance of \ourmethod and competitors under three types of DAG misspecification: (i)~edge additions, (ii)~edge removals, and (iii)~edge reversals. For edge additions, we randomly add edges between pairs of nodes that are not connected in the true DAG. For edge removals, we randomly remove existing edges from the true DAG. For edge reversals, we randomly select existing edges and reverse their direction. In each case, we introduce a fixed number of modifications to the DAG and evaluate the performance of \ourmethod and competitors on the modified graph. The results are shown in Fig.~\ref{fig:misspec_add}, Fig.~\ref{fig:misspec_remove} and Fig.~\ref{fig:misspec_reversal}. As expected, DAG misspecifaction leads to a decrease in performance for all methods that rely on a given DAG structure.

\begin{figure}
     \begin{subfigure}[t]{0.45\linewidth}
        \centering
        \input{figs/top_k_recall_legend.tex}
    \end{subfigure}
    \hfill
      \begin{subfigure}[t]{0.45\linewidth}
        \centering
      \input{figs/classification_full_legend.tex}
    \end{subfigure}
  \begin{subfigure}{0.45\textwidth}
    \centering
    \input{figs/rca_misspecification_add.tex}
    \caption{Localization under edge additions.}
  \end{subfigure}
  \hfill
  \begin{subfigure}{0.45\textwidth}
    \centering
       \input{figs/classification_misspecification_add.tex}
    \caption{Classification under edge additions.}
  \end{subfigure}%
  
  \caption{Performance of \ourmethod under edge additions.}
  \label{fig:misspec_add}
\end{figure}

\begin{figure}
     \begin{subfigure}[t]{0.45\linewidth}
        \centering
        \input{figs/top_k_recall_legend.tex}
    \end{subfigure}
    \hfill
       \begin{subfigure}[t]{0.45\linewidth}
        \centering
      \input{figs/classification_full_legend.tex}
    \end{subfigure}
  \begin{subfigure}{0.45\textwidth}
    \centering
    \input{figs/rca_misspecification_remove.tex}
    \caption{Localization under edge removals.}
  \end{subfigure}
  \hfill
  \begin{subfigure}{0.45\textwidth}
    \centering
       \input{figs/classification_misspecification_remove.tex}
    \caption{Classification under edge removals.}
  \end{subfigure}%
  
  \caption{Performance of \ourmethod under confounding and DAG misspecification.}
  \label{fig:misspec_remove}
\end{figure}

\begin{figure}
     \begin{subfigure}[t]{0.45\linewidth}
        \centering
        \input{figs/top_k_recall_legend.tex}
    \end{subfigure}
    \hfill
       \begin{subfigure}[t]{0.45\linewidth}
        \centering
      \input{figs/classification_full_legend.tex}
    \end{subfigure}
  \begin{subfigure}{0.45\textwidth}
    \centering
    \input{figs/rca_dag_misspecification.tex}
    \caption{Localization under edge reversals.}
  \end{subfigure}
  \hfill
  \begin{subfigure}{0.45\textwidth}
    \centering
       \input{figs/classification_misspecification.tex}
    \caption{Classification under edge reversals.}
  \end{subfigure}%
  
  \caption{Performance of \ourmethod under confounding and DAG misspecification.}
  \label{fig:misspec_reversal}
\end{figure}

\paragraph{Confounding}
We evaluate the performance of \ourmethod and its competitors under confounding under varying confounding levels at a shift strength of 3 standard deviations.  We add $p\%$ (of the default of $15$) new variables to the generating SCM, which are hidden from the algorithms. Methods that typically receive the ground truth DAG are no provided to true subgraph among the observed variables. We restrict the evaluation of localization and classification to the observed variables, since root causes in unobserved variables can trivially neither be correctly localized nor classified. The results in Fig.~\ref{fig:confounding} show that performance decreases slightly for the variant that relies on causal discovery to learn the DAG, but the overall effect remains modest even when up to half of the variables in the generating SCM are hidden. 

\begin{figure}
     \begin{subfigure}[t]{0.45\linewidth}
        \centering
        \input{figs/top_k_recall_legend.tex}
    \end{subfigure}
    \hfill
       \begin{subfigure}[t]{0.45\linewidth}
        \centering
        \input{figs/classification_full_legend.tex}
    \end{subfigure}
  \begin{subfigure}{0.45\textwidth}
    \centering
    \input{figs/rca_confounding.tex}
    \caption{Localization under confounding}
  \end{subfigure}
  \hfill
  \begin{subfigure}{0.45\textwidth}
    \centering
       \input{figs/classification_confounding.tex}
    \caption{Classification under confounding}
  \end{subfigure}%
  
  \caption{Performance of \ourmethod under confounding.}
  \label{fig:confounding}
\end{figure}

\paragraph{Soft interventions}
We evaluate the performance of \ourmethod and its competitors under soft interventions. Instead of resampling from the shifted marginal distribution, we introduce additive perturbations to the original values. We vary the magnitude of the perturbations in standard deviations of the marginal, as we did for the hard interventions. Fig.~\ref{fig:soft_interventions} shows the results. \ourmethodstar and \ourmethod outperform the competitors in both localization and classification similarly to the hard intervention setting.
\begin{figure}
     \begin{subfigure}[t]{0.45\linewidth}
        \centering
        \input{figs/top_k_recall_legend.tex}
    \end{subfigure}
    \hfill
       \begin{subfigure}[t]{0.45\linewidth}
        \centering
        \input{figs/classification_legend.tex}
    \end{subfigure}
  \begin{subfigure}{0.45\textwidth}
    \centering
    \input{figs/rca_soft_top_k_recall.tex}
    \caption{Localization under soft interventions.}
  \end{subfigure}
  \hfill
  \begin{subfigure}{0.45\textwidth}
    \centering
       \input{figs/classification_soft.tex}
    \caption{Classification under soft interventions.}
  \end{subfigure}%
  
  \caption{Performance of \ourmethod under soft interventions.}
  \label{fig:soft_interventions}
\end{figure}

\paragraph{Extreme outliers}
We evaluate the performance of \ourmethod and its competitors under extreme outliers, since an implicit assumption of \ourmethod is that we have reasonable estimates of post-interventional densities, which is challenged when our learned spline function have to extrapolate far from the training data. We present the results in Fig.~\ref{fig:extreme_outliers}, and observe that when anomalies become sufficiently strong to be detected through marginal deviations alone, methods based on marginal outlierness (e.g., \SMOOTHTRAVERSAL or \MARGINAL) start to shine. Since marginally extreme outliers are easy to detect, it is straight forward to choose the appropriate method for the task at hand. The results in the main paper show that \ourmethod perform well accross a wide range of shifts ($\leq 6$ standard deviations), but when the shift becomes extreme, other competitors can outperform \ourmethod in localization, while the classification performance remains on the same level as \MARGINAL, which starts to work quite well for extreme outliers.
\begin{figure}
     \begin{subfigure}[t]{0.45\linewidth}
        \centering
        \input{figs/top_k_recall_legend.tex}
    \end{subfigure}
    \hfill
       \begin{subfigure}[t]{0.45\linewidth}
        \centering
        \input{figs/classification_legend.tex}
    \end{subfigure}
  \begin{subfigure}{0.45\textwidth}
    \centering
    \input{figs/rca_extreme_outliers.tex}
    \caption{Localization under extreme outliers.}
  \end{subfigure}
  \hfill
  \begin{subfigure}{0.45\textwidth}
    \centering
       \input{figs/classification_extreme.tex}
    \caption{Classification under extreme outliers.}
  \end{subfigure}%
  
  \caption{Performance of \ourmethod under extreme outliers.}
  \label{fig:extreme_outliers}
\end{figure}
\paragraph{Heteroscedasticity}

We evaluate the performance of \ourmethod and its competitors under heteroscedastic noise.  We model heteroscedastic noise by letting the variance depend on a normalized signal profile $\phi_{j,i}$ derived from standardized deviations of $s_{j,i}= f_j\!\left(x_{\mathrm{Pa}(j),i}\right)$. The noise scale is $\sigma_{j,i} = \sigma_0[(1-h) + h\,\phi_{j,i}]$, where $h\in[0,1]$ interpolates between homoscedastic ($h=0$) and fully heteroscedastic ($h=1$) noise. We set $\sigma_0=0.1$ as for the other experiments and $\rho=0.25$.
We keep the mean shift at 3 standard deviations and vary the heteroscedasticity strength parameter $h$ (0.0, 0.25, 0.5, 0.75, 1.0). The results in Fig.~\ref{fig:heteroscedastic} show that CALI remains robust under heteroscedastic noise, and the causal discovery variant even benefits from the increased variance. 

\begin{figure}
     \begin{subfigure}[t]{0.45\linewidth}
        \centering
        \input{figs/top_k_recall_legend.tex}
    \end{subfigure}
    \hfill
       \begin{subfigure}[t]{0.45\linewidth}
        \centering
        \input{figs/classification_legend.tex}
    \end{subfigure}
  \begin{subfigure}{0.45\textwidth}
    \centering
    \input{figs/rca_hetero.tex}
    \caption{Localization under heteroscedastic noise.}
  \end{subfigure}
  \hfill
  \begin{subfigure}{0.45\textwidth}
    \centering
       \input{figs/classification_hetero.tex}
    \caption{Classification under heteroscedastic noise.}
  \end{subfigure}%
  
  \caption{Performance of \ourmethod under heteroscedastic noise.}
  \label{fig:heteroscedastic}
\end{figure}

\subsubsection{Runtime}
\label{apx:runtime}
In Fig.~\ref{fig:runtime}, we report the runtime of \ourmethod and the competing methods. 
We evaluate scaling in two regimes: first fixing the number of features to \(d=15\) and varying the sample size, and then fixing \(N=2000\) and varying the number of features. 
\ourmethod is slower than the baselines because it evaluates feature-level intervention candidates for all samples, including measurement candidates whose likelihoods require additional Markov-blanket evaluations. 
For the learned-graph variant, the reported runtime also includes causal discovery.

When runtime is a concern, a simple practical acceleration is to restrict assignment updates to samples that are most unlikely under the fitted clean density. 
In our current implementation, \ourmethod considers candidate root causes for all samples whose updates improve the global objective. 
One could instead score only the top-\(k\) samples with largest negative log-likelihood under the clean model, or use this set as an initial active set, thereby reducing the dominant candidate-evaluation cost while preserving focus on the most anomalous observations.

\begin{figure}
  
  \begin{subfigure}[t]{0.33\textwidth}
      \centering
  \input{figs/sample_runtime.tex}
  
  \caption{Sample size}
  \end{subfigure}%
  \begin{subfigure}[t]{0.33\textwidth}
    \centering
    \input{figs/feature_runtime.tex}
    \caption{Feature size}
  \end{subfigure}%
  \begin{subfigure}[t]{0.33\textwidth}
    \centering
       \input{figs/runtime_legend.tex}
    \end{subfigure}
    \caption{Runtimes for increasing sample and feature size.}
    \label{fig:runtime}
\end{figure}

\subsubsection{Anomaly detection}
We compare against \AUTOENCODER~\citep{sakurada:2014:aead}, \DEEPSVDD~\citep{ruff:2018:deepsvdd}, \IFOREST~\citep{liu:2008:isolation}, \COPOD~\citep{li:2020:copod}, \LOF~\citep{breunig:2000:lof}, and \NORMALFLOW~\citep{jimenez:2015:normalizingflows}. 

We report AUC and Average Prevision (AP) for synthetic data and the Causal Chamber dataset (Fig.~\ref{fig:synthetic_sample_full}). Despite primarily being designed for characterization, \ourmethod also performs great at detection.
  
\input{figs/sample_ad_full}

\subsubsection{Additional RCA experiments}
\label{apx:additional_rca}
In this section, we provide various additional RCA experiments to further analyze the performance of \ourmethod and its competitors.
\paragraph{Varying parameters in synthetic data generation.}
We repeat the synthetic experiment but vary either sample size, feature size, edge probability, or the expected number of root causes per expected sample. We use the same default settings as previously ($N=2000,D=15,p_e=0.3$) and vary one of the factors. To analyze how the methods work under multiple root causes, we do one additional experiment we no longer use the independent Bernoulli draws, but instead randomly place $\mathit{rc}_\mathit{count}$ root causes in 10\% of samples, deciding with a coin flip whether we create a measurement or mechanistic outlier.  We present the results in Fig.~\ref{fig:vary}.   

\paragraph{Individual outlier types}
We do an experiment under the default conditions, but only add one outlier type to evaluate how well the methods work for either type. We present the results in Fig.\ref{fig:only_mech_meas}.
While many methods work well when the data contains only measurement outliers, \ourmethodstar, \SCM and \ourmethod perform best when it contains only mechanistic outliers.

\begin{figure}
  \begin{subfigure}[t]{\textwidth}
    \centering
        \input{figs/varying_legend.tex}
    \end{subfigure}
  \begin{subfigure}[t]{0.25\textwidth}
    \centering
    \input{figs/sample_top_k_recall.tex}
  \end{subfigure}%
  \begin{subfigure}[t]{0.25\textwidth}
    \centering
    \input{figs/feature_top_k_recall.tex}
  \end{subfigure}%
  \begin{subfigure}[t]{0.25\textwidth}
    \centering
    \input{figs/edge_prob_top_k_recall.tex}
  \end{subfigure}%
  \begin{subfigure}[t]{0.25\textwidth}
    \centering
    \input{figs/rc_count_top_k_recall.tex}
  \end{subfigure}%

  \caption{Root cause localization under varying parameters of the synthetic data generation.}
  \label{fig:vary}
\end{figure}%

\paragraph{Realistic RCA evaluation.}
To assess how many root causes can be realistically identified in practice, we conduct an additional experiment where we only evaluate RCA methods on samples that are actually detected as anomalous by a sample-level detector. This evaluation captures the practically relevant question of how many true root causes can be recovered when both anomaly detection and localization are imperfect, rather than assuming oracle knowledge of anomalous samples
We evaluate all competing RCA methods in combination with \AUTOENCODER, which is the best-performing sample-level detector on the synthetic data, in order to obtain fixed binary predictions. Specifically, we first use the true sample contamination ratio to determine the corresponding score quantile of \AUTOENCODER, yielding fixed anomaly predictions at the sample level. Next, for the samples predicted as anomalous, we apply the same procedure at the feature level using the RCA scores: we compute the ratio of root causes among truly anomalous samples and use it to select the corresponding quantile of the RCA scores. This process results in a binary prediction matrix indicating predicted root causes for each detected sample.

For \ourmethod, this post-processing step is unnecessary, as the method directly outputs feature assignments through the learned model. We report Precision, Recall, and F1 score, with results shown in Fig.~\ref{fig:realistic}. On synthetic data, \ourmethodstar and \ourmethod outperform all competing approaches by a large margin, achieving F1 scores of up to 0.8 for shifts of at least four standard deviations. This is particularly encouraging because, unlike competing methods, \ourmethod does not rely on thresholding based on the true contamination ratio, which is typically unknown in practice. On the Causal Chamber dataset, \ourmethodstar and \ourmethod likewise attain the best overall performance.

\begin{figure}
  
\begin{subfigure}[t]{0.33\textwidth}
    \centering
    \input{figs/rca_F1.tex}

  \end{subfigure}%
  \begin{subfigure}[t]{0.33\textwidth}
    \centering
    \input{figs/rca_cc_F1.tex}

  \end{subfigure}%
  \begin{subfigure}[t]{0.33\textwidth}
    \centering
        \input{figs/realistic_legend.tex}
    \end{subfigure}
    \caption{F1 scores of fixed assignments ($\uparrow$ better). Left: Synthetic data, Right: Causal Chamber dataset.}
    \label{fig:realistic}
\end{figure}
\paragraph{RCA for individual outlier types.}

Here we generate data with only one outlier type at a time to evaluate how well the methods can localize either mechanistic or measurement outliers. We use the same default settings as previously ($N=2000,D=15,p_e=0.3$), and either only add measurement or mechanistic outliers. The results are presented in Fig.~\ref{fig:only_mech_meas}.

\begin{figure}[t]
  \begin{subfigure}[t]{0.33\textwidth}
    \centering
    \input{figs/meas_top_k_recall.tex}
    \caption{Measurement outliers}
  \end{subfigure}%
  \begin{subfigure}[t]{0.33\textwidth}
    \centering
    \input{figs/mech_top_k_recall.tex}
    \caption{Mechanistic outliers}

  \end{subfigure}%
  \begin{subfigure}[t]{0.33\textwidth}
    \centering
        \input{figs/individual_legend.tex}
    \end{subfigure}
    \caption{Root cause localization for the individual outlier types.}
    \label{fig:only_mech_meas}
\end{figure}

\subsubsection{Sachs dataset}

The full results of the Sachs experiment are reported in Table~\ref{tab:sachs}. Clearly, the dataset is quite challenging, as all methods achieve much lower RCA performance compared to the synthetic data or the Causal Chamber dataset. Somewhat surprisingly, \ourmethod outperforms \ourmethodstar on most root-cause localization metrics, while \MARGINAL and \ourmethod achieve the best overall classification performance. Notably, for the root causes that can be detected, the classification accuracy of 84\% is even better than for the synthetic data. We attribute the fact that \ourmethod outperforms \ourmethodstar to the factor that the data may not be entirely faithful to the assumed ground-truth DAG. Since it is known that the system contains feedback loops, the agreed-upon DAG is only an approximation. The fact that \MARGINALSTAR performs significantly worse than \MARGINAL furthor supports

\begin{table}
  \caption{Root cause localization and classification on the Sachs dataset. The highest performance is bolded.}
  \resizebox{\textwidth}{!}{%
    \input{tables/sachs_table.tex}
  }

  \label{tab:sachs}
\end{table}

\subsubsection{Classification experiments}
In this section we provide additional experiments related to the classification of outlier types.
\paragraph{\ourmethod and \MARGINAL variants}
In Fig.~\ref{fig:classification_variants}, we compare the classification performance of \ourmethod and \MARGINAL with a learned causal DAG. The experimental conditions are identical to the main paper. Surprsingly, the performance on the Sachs dataset is much better than on Causal Chamber despite weaker interventions, possibly because the learned DAG is more accurate. Overall, \ourmethod outperforms \MARGINAL on synthetic data, whereas no method is consistently better on the real data. Since this is very different from the results with the ground-truth DAG, we attribute this to the learned DAG restricting classifications to trivial cases, which \MARGINAL can handle well.

\begin{figure}
  \begin{subfigure}[t]{0.5\textwidth}%
        \centering
    \input{figs/classification_synth_learned.tex}
    \caption{Synthetic Data}
  \end{subfigure}%
  \begin{subfigure}[t]{0.5\textwidth}%
        \centering
      \input{figs/classification_cc_learned.tex}
      \caption{Real data} 
  \end{subfigure}%
  \caption{Classification performance of \ourmethod and \MARGINAL with learned causal DAG.}
  \label{fig:classification_variants}
\end{figure}
\paragraph{Sample classification}
\label{apx:classification}

Instead of inspecting classification performance at the level of detected root causes, as done in the main paper, we also consider classification at the sample level. In our causal setting, each sample may contain measurement failure(s), mechanistic failure(s), or both. However, the presence of unidentifiable sink nodes complicates a direct sample-level categorization, as failures at such nodes cannot be causally distinguished without additional assumptions. Representing all possible combinations would therefore require introducing several additional and potentially unintuitive classes.

\cite{larosa:2022:separating} propose a classification method for a related but simpler problem. Their approach distinguishes between \emph{sensor anomalies}, defined as failures affecting exactly one observed feature, and \emph{process anomalies}, defined as failures affecting multiple features, in addition to the \emph{normal} class. Under this definition, mechanistic failures at sink nodes are interpreted as sensor anomalies, and multiple independent mechanistic failures are classified as process anomalies. Consequently, their method does not aim to identify causal failure mechanisms in the data-generating process, but rather distinguishes whether none, exactly one, or more than one marginal distribution of the observed variables has changed.

Nevertheless, we compare \ourmethod{} with the approach of LaRosa et al.\ under their classification scheme to assess whether incorporating causal structure also improves performance on this simpler sample-level task. Sample-level predictions for \ourmethod{} are obtained by aggregating the inferred root-cause assignments. Specifically, whenever \ourmethod{} assigns a mechanistic root cause to a node with successors, or assigns multiple root causes, we classify the sample as a \emph{process anomaly}. When exactly one root cause is assigned and it corresponds either to a measurement failure or to a failure at a sink node, we classify the sample as a \emph{sensor anomaly}. Samples without any assigned root cause are classified as \emph{normal}.

The method of LaRosa et al.\ is originally designed for time-series data and is therefore adapted to the tabular setting. First, we separate the data into clean and anomalous samples using an autoencoder. Next, we generate 100 random groups of features of size three. These groups serve as inputs to gradient boosting prediction models, one for each feature, resulting in a total of $100 \times D$ predictors (1{,}500 predictors for $D = 15$). Classification is performed by thresholding the residuals of these predictors into normal and abnormal; we evaluate multiple significance levels $\alpha \in \{0.001, 0.01, 0.05, 0.1\}$.

Based on these binary residual predictions, LaRosa et al.\ define the following classification rule: if no residual is classified as abnormal, the sample is labeled \emph{normal}. If there exists a feature $i$ for which all corresponding groups predict abnormal residuals, and no other feature $j$ (predicted from any group not containing $i$) is classified as abnormal, the sample is labeled \emph{sensor anomaly}. In all remaining cases, the sample is classified as \emph{process anomaly}.

We present the results in Figure~\ref{fig:larosa}.

\begin{figure}
  \begin{subfigure}[t]{0.33\textwidth}%
        \centering
    \input{figs/sample_class_accuracy.tex}
    \caption{Accuracy ($\uparrow$ better).}
  \end{subfigure}%
  \begin{subfigure}[t]{0.33\textwidth}%
        \centering
      \input{figs/sample_class_macro_f1.tex}
      \caption{Macro F1 score ($\uparrow$ better).}

  \end{subfigure}%
  \begin{subfigure}[t]{0.33\textwidth}%
        \centering
      \input{figs/sample_class_clean_f1.tex}
    \caption{\emph{Normal} F1 score ($\uparrow$ better).}
  \end{subfigure}

  \begin{subfigure}[t]{0.33\textwidth}%
    \centering
      \input{figs/sample_class_sensor_f1.tex}
    \caption{\emph{Sensor anomaly} F1 score ($\uparrow$ better).}

  \end{subfigure}%
  \begin{subfigure}[t]{0.33\textwidth}%
    \centering
      \input{figs/sample_class_process_f1.tex}
    \caption{\emph{Process anomaly} F1 score ($\uparrow$ better).}

  \end{subfigure}%
    \begin{subfigure}[t]{0.33\textwidth}%
      \centering
      \input{figs/sample_classification_legend.tex}
  \end{subfigure}
  \caption{Classification into the classes \emph{Normal}, \emph{Sensor Anomaly}, \emph{Process Anomaly}. Accuracy, macro F1, and per-class F1 scores are reported due to class imbalance.}
  \label{fig:larosa}
\end{figure}

\paragraph{Influence of degree}
To analyze classification behavior under different structural conditions, we begin with a three-variable chain
$X_1 \to X_2 \to X_3$ and introduce both a measurement and a mechanistic outlier at $X_2$. We then progressively add
parent and child nodes to $X_2$ and evaluate how classification accuracy changes. For each graph structure, we
generate random nonlinear functions and noise following the setup of the previous synthetic experiments.
Classification accuracy is averaged over 50 runs, counting only those runs in which both root causes are correctly
identified. Fig.~\ref{fig:class_degree} summarizes the results.

Surprisingly, \MARGINALSTAR outperforms \ourmethodstar when $X_2$ has two parents and two children; however, its
performance degrades once an additional parent and child are added. This is probably because the optimal significance threshold to determine marginal outlierness depends on the degree. This suggests that if the outliers are strong enough that propagation can easily be detected, we need the right threshold, which is impossible to choose in the unsupervised setting. In contrast, the performance of \ourmethodstar
improves consistently as the degree of the outlier node increases, without requiring the selection of a
significance threshold.

\begin{figure}

  \begin{subfigure}[t]{0.5\textwidth}
     \centering
  \input{figs/classification_toy.tex}
  \end{subfigure}

  \caption{Influence of node degree on classification accuracy.}
  \label{fig:class_degree}
\end{figure}

\paragraph{Confidence}
Last but not least, we look at the confidence scores produced by \ourmethod. Specifically, we evaluate how well the confidence scores rank correct classifications, by evaluating the AUC and calculating the Spearman correlation. We present the results in Fig.~\ref{fig:confidence}.

\begin{figure}
  \begin{subfigure}[t]{0.25\textwidth}%
    \input{figs/class_auc.tex}
    \caption{AUC ($\uparrow$ better).}
  \end{subfigure}
  \begin{subfigure}[t]{0.25\textwidth}%
      \input{figs/class_spear_rho.tex}
      \caption{Spearman's $\rho$ ($\uparrow$ better)}

  \end{subfigure}
  \begin{subfigure}[t]{0.25\textwidth}%
      \input{figs/class_spear_p.tex}
    \caption{Significance of Spearman correlation ($\downarrow$ better).}

    \end{subfigure}%
    \begin{subfigure}[t]{0.25\textwidth}%
      \centering
      \input{confidence_legend.tex}
    \end{subfigure}
  \caption{We evaluate the confidence scores of \ourmethod{} by evaluating the ranking of true positive predictions, in terms of AUC and Spearman correlation. Overall the confidence scores produce a clear but imperfect signal, which is likely due to their heuristic nature. The confidence scores are weakly positively correlated with true positive predictions, and clearly significant at $\alpha=0.05$.}
  \label{fig:confidence}
\end{figure}

\subsubsection{Causal Discovery}
\label{apx:causal_discovery}
In this experiment, we evaluate the performance of \ourmethod on DAGs learned using different causal discovery algorithms, namely \TOPIC \citep{xu:2025:topic}, \CAM \citep{buhlmann:2014:cam}, \SCORE \citep{rolland:2022:score}, \NOGAM \citep{montagna:2023:causal}, and \NOTEARSMLP \citep{zheng:2020:learning}. We run the algorithms under three different settings on our synthetic data. First, they have access to the clean data. Second, we apply filtering to the corrupted data using either an autoencoder or robust z-scores based on the median and MAD. In both cases, we remove values outside the 0.95 quantile and then learn the DAG from the filtered data. Overall the \AUTOENCODER prefiltering performs better than z-scores, and \NOGAM achieves the best downstream performance. We report all methods with \AUTOENCODER prefiltering, and the different prefiltering strategies with \NOGAM in Fig.~\ref{fig:causal_discovery}.

\begin{figure}
  \begin{subfigure}[t]{\textwidth}
    \centering
        \input{figs/causal_discovery_legend.tex}
    \end{subfigure}
  \begin{subfigure}[t]{0.25\textwidth}
  \input{figs/causal_cd_top_k_recall.tex}
  \end{subfigure}%
   \begin{subfigure}[t]{0.25\textwidth}
  \input{figs/causal_cd_ap.tex}
  \end{subfigure}%
  \begin{subfigure}[t]{0.25\textwidth}
  \input{figs/causal_cd_accuracy.tex}
  \end{subfigure}%
   \begin{subfigure}[t]{0.25\textwidth}
  \input{figs/causal_cd_shd.tex}
  \end{subfigure}
  \begin{subfigure}[t]{\textwidth}
    \centering
        \begin{tikzpicture}[baseline]

            \begin{axis}[
                hide axis,
                width=\linewidth,height=4cm,
                tick label style={font=\small},
                legend style={ at={(0.3,0.0)}, anchor=center, font=\small},
                legend cell align=left,legend columns=5,legend image post style={xscale=0.5} ]
            \addplot[draw=none, forget plot] coordinates {(0) (0,0)};
            \addlegendimage{line width = 2pt, color=NOGAMColor!70!black}
            \addlegendentry{\NOGAM + \AUTOENCODER}
             \addlegendimage{line width = 2pt, color=TOPICColor!70!black}
            \addlegendentry{\NOGAM + Oracle}
             \addlegendimage{line width = 2pt, color=NOTEARSColor!70!black}
            \addlegendentry{\NOGAM + z scores}        
            \end{axis}
        \end{tikzpicture}
    \end{subfigure}
  \begin{subfigure}[t]{0.25\textwidth}
  \input{figs/causal_cd_top_k_recall_nogam.tex}
  \end{subfigure}%
   \begin{subfigure}[t]{0.25\textwidth}
  \input{figs/causal_cd_ap_nogam.tex}
  \end{subfigure}%
  \begin{subfigure}[t]{0.25\textwidth}
  \input{figs/causal_cd_accuracy_nogam.tex}
  \end{subfigure}%
   \begin{subfigure}[t]{0.25\textwidth}
  \input{figs/causal_cd_shd_nogam.tex}
  \end{subfigure}

  \caption{Performance of \ourmethod when combined with different causal discovery algorithms. Top-$k$ Recall, and Avg. Prec. assess the downstream root cause localization, Accuracy assesses outlier type classification and Structural Hamming Distance (SHD) similarity to the true DAG. Top: Different algorithms with \AUTOENCODER prefiltering. Bottom: \NOGAM with different prefiltering strategies.}
  \label{fig:causal_discovery}
\end{figure}

\subsubsection{Non-causal RCA}
\label{apx:non_causal_rca}

For non-causal RCA methods without a notion of causal structure, we evaluate \AESHAP, \AEREC, \MOE \citep{angiulli:2024:m2oe}, and \DICE \citep{mothilal:2020:dice}. We present the results in Fig.~\ref{fig:rca_non_causal}. \AESHAP performs best with \MOE coming in second. \DICE clearly shows the worst performance. We want to emphasize that none of these methods are designed for the task that we evaluate them on, and we include them here for completeness. Our main experiments indicate that \AESHAP still performs surprisingly well, even without access to the causal structure or causal guarantees.

\begin{figure}
    
    \begin{subfigure}[t]{\textwidth}
        \centering
        \begin{tikzpicture}[baseline]

            \begin{axis}[
                hide axis,
            width=\linewidth,height=4cm,xtick pos=left,ymax=1.05, ytick pos=left,tick align=outside,minor x tick num=0,minor y tick num=0,
            , width=\textwidth,ymin=0,
            axis x line=bottom,
                axis y line=left,   
                xtick={3,4,5},
                xmax = 5.2,
                xmin = 2.8,
                ymin=0.55,
                ylabel style={font=\small}, 
                ylabel= {Top-$k$ Recall},
                xlabel style={font=\small},
                xlabel={Outlier strength},
            yticklabel style={/pgf/number format/fixed},legend style={
                draw=black,      
                fill=white,        
                at={(0.98,0.04)},  
                anchor=south east,
                font=\small,
                cells={align=center}
            }, ymajorgrids=true,
                grid style={dotted,gray},
                tick label style={font=\small},
            legend style={ at={(0.0,0.0)}, anchor=center, font=\small},
            legend cell align=left,legend columns=2,legend image post style={xscale=0.25} ]
            \addplot[draw=none, forget plot] coordinates {(0) (0,0)};
  
            \addlegendimage{line width=2pt, color=AEColor!70!black}
            \addlegendentry{\AUTOENCODER + \SHAP}
            \addlegendimage{line width=2pt, color=STColor!70!black}
            \addlegendentry{\AUTOENCODER + Rec.}
              \addlegendimage{line width=2pt, color=MOEColor!70!black}
            \addlegendentry{\MOE}
              \addlegendimage{line width=2pt, color=DICEColor!70!black}
            \addlegendentry{\DICE}
          
            \end{axis}
        \end{tikzpicture}
    \end{subfigure}

      \begin{subfigure}[t]{0.5\textwidth}

        \input{figs/top_k_recall_non_causal}

    \end{subfigure}%
      \begin{subfigure}[t]{0.5\textwidth}

        \input{figs/rca_non_causal_zoom}

    \end{subfigure}%

    \caption{Root cause localization ($\uparrow$ better) on synthetic data.}
    \label{fig:rca_non_causal}
\end{figure}%

\subsection{Broader Impact}

Distinguishing measurement errors from genuine mechanism shifts can have positive impact in domains where anomaly responses differ substantially, such as scientific measurement, engineering systems, and operations monitoring. Correctly identifying measurement anomalies may support data correction or sensor maintenance, while mechanistic anomalies may indicate genuine changes in the underlying system.

At the same time, incorrect classifications could lead to inappropriate interventions, for example correcting away a genuine process change or escalating a mere measurement error. This risk is particularly important in high-stakes applications. The method should therefore be used as decision support rather than as an automated decision-making system, and its conclusions should be interpreted in light of the assumed or estimated causal graph and the modeling assumptions discussed in the paper.

%% file: tables/hyperpar_detection.tex
\begin{tabular}{lcccc}
\toprule
Method & Batch size ($\mathit{bs}$) & Epochs ($e$) & Estimators ($\mathit{est}$) & Neighbors ($\mathit{neigh}$) \\
\midrule
\AUTOENCODER & 32 & 10  & –   & – \\
\NORMALFLOW  & 32 & 10  & –   & – \\
\DEEPSVDD    & 128 & 100 & –   & – \\
\IFOREST     & –   & –   & 300 & – \\
\LOF         & –   & –   & –   & 10 \\
\bottomrule
\end{tabular}

%% file: tables/hyperpar_rca.tex
\begin{tabular}{lcccc}
\toprule
Method & Batch size ($\mathit{bs}$) & Epochs ($e$) & KNN neighbors ($K$) & Detector \\
\midrule
\MOE      & 32  & 500 & 100 &-\\
\SCOREORDER & - & - & -&\textsc{MedianCDFQuantile}\\
\SMOOTHTRAVERSAL & - & - & -&\textsc{MedianCDFQuantile}\\
\bottomrule
\end{tabular}

%% file: tables/sachs_table.tex
\begin{tabular}{l|lllllllll}
\toprule
\diagbox{Metric}{Method} & \AEREC & \AESHAP & \MARGINALSTAR & \MARGINAL & \ourmethod & \ourmethodstar & \SCM &\SCOREORDER & \SMOOTHTRAVERSAL \\

\midrule
Top-$3$ Recall & 0.412 $\pm$ 0.050 & 0.435 $\pm$ 0.041 &  &  & \textbf{0.438 $\pm$ 0.057} & 0.402 $\pm$ 0.042 & 0.412 $\pm$ 0.049 & 0.414 $\pm$ 0.043 & 0.347 $\pm$ 0.036 \\
Top-$5$ Recall & 0.548 $\pm$ 0.052 & 0.573 $\pm$ 0.039 &  &  & \textbf{0.620 $\pm$ 0.075} & 0.587 $\pm$ 0.068 & 0.544 $\pm$ 0.054 & 0.550 $\pm$ 0.041 & 0.564 $\pm$ 0.048 \\
Top-$k$ Recall & 0.269 $\pm$ 0.026 & \textbf{0.279 $\pm$ 0.046} &  &  & 0.236 $\pm$ 0.036 & 0.192 $\pm$ 0.036 & 0.236 $\pm$ 0.036 & 0.240 $\pm$ 0.032 & 0.187 $\pm$ 0.032 \\
Avg. Prec. & 0.416 $\pm$ 0.027 & \textbf{0.430 $\pm$ 0.034} &  &  & 0.410 $\pm$ 0.034 & 0.375 $\pm$ 0.032 & 0.399 $\pm$ 0.030 & 0.376 $\pm$ 0.027 & 0.337 $\pm$ 0.024 \\
AUC & 0.586 $\pm$ 0.039 & 0.606 $\pm$ 0.032 &  &  & \textbf{0.623 $\pm$ 0.042} & 0.604 $\pm$ 0.043 & 0.591 $\pm$ 0.028 & 0.584 $\pm$ 0.033 & 0.558 $\pm$ 0.024 \\
Accuracy &  &  & 0.663 $\pm$ 0.077 & 0.847 $\pm$ 0.075 & \textbf{0.898 $\pm$ 0.097} & 0.860 $\pm$ 0.090 &  &  &  \\
\bottomrule
\end{tabular}